\documentclass{article}

\PassOptionsToPackage{numbers, compress}{natbib}

\usepackage[final]{neurips_2021}




\usepackage[utf8]{inputenc} 
\usepackage[T1]{fontenc}    
\usepackage[colorlinks,
            linkcolor=red,
            anchorcolor=blue,
            citecolor=blue
            ]{hyperref}
\usepackage{url}            
\usepackage{booktabs}       
\usepackage{amsfonts}       
\usepackage{nicefrac}       
\usepackage{microtype}      
\usepackage{xcolor}         
\usepackage{smile}
\newcommand{\state}{s}
\usepackage{enumitem}
\usepackage{graphicx}
\usepackage{subfigure}
\usepackage{pgfplots}
\usetikzlibrary{arrows,shapes,snakes,automata,backgrounds,petri}
\usepackage{booktabs} 

\allowdisplaybreaks
\newcommand{\la}{\langle}
\newcommand{\ra}{\rangle}
\newcommand{\qvalue}{Q}
\newcommand{\vvalue}{V}

\newcommand{\reward}{r}

\usepackage{colortbl}
\definecolor{LightCyan}{rgb}{0.8, 0.9, 1}
\def \CCC {}
\def \algname {\text{\text{UCBVI-}$\gamma$}}
\usepackage{enumitem}
\usepackage[textsize=tiny]{todonotes}
\setlength{\marginparwidth}{0.8in}

\usepackage[compact]{titlesec}
\usepackage{sidecap}
\titlespacing*{\section}{0pt}{*0.1}{*0.1}
\titlespacing*{\subsection}{0pt}{*0.1}{*0.1}
\titlespacing*{\subsubsection}{0pt}{*0.1}{*0.1}

\title{\huge Nearly Minimax Optimal Reinforcement Learning for Discounted MDPs}

%
%
\author{%
  Jiafan He \\
  Department of Computer Science\\
 University of California, Los Angeles\\
  CA 90095, USA \\
  \texttt{jiafanhe19@ucla.edu} \\
   \And
    Dongruo Zhou \\
  Department of Computer Science\\
 University of California, Los Angeles\\
  CA 90095, USA \\
  \texttt{drzhou@cs.ucla.edu} \\
     \And
    	Quanquan Gu \\
  Department of Computer Science\\
 University of California, Los Angeles\\
  CA 90095, USA \\
  \texttt{ qgu@cs.ucla.edu} \\
}

\date{}


\begin{document}
\maketitle

%
\begin{abstract}
 We study the reinforcement learning problem for discounted Markov Decision Processes (MDPs) under the tabular setting. We propose a model-based algorithm named UCBVI-$\gamma$, which is based on the \emph{optimism in the face of uncertainty principle} and the Bernstein-type bonus. We show that UCBVI-$\gamma$ achieves an $\tilde{O}\big({\sqrt{SAT}}/{(1-\gamma)^{1.5}}\big)$ regret, where $S$ is the number of states, $A$ is the number of actions, $\gamma$ is the discount factor and $T$ is the number of steps. In addition,  we construct a class of hard MDPs and show that for any algorithm, the expected regret is at least $\tilde{\Omega}\big({\sqrt{SAT}}/{(1-\gamma)^{1.5}}\big)$. Our upper bound matches the minimax lower bound up to logarithmic factors, which suggests that UCBVI-$\gamma$ is nearly minimax optimal for discounted MDPs.
\end{abstract}

\section{Introduction}

The goal of reinforcement learning (RL) is designing algorithms to learn the optimal policy through interactions with the unknown dynamic environment. Markov decision process (MDPs) plays a central role in reinforcement learning due to their ability to describe the time-independent state transition property. More specifically, the discounted MDP is one of the standard MDPs in reinforcement learning to describe sequential tasks without interruption or restart. For discounted MDPs, with a \emph{generative model} \citep{kakade2003sample}, several algorithms with near-optimal sample complexity have been proposed. More specifically, \citet{azar2013minimax} proposed an Empirical QVI algorithm which achieves the optimal sample complexity to find the optimal value function. \citet{sidford2018near} proposed a sublinear randomized value iteration algorithm that achieves a near-optimal sample complexity to find the optimal policy, and \citet{sidford2018variance} further improved it to reach the optimal sample complexity. Since generative model is a powerful oracle that allows the algorithm to query the reward function and the next state for any state-action pair $(s, a)$, it is natural to ask whether there exist online RL algorithms (without generative model) that achieve optimality. 

To measure an online RL algorithm, a widely used notion is \emph{regret}, which is defined as the summation of sub-optimality gaps over time steps. The regret is firstly introduced for episodic and infinite-horizon average-reward MDPs and later extended to discounted MDPs by \citep{liu2020regret,yang2020q,zhou2020provably,zhou2020provably}.
\citet{liu2020regret} proposed a double Q-learning algorithm with the UCB exploration (Double Q-learning), which enjoys $\tilde O(\sqrt{SAT}/(1-\gamma)^{2.5})$ regret, where $S$ is the number of states, $A$ is the number of actions, $\gamma$ is the discount factor and $T$ is the number of steps. While Double Q-learning enjoys a standard $\sqrt{T}$-regret, it still does not match the lower bound 
proved in \cite{liu2020regret} in terms of the dependence on $S,A$ and $1/(1-\gamma)$. Recently, \citet{zhou2020nearly} proposed a $\text{UCLK}^+$ algorithm for  discounted MDPs under the linear mixture MDP assumption and achieved $\tilde O\big(d\sqrt{T}/(1-\gamma)^{1.5}\big)$ regret, where $d$ is the dimension of the feature mapping. However, directly applying their algorithm to our setting would yield an $\tilde O\big(S^2A\sqrt{T}/(1-\gamma)^{1.5}\big)$ regret\footnote{Linear mixture MDP assumes that there exists a feature mapping $\bphi(s'|s,a) \in \RR^d$ and a vector $\btheta \in \RR^d$ such that $\PP(s'|s,a) = \la \bphi(s'|s,a), \btheta\ra$. It can be verified that any MDP is automatically a linear mixture MDP with a $S^A$-dimensional feature mapping \citep{ayoub2020model, zhou2020provably}.}, 
which is even worse that of double Q-learning \citep{liu2020regret} in terms of the dependence on $S,A$.

 
In this paper, we aim to close this gap by designing a practical algorithm with a nearly optimal regret. In particular, we propose a model-based algorithm named $\algname$ for discounted MDPs without using the generative model. At the core of our algorithm is to use a ``refined'' Bernstein-type bonus and the \emph{law of total variance} \citep{azar2013minimax, azar2017minimax}, which together can provide tighter upper confidence bound (UCB). 
Our contributions are summarized as follows:
\begin{itemize}[leftmargin = *]
    \item We propose a model-based algorithm $\algname$ to learn the optimal value function under the discounted MDP setting. We show that the regret of $\algname$ in first $T$ steps is upper bounded by $\tilde O({\sqrt{SAT}}/{(1-\gamma)^{1.5}})$. Our regret bound strictly improves the best existing regret $\tilde O({\sqrt{SAT}}/{(1-\gamma)^{2.5}})$\footnote{\label{note1}The regret definition in \cite{liu2020regret} differs from our definition by a factor of $(1-\gamma)^{-1}$. Here we translate their regret from their definition to our definition for a fair comparison. A detailed comparison can be found in Appendix.} in \cite{liu2020regret} by a factor of $(1-\gamma)^{-1}$. 
    \item We also prove a lower bound of the regret by constructing a class of hard-to-learn discounted MDPs, which can be regarded as a \emph{chain} of the hard MDPs considered in \cite{liu2020regret}. We show that for any algorithm, its regret in the first $T$ steps can not be lower than $\tilde \Omega({\sqrt{SAT}}/{(1-\gamma)^{1.5}})$ on the constructed MDP. This lower bound also strictly improves the lower bound  $\Omega(\sqrt{SAT}/(1-\gamma) + \sqrt{AT}/(1-\gamma)^{1.5})$ proved by \cite{liu2020regret}.
    \item The nearly matching upper and the lower bounds together suggest that the proposed $\algname$ algorithm is minimax-optimal up to logarithmic factors. 
\end{itemize}

    
We compare the regret of $\algname$ with previous online algorithms for learning discounted MDPs in Table \ref{table:11}. 

\noindent\textbf{Notation} 
For any positive integer $n$, we denote by $[n]$ the set $\{1,\dots,n\}$. For any two numbers $a$ and $b$, we denote by $a \vee b$ as the shorthand for $\max(a,b)$. For two sequences $\{a_n\}$ and $\{b_n\}$, we write $a_n=O(b_n)$ if there exists an absolute constant $C$ such that $a_n\leq Cb_n$, and we write $a_n=\Omega(b_n)$ if there exists an absolute constant $C$ such that $a_n\geq Cb_n$. We use $\tilde O(\cdot)$ and $\tilde \Omega(\cdot)$ to further hide the logarithmic factors. 

\section{Related Work}\label{section: related}

\newcolumntype{g}{>{\columncolor{LightCyan}}c}
\begin{table*}[ht]
\caption{Comparison of RL algorithms for discounted MDPs in terms of sample complexity and regret. Note that the regret bounds for all the compared algorithms except Double Q-learning \citep{liu2020regret} are derived from their sample complexity results. See Appendix~\ref{app:conversion} for more details.}\label{table:11}
\centering
\begin{tabular}{cggg}
\toprule
\rowcolor{white}
& Algorithm & Sample complexity & Regret
 \\
\midrule
\rowcolor{white}
&Delay-Q-learning  &  & \\
\rowcolor{white}
 &\small{\citep{strehl2006pac}} & \multirow{-2}{*}{$\tilde  O\Big(\frac{SA}{(1-\gamma)^8\epsilon^4}\Big)$}  &  \multirow{-2}{*}{$\tilde  O\Big(\frac{S^{1/5}A^{1/5}T^{4/5}}{(1-\gamma)^{9/5}}\Big)$}\\

\rowcolor{white}
 &Q-learning with UCB &   &  \\
 \rowcolor{white}
 &\small{\citep{dong2019q}} &\multirow{-2}{*}{ $ \tilde O\Big(\frac{SA}{(1-\gamma)^7\epsilon^2}\Big)$} & \multirow{-2}{*}{$\tilde  O\Big(\frac{S^{1/3}A^{1/3}T^{2/3}}{(1-\gamma)^{8/3}}\Big)$} \\

 \rowcolor{white}
  &UCB-multistage &   &  \\
  \rowcolor{white}
 &\small{\citep{zhang2020model}} & \multirow{-2}{*}{ $ \tilde O\Big(\frac{SA}{(1-\gamma)^{5.5}\epsilon^2}\Big)$}& \multirow{-2}{*}{$\tilde  O\Big(\frac{S^{1/3}A^{1/3}T^{2/3}}{(1-\gamma)^{13/6}}\Big)$}\\
 \rowcolor{white}
   &{ UCB-multistage-adv } &   &  \\
   \rowcolor{white}
 &\small{\citep{zhang2020model}} &\multirow{-2}{*}{ $\tilde O\Big(\frac{SA}{(1-\gamma)^3\epsilon^2}\Big)$\footnotemark} & \multirow{-2}{*}{$\tilde  O\Big(\frac{S^{1/3}A^{1/3}T^{2/3}}{(1-\gamma)^{4/3}}\Big)$}\\
 
 \rowcolor{white}
 &Double Q-learning &   &  \\
 \rowcolor{white}
 \multirow{-9}{*}{Model-free}&\citep{liu2020regret} &\multirow{-2}{*}{N/A} &\multirow{-2}{*}{$\tilde O\Big(\frac{\sqrt{SAT}}{(1-\gamma)^{2.5}}\Big)$}\\
 
 \midrule
 \rowcolor{white}
   &R-max  &   &  \\
\rowcolor{white}
 &\small{\citep{brafman2002r}} & \multirow{-2}{*}{$\tilde O\Big(\frac{S^2A}{(1-\gamma)^6\epsilon^3}\Big)$} & \multirow{-2}{*}{$\tilde  O\Big(\frac{S^{1/2}A^{1/4}T^{3/4}}{(1-\gamma)^{7/4}}\Big)$}\\
 
 \rowcolor{white}
 &MoRmax  &   &  \\
 \rowcolor{white}
 &\small{\citep{szita2010model}} & \multirow{-2}{*}{$\tilde O\Big(\frac{SA}{(1-\gamma)^6\epsilon^2}\Big)$}& \multirow{-2}{*}{$\tilde  O\Big(\frac{S^{1/3}A^{1/3}T^{2/3}}{(1-\gamma)^{7/3}}\Big)$}\\

\rowcolor{white}
 & UCRL  &   &  \\
 \rowcolor{white}
&\small{\citep{lattimore2012pac}} & \multirow{-2}{*}{$\tilde O\Big(\frac{S^2A}{(1-\gamma)^3\epsilon^2}\Big)$} & \multirow{-2}{*}{$\tilde  O\Big(\frac{S^{2/3}A^{1/3}T^{2/3}}{(1-\gamma)^{4/3}}\Big)$}\\

 &$\algname$  &   &  \\
 
\multirow{-8}{*}{Model-based} &(\textbf{Our work})  &\multirow{-2}{*}{ N/A} &\multirow{-2}{*}{$\tilde O\Big(\frac{\sqrt{SAT}}{(1-\gamma)^{1.5}}\Big)$}\\
 \midrule
 \multirow{3}{*}{Lower bound} &  &   &  \\
 & &\multirow{-2}{*}{$\tilde \Omega\Big(\frac{SA}{(1-\gamma)^3\epsilon^2}\Big)$} & \multirow{-2}{*}{$\tilde \Omega\Big(\frac{\sqrt{SAT}}{(1-\gamma)^{1.5}}\Big)$}\\
&  \multirow{-3}{*}{ N/A}&  \small{\citep{lattimore2012pac}} &(\textbf{Our work})\\
\bottomrule
\end{tabular}

\footnotesize{\noindent 2. It holds when $\epsilon \leq 1/\text{poly}(S,A,1/(1-\gamma))$.}
\end{table*}

\noindent\textbf{Model-free Algorithms for Discounted MDPs.} A large amount of reinforcement learning algorithms like Q-learning can be regarded as model-free algorithms. These algorithms directly learn the action-value function by updating the values of each state-action pair. \citet{kearns1999finite} firstly proposed a phased Q-Learning which learns an $\epsilon$-optimal policy with $\tilde O({SA}/{((1-\gamma)^7\epsilon^2}))$ sample complexity for $\epsilon\leq 1/(1-\gamma)$. Later on, \citet{strehl2006pac} proposed a delay-Q-learning algorithm, which achieves $\tilde  O({SA}/{((1-\gamma)^8\epsilon^4}))$ sample complexity of exploration. \citet{wang2017randomized} proposed 
a randomized primal-dual method algorithm, which improves the sample complexity to $\tilde O({SA}/{((1-\gamma)^4\epsilon^2}))$  for $\epsilon\leq 1/(1-\gamma)$ under the ergodicity assumption. Later, \citet{sidford2018variance} proposed a sublinear randomized value iteration algorithm and achieved $\tilde O({SA}/{((1-\gamma)^4\epsilon^2}))$ sample complexity for $\epsilon\leq 1$. \citet{sidford2018near} further improved the empirical QVI algorithm and proposed a variance-reduced QVI algorithm, which improves the sample complexity to $\tilde O({SA}/{((1-\gamma)^3\epsilon^2}))$ for $\epsilon\leq 1$. \citet{wainwright2019variance} proposed a variance-reduced Q-learning algorithm, which is an extension of
the Q-learning algorithm and achieves $\tilde O({SA}/{((1-\gamma)^3\epsilon^2}))$ sample complexity. 
In addition, \citet{dong2019q} proposed an infinite Q-learning with UCB and improved the sample complexity of exploration to  $\tilde O({SA}/{((1-\gamma)^7\epsilon^2}))$. \citet{zhang2020model} proposed a UCB-multistage algorithm which attains the  $\tilde O({SA}/{((1-\gamma)^{5.5}\epsilon^2}))$ sample complexity of exploration, and proposed a UCB-multistage-adv algorithm which attains a better sample complexity $\tilde O({SA}/{((1-\gamma)^{3}\epsilon^2}))$ in the high accuracy regime. 
Recently, \citet{liu2020regret} focused on regret minimization for the infinite-horizon discounted MDP and showed the connection between regret and sample complexity of exploration. \citet{liu2020regret} proposed a Double Q-Learning algorithm, which achieves  $\tilde O({\sqrt{SAT}}/{(1-\gamma)^{2.5}})$ regret within $T$ steps. Furthermore, \citet{liu2020regret} constructed a series of hard MDPs and showed that the expected regret for any algorithm is lower bounder by $ \tilde{\Omega}\big(\sqrt{SAT}/(1-\gamma)+{\sqrt{AT}}/{(1-\gamma)^{1.5}}\big)$. There still exists a  ${1}/{(1-\gamma)}$-gap between the upper and lower regret bounds. In contrast to the aforementioned model-free algorithms, our proposed algorithm is model-based.

\noindent\textbf{Model-based Algorithms for Discounted MDP.} Our $\algname$ falls into the category of model-based reinforcement learning algorithms. Model-based algorithms maintain a model of the environment and update it based on the observed data. They will form the policy based on the learnt model. More specifically, to learn the $\epsilon$-optimal value function, \citet{azar2013minimax} proposed an empirical QVI algorithm which achieves $\tilde O({SA}/{((1-\gamma)^3\epsilon^2}))$ sample complexity. \citet{azar2013minimax} proposed an empirical QVI algorithm which improves the sample complexity to $\tilde O({SA}/{((1-\gamma)^3\epsilon^2}))$ for $\epsilon\leq 1/\sqrt{(1-\gamma)S}$. \citet{szita2010model} proposed an MoRmax algorithm, which achieves $\tilde O({SA}/{((1-\gamma)^6\epsilon^2}))$ sample complexity.  Later, \citet{lattimore2012pac} proposed a UCRL algorithm, which achieves $\tilde O({S^2A}/{((1-\gamma)^3\epsilon^2}))$ sample complexity in general and $\tilde O({SA}/{((1-\gamma)^3\epsilon^2}))$ sample complexity with a strong assumption on the state transition. Recently, \citet{agarwal2019model} proposed a refined analysis for the empirical QVI algorithm which achieves $\tilde O({SA}/{((1-\gamma)^3\epsilon^2}))$ sample complexity when $\epsilon\leq 1/\sqrt{1-\gamma}$.

\noindent\textbf{Upper and Lower Bounds for Episodic MDPs.}
There is a line of work which aims at proving sample complexity or regret for episodic MDPs (MDPs which consist of restarting episodes) \citep{dann2015sample, osband2016lower,azar2017minimax, osband2017posterior,jin2018q, dann2019policy, simchowitz2019non, russo2019worst, zanette2019tighter,zhang2020almost,neu2020unifying, pacchiano2020optimism}. 
Compared with the episodic MDP, discounted MDPs involve only one infinite-horizon sample trajectory, suggesting that any two states or actions on the trajectory are dependent. Such a dependence makes the learning of discounted MDPs more challenging.

\section{Preliminaries}\label{section: pre}

We consider infinite-horizon discounted Markov Decision Processes (MDP) which are defined by a tuple $(\cS,\cA,\gamma,\reward,\PP)$. Here $\cS$ is the state space with $|\cS| = S$, $\cA$ is the action space with $|\cA| = A$, $\gamma \in (0,1)$ is the discount factor, $\reward: \cS \times \cA \rightarrow [0,1]$ is the reward function, $\PP(s'|s,a)$ is the transition probability function, which denotes the probability that state $s$ transfers to state $s'$ with action $a$. For simplicity, we assume the reward function is \emph{deterministic and known}.  
A \emph{non-stationary policies} $\pi$ is a collection of function $\{\pi_t\}_{t=1}^{\infty}$, where each function $\pi_t:\{\cS\times \cA\}^{t-1}\times \cS \rightarrow \cA$ maps history $\{s_1,a_1,...,s_{t-1},a_{t-1},s_t=s\}$ to an action. For any non-stationary policy $\pi$, we denote $\pi_{t}(s)=\pi_t(s;s_1,a_1,...,s_{t-1},a_{t-1})$ for simplicity. We define the action-value function and value function at step $t$ as follows: 
\begin{align}
    &\qvalue^{\pi}_t(s,a) =\EE\bigg[\sum_{i=0}^\infty \gamma^{i}\reward(s_{t+i},a_{t+i})\bigg|s_1,...,s_t=s,a_t=a\bigg],\notag\\
    & \vvalue^{\pi}_t(s) = \EE\bigg[\sum_{i=0}^\infty \gamma^{i}\reward(s_{t+i}, a_{t+i})\bigg|s_1,...,s_t=s\bigg],\notag
\end{align}
where $a_{t+i}=\pi_{t+i}(s_{t+i})$, and $s_{t+i+1}\sim  \PP\big(\cdot|s_{t+i}, \pi_{t+i}(s_{t+i})\big)$.
In addition, we denote the optimal action-value function and the optimal value function as $\qvalue^*(s, a) = \sup_{\pi}\qvalue^{\pi}_1(s,a)$ and $\vvalue^*(s) = \sup_{\pi}\vvalue^{\pi}_1(s)$ respectively. Note that the optimal action-value function and the optimal value function are independent of the step $t$. For simplicity, for any function $V:\cS\rightarrow R$, we denote $[\PP V](s,a)=\EE_{s'\sim \PP(\cdot |s,a)}V(s')$. According to the definition of the value function, we have the following non-stationary Bellman equation and Bellman optimality equation for non-stationary policy $\pi$ and optimal policy ${\pi^*}$: 
\begin{align}
    \qvalue^{\pi}_t(s,a) &= \reward(s,a) + \gamma [\PP \vvalue^{\pi}_{t+1}](s,a),\ \qvalue^*(s,a)
    = \reward(s,a) + \gamma [\PP \vvalue^*](s,a).\label{eq:bellman}
\end{align}
\section{Main Results}\label{section: main theorem}
\subsection{Algorithm}
In this subsection, we propose the Upper Confidence Bound Value Iteration-$\gamma$ (UCBVI-$\gamma$) algorithm, which is illustrated in Algorithm \ref{algorithm}. The algorithm framework of $\algname$ follows the UCBVI algorithm proposed in \citet{azar2017minimax}, which can be regarded as the counterpart of $\algname$ in the episodic MDP setting. 


$\algname$ is a model-based algorithm that maintains an empirical measure $\PP_t$ at each step $t$. At the beginning of the $t$-th iteration, $\algname$ takes action $a_t$ based on the greedy policy induced by $Q_t(s_t,a)$ and transits to the next state $s_{t+1}$. After receiving the next state $s_{t+1}$, $\algname$ computes the empirical transition probability function $\PP_t(s'|s,a)$ in~\eqref{eq:empirical_tran}.
Based on empirical transition probability function $\PP_t(s'|s,a)$, $\algname$ updates $Q_{t+1}(s,a)$ by performing one-step value iteration on $\qvalue_t(s,a)$ with an additional upper confidence bound (UCB) term $\text{UCB}_t(s,a)$ defined in \eqref{eq:UCB1}. Here the UCB bonus term is used to measure the uncertainty of the expectation of the value function $V_{t}(s)$. Unlike previous work, which adapts a Hoeffding-type bonus \citep{liu2020regret}, our $\algname$ uses a Bernstein-type bonus which brings a tighter upper bound by accessing the variance of $\vvalue_{t}(s)$, denoted by $\text{Var}_{s' \sim \PP(\cdot|, s,a)}V_t(s')$.
However, since the probability transition $\PP(\cdot|s,a)$ is unknown, it is impossible to calculate the exact variance of $V_t$. Instead, $\algname$ estimates the variance by considering the variance of $V_t$ over the empirical probability transition function $\PP_t(\cdot|s,a)$ defined in \eqref{eq:empirical_tran}. Therefore, the final UCB bonus term in~\eqref{eq:UCB1} can be regarded as a standard Bernstein-type bonus on the empirical measure $\PP_t(\cdot|s,a)$ with an additional error term.


Compared with UCBVI algorithm in \citet{azar2017minimax}, the action-value function $Q_t(s,a)$ in $\algname$ is updated in a forward way from step $1$ to step $T$ with the initial value $\qvalue_1(s,a)=1/(1-\gamma)$ for all $s\in \cS,a\in \cA$, while UCBVI updates its action-value function in a backward way from $\qvalue_{t,H}$ to $\qvalue_{t,1}$ with initial value $\qvalue_{t,H}(s,a)=0$. Compared with UCRL in \citet{lattimore2012pac}, $\algname$ does not need to call an additional extended value iteration sub-procedure \citep{jaksch2010near, strehl2008analysis}, which is not easy to implement even with infinite computation \citep{lattimore2012pac}. 

\paragraph{Computational complexity}
\CCC{In each step $t$, Algorithm \ref{algorithm} needs to first compute the empirical transition $\PP_{t}$ and update the value function $V_{t+1}$ by one-step value iteration, which will cost $O(S^2A)$ time complexity for each update. However, the number of updates can be reduced by using the ``batch'' update scheme adapted in \citep{jaksch2010near,dann2015sample} and in this case Algorithm \ref{algorithm} only needs to update the value function $V_{t+1}$ when the number of visits $N_t(s,a)$ doubles. With this update scheme, the number of updates is upper bounded by $O(SA \log T)$ and the total cost for updating the value function is $O(S^3A^2\log T)$. In addition, the Algorithm \ref{algorithm} still needs to choose the action with respect to the value function $V_{t}$ and it costs $O(AT)$ time complexity. Thus, the total computation complexity of the ``batch'' version of Algorithm \ref{algorithm} is $O(AT+S^3A^2\log T)$.}

\begin{algorithm*}[t]
    \caption{Upper Confidence Value-iteration $\algname$}\label{algorithm}
    \begin{algorithmic}[1]
    \STATE Receive state $s_1$ and set initial value function $\qvalue_1(s,a)\leftarrow 1/(1-\gamma)$, $N_0(s,a) =N_0(s,a, s') = N_0(s) \leftarrow 0 $ for all $s\in \cS,a\in \cA, s'\in \cS$
    \FOR{ step $t=1,\ldots$}
    \STATE Let $\pi_t(\cdot) \leftarrow \argmax_{a\in \cA}\qvalue_t(\cdot,a)$, take action $a_t\leftarrow \pi_t(s_t)$ and receive next state $s_{t+1}\sim \PP(\cdot|s_t,a_t)$\label{alg:line1}
    \STATE Set $N_t(s)\leftarrow N_{t-1}(s)$, $N_t(s,a)\leftarrow N_{t-1}(s,a)$ and $N_t(s,a,s')\leftarrow N_{t-1}(s,a,s')$ for all $s\in \cS, a\in \cA, s' \in \cS$
    \STATE Update $N_t(s_t)\leftarrow N_{t}(s_t)+1, N_t(s_t,a_t)\leftarrow N_{t}(s_t,a_t)+1$ and $N_t(s_t,a_t,s_{t+1})\leftarrow N_t(s_t,a_t,s_{t+1})+1$
    \STATE For all $s\in \cS,a\in\cA$, set
    \begin{align}
        \PP_t(s'|s,a)=\frac{N_t(s,a,s')}{N_t(s,a)\vee 1}.\label{eq:empirical_tran}
    \end{align}
    \STATE Update new value function $\qvalue_{t+1}(s,a)$ and $\vvalue_{t+1}(s)$ by
    \begin{align}
        &\qvalue_{t+1}(s,a)=\min\big\{\qvalue_{t}(s,a),\reward(s,a)+\gamma [\PP_t\vvalue_{t}](s,a)+\CC{\gamma}\text{UCB}_t(s,a)\big\},\notag\\
        &\vvalue_{t+1}(s)=\max_{a\in \cA} \qvalue_{t+1}(s,a).
        \label{eq:update}
    \end{align}
    where 
    \begin{align}
        \text{UCB}_t(s,a)&=\sqrt{\frac{8U\text{Var}_{s'\sim \PP_t(\cdot|s,a)}(V_{t}(s'))}{N_t(s,a)\vee 1}}+\frac{8U/(1-\gamma)}{N_t(s,a)\vee 1}\notag\\
        &\qquad +\sqrt{\frac{8\sum_{s'}\PP_t(s'|s,a)\min\big\{100 B_t(s'),1/{(1-\gamma)^2}\big\}}{N_t(s,a)\vee 1}},\label{eq:UCB1}    \end{align}
    and $B_t(s')=\beta/\big[(1-\gamma)^5\big(N_t(s')\vee 1\big)\big].$
    \ENDFOR
    \end{algorithmic}
\end{algorithm*}

\subsection{Regret Analysis}\label{sec:upper}
In this subsection, we provide the regret bound of $\algname$. 
 We first give the formal definition of the regret for the discounted MDP setting. 
\begin{definition}\label{def:regret}
For a given non-stationary policy $\pi$, we define the regret $\text{Regret}(T)$ as follow:
\begin{align}
    \textrm{Regret}(T) = \sum_{t=1}^{T} \big[\vvalue^*(s_t) - \vvalue^{\pi}_t(s_t)\big].\notag
\end{align}
\end{definition}
The same regret has been used in prior work \citep{yang2020q,zhou2020provably,zhou2020nearly} on discounted MDPs. It is related to the ``sample complexity of exploration'' \citep{kakade2003sample,lattimore2012pac,dong2019q}. For more details about the connection between the regret and the sample complexity, please refer to Appendix \ref{section:discussion}.

\begin{remark}
\CCC{Without the use of generative model \citep{kakade2003sample}, an agent may enter bad states at the first few steps in discounted MDPs and there is no ``restarting'' mechanism as in episodic MDPs that can prevent the agent from being stuck in those bad states. Due to this limitation, both the regret and the sample complexity of exploration guarantees are not sufficient to ensure a good policy being learned. We think this is the fundamental limitation in the online learning of discounted MDPs.}
\end{remark}

With Definition \ref{def:regret}, we introduce our main theorem, which gives an upper bound on the regret for $\algname$.
\begin{theorem}\label{thm:1}
Let $U=\log ({40SAT^3\log^2 T}/{(\delta}(1-\gamma)^2))$. If we set $\beta=S^2A^2U^5$ in $\algname$, then with probability at least $1-\delta$,  the regret of $\algname$ in Algorithm \ref{algorithm} is bounded by
\begin{align}
    \text{Regret}(T)
    \leq \frac{752S^2A^{1.5}U^{3.5}}{(1-\gamma)^{3.5}}+\frac{60U\sqrt{SAT}}{(1-\gamma)^{1.5}}+\frac{4\sqrt{TU}}{(1-\gamma)^2}.\notag
\end{align}
\end{theorem}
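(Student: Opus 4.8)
The plan is to follow the optimism-in-the-face-of-uncertainty template of UCBVI \citep{azar2017minimax}, with three ingredients tailored to the discounted setting: a high-probability optimism guarantee $\qvalue_t\ge\qvalue^*$; a regret decomposition turning $\textrm{Regret}(T)$ into a geometrically reweighted sum of the Bernstein bonuses $\text{UCB}_t$ along the sampled trajectory plus martingale noise; and a variance analysis based on the law of total variance showing that the sum of the variances inside those bonuses is only $\tilde O(T/(1-\gamma))$ rather than the naive $\tilde O(T/(1-\gamma)^2)$ --- this is precisely what produces the $(1-\gamma)^{-1.5}$ rate. I expect the two hard parts to be the optimism induction with the refined (third) bonus term and the variance transfer in the last step.

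\emph{Step 1: concentration and optimism.} I would fix a ``good event'' on which, simultaneously for all $t$ and all $(s,a)$: a Bernstein bound controls $[(\PP_t-\PP)\vvalue^*](s,a)$ in terms of $\text{Var}_{s'\sim\PP(\cdot|s,a)}(\vvalue^*)$ and $N_t(s,a)$; a per-coordinate Bernstein bound controls $\PP_t(\cdot|s,a)-\PP(\cdot|s,a)$; empirical and true variances are comparable; and Freedman bounds hold for the trajectory martingales. A union bound over $(s,a)$, over $t$, and over the reachable value functions produces the factor $U$. On this event I prove by induction on $t$ the pair (i) $\qvalue_t(s,a)\ge\qvalue^*(s,a)$ and (ii) $0\le\vvalue_t(s)-\vvalue^*(s)\le\sqrt{B_t(s)}$ for all $s,a$; the base case holds since $\qvalue_1\equiv 1/(1-\gamma)\ge\qvalue^*$ and $B_1$ is large. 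For the inductive step, by the Bellman optimality equation \eqref{eq:bellman} it suffices that $\gamma\,\text{UCB}_t(s,a)$ dominate $\gamma[(\PP-\PP_t)\vvalue_t](s,a)$, and I split $(\PP-\PP_t)\vvalue_t=(\PP-\PP_t)\vvalue^*+(\PP-\PP_t)(\vvalue_t-\vvalue^*)$: the first two terms of \eqref{eq:UCB1} cover $(\PP-\PP_t)\vvalue^*$ via Bernstein, using (ii) to pass from the true variance of $\vvalue^*$ to the empirical variance of $\vvalue_t$ appearing in the bonus, while the third term, using (ii) in the form $(\vvalue_t(s')-\vvalue^*(s'))^2\le B_t(s')$, covers $\sum_{s'}|\PP(s'|s,a)-\PP_t(s'|s,a)|\,(\vvalue_t(s')-\vvalue^*(s'))$ after Cauchy--Schwarz; the choice $\beta=S^2A^2U^5$ is exactly what makes (ii) propagate. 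Keeping this induction closed is the delicate part.

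\emph{Step 2: regret decomposition.} Since $\pi_t$ is greedy w.r.t.\ $\qvalue_t$ and $\qvalue_t\ge\qvalue^*$, optimism gives $\vvalue_t(s_t)\ge\vvalue^*(s_t)$, hence $\textrm{Regret}(T)\le\sum_{t=1}^T\delta_t$ with $\delta_t:=\vvalue_t(s_t)-\vvalue^\pi_t(s_t)$. From the update rule, $\qvalue_t(s_t,a_t)\le\reward(s_t,a_t)+\gamma[\PP_{t-1}\vvalue_{t-1}](s_t,a_t)+\gamma\,\text{UCB}_{t-1}(s_t,a_t)$; combining with the Bellman equation for $\vvalue^\pi$ and splitting $\PP_{t-1}\vvalue_{t-1}-\PP\vvalue^\pi_{t+1}=(\PP_{t-1}-\PP)\vvalue_{t-1}+\PP(\vvalue_{t-1}-\vvalue_{t+1})+\PP(\vvalue_{t+1}-\vvalue^\pi_{t+1})$ yields a one-step inequality $\delta_t\le\gamma\delta_{t+1}+e_t+\gamma\xi_t$, where $e_t$ collects the bonus $\gamma\,\text{UCB}_{t-1}(s_t,a_t)$, the empirical-transition error $\gamma[(\PP_{t-1}-\PP)\vvalue_{t-1}](s_t,a_t)$ (which is $O(\text{UCB}_{t-1}(s_t,a_t))$ on the good event), and the nonnegative monotone ``overestimation drop'' $\gamma[\PP(\vvalue_{t-1}-\vvalue_{t+1})](s_t,a_t)$, and $\xi_t$ is a martingale difference of range $O(1/(1-\gamma))$. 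Unrolling the $\gamma$-contraction ($\gamma^N\delta_{t+N}\to0$) and exchanging the order of summation gives $\textrm{Regret}(T)\le\frac{1}{1-\gamma}\big[\sum_{m=1}^T(e_m+\gamma\xi_m)+\sum_{m>T}\gamma^{m-T}(e_m+\gamma\xi_m)\big]$, the second sum being lower order. The overestimation drops telescope state-by-state into $O(S/(1-\gamma))$; Freedman gives $\sum_m\xi_m=\tilde O(\sqrt{T}/(1-\gamma))$, which after the $1/(1-\gamma)$ factor is the $\sqrt{TU}/(1-\gamma)^2$ term; so everything reduces to bounding $\frac{1}{1-\gamma}\sum_{m=1}^T\text{UCB}_{m-1}(s_m,a_m)$.

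\emph{Step 3: the bonus sum and closing.} For the dominant first term of \eqref{eq:UCB1}, Cauchy--Schwarz gives $\sum_{m=1}^T\sqrt{8U\,\text{Var}_{\PP_{m-1}}(\vvalue_{m-1})(s_m,a_m)/(N_{m-1}(s_m,a_m)\vee1)}\le\sqrt{8U\sum_m\text{Var}_{\PP_{m-1}}(\vvalue_{m-1})(s_m,a_m)}\cdot\sqrt{\sum_m 1/(N_{m-1}(s_m,a_m)\vee1)}$, where $\sum_m 1/(N_{m-1}(s_m,a_m)\vee1)=O(SA\log T)$ by the visitation pigeonhole. For the variance sum I would, on the good event, pass from the empirical to the true variance, then use $\text{Var}_\PP(\vvalue_{m-1})\le 2\text{Var}_\PP(\vvalue^\pi_{m+1})+2\EE_{\PP(\cdot|s_m,a_m)}[(\vvalue_{m-1}-\vvalue^\pi_{m+1})^2]$; since $0\le\vvalue_{m-1}-\vvalue^\pi_{m+1}\le 1/(1-\gamma)$, the second piece is at most $\frac{1}{1-\gamma}[\PP(\vvalue_{m-1}-\vvalue^\pi_{m+1})](s_m,a_m)$, and $\sum_m[\PP(\vvalue_{m-1}-\vvalue^\pi_{m+1})](s_m,a_m)$ splits (via (ii), the overestimation drop, a martingale, and the identity $\sum_m[\vvalue^*-\vvalue^\pi_{m+1}](s_{m+1})=\textrm{Regret}(T)+O(1/(1-\gamma))$) into $O(\textrm{Regret}(T))$ plus lower-order terms --- this is the self-bounding step. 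For the first piece I invoke the discounted law of total variance along the trajectory of $\pi$: iterating $\text{Var}\big(\sum_{i\ge0}\gamma^i\reward(s_{t+i},a_{t+i})\,\big|\,s_t,a_t\big)=\gamma^2\text{Var}_\PP(\vvalue^\pi_{t+1})(s_t,a_t)+\gamma^2\EE[\text{Var}(\cdot\,|\,s_{t+1})]$ gives $\sum_{i\ge0}\gamma^{2i}\EE[\text{Var}_\PP(\vvalue^\pi_{t+i+1})(s_{t+i},a_{t+i})]\le 1/(\gamma^2(1-\gamma)^2)$ for each $t$; summing over $t\le T$, the coefficient of $\text{Var}_\PP(\vvalue^\pi_{m+1})(s_m,a_m)$ is $\sum_{k=1}^{\min(m,T)}\gamma^{2k}$, which is $\Omega(1/(1-\gamma))$ once $m=\Omega(1/(1-\gamma))$, so after separating off the first $O(1/(1-\gamma))$ steps (total variance trivially $O(1/(1-\gamma)^3)$) one obtains $\sum_{m=1}^T\text{Var}_\PP(\vvalue^\pi_{m+1})(s_m,a_m)=O(T/(1-\gamma)+1/(1-\gamma)^3)$. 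The remaining two bonus terms of \eqref{eq:UCB1} are summed directly: the $8U/((1-\gamma)N_{m-1})$ term contributes $\tilde O(SAU/(1-\gamma))$, and the $B$-term, using $B_t(s')\propto 1/N_t(s')$ with the state-visitation pigeonhole and $\beta=S^2A^2U^5$, contributes a quantity that is $\tilde O(1)$ in $T$ and lands in the $S^2A^{1.5}U^{3.5}/(1-\gamma)^{3.5}$ bucket. Putting the pieces together, $\textrm{Regret}(T)$ is bounded by $c_1 U\sqrt{SAT}/(1-\gamma)^{1.5}$, plus the stated lower-order additive terms, plus a self-referential term of order $c_2\sqrt{SA}\,U\sqrt{\textrm{Regret}(T)}/(1-\gamma)^{1.5}$; solving this quadratic inequality in $\sqrt{\textrm{Regret}(T)}$ absorbs the self-referential part into the additive $\tilde O(S^2A^{1.5}U^{3.5}/(1-\gamma)^{3.5})$ term and yields the claim. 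The main obstacles, as flagged, are Step~1 (designing the third bonus term and the joint induction so the empirical variance of the optimistic iterate $\vvalue_t$ can replace the true variance of $\vvalue^*$ while keeping $\vvalue_t-\vvalue^*\le\sqrt{B_t}$ self-consistent) and the self-bounding/law-of-total-variance estimates in Step~3; the geometric reweighting, pigeonhole sums, and martingale bounds are comparatively routine.
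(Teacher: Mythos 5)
Your plan follows the same route as the paper's proof: optimism via empirical Bernstein plus the refined third bonus, a decomposition of $\textrm{Regret}'(T)=\sum_t[\qvalue_t(s_t,a_t)-\qvalue^\pi_t(s_t,a_t)]$ into the bonus sum, transition-error terms, a $\gamma\textrm{Regret}'(T)$ recursion term and martingale noise, the law of total variance giving $\sum_t\VV^\pi_t(s_t,a_t)=O(T/(1-\gamma))$, self-bounding lemmas relating $\sum_t(\VV_{t-1}-\VV^\pi_t)$ to $\textrm{Regret}'(T)$, and finally solving $x\le a+b\sqrt{x}$. Steps 2 and 3 are faithful sketches of Lemmas \ref{theorem: I_1}--\ref{theorem: I_5}, Lemma \ref{lemma:total-variance}, and Lemmas \ref{lemma:opt-variance-gap}--\ref{lemma:estimate-variance-gap}.

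The one genuine gap is in Step 1, and it is exactly the point you flag as ``delicate'' without supplying a mechanism: how the invariant (ii), $\vvalue_t(s)-\vvalue^*(s)\lesssim\sqrt{B_t(s)}\propto 1/\sqrt{N_t(s)}$, is actually established. A plain induction on $t$ does not close here. The one-step update only gives $\vvalue_{t+1}(s)-\vvalue^*(s)\le\gamma[\PP_t(\vvalue_t-\vvalue^*)](s,a)+(\text{bonus and error terms})$, which propagates the gap through the \emph{next} states' counts $N_t(s')$, not through $N_{t+1}(s)$; there is no reason a single Bellman backup shrinks the gap at $s$ like $1/\sqrt{N_{t+1}(s)}$. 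What the paper actually does (Lemma \ref{lemma:regret-state}) is a separate \emph{localized regret} argument: it defines $\textrm{Regret}'(t,s,h)$, the discounted suboptimality accumulated over the $H=O(\log T/(1-\gamma))$ steps following each of the $N_t(s)$ visits to $s$, unrolls the Bellman recursion in $h$ so that the bonuses telescope, bounds the result by $\tilde O(SA\sqrt{N_t(s)}/(1-\gamma)^{2.5})$ via the pigeonhole sums, and only then divides by $N_t(s)$ (using monotonicity of $\vvalue_i(s)$ and $\vvalue^*\ge\vvalue^\pi_i$) to get the pointwise $1/\sqrt{N_t(s)}$ decay. This auxiliary horizon-truncated recursion is the missing ingredient that makes $\beta=S^2A^2U^5$ ``propagate''; without it your induction hypothesis (ii) cannot be re-established at step $t+1$. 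A secondary, smaller issue: in Step 2 you assert $[(\PP_{t-1}-\PP)\vvalue_{t-1}](s_t,a_t)=O(\text{UCB}_{t-1}(s_t,a_t))$ on the good event; since $\vvalue_{t-1}$ is data-dependent you cannot apply Bernstein to it directly, and you must route this through the split $(\PP_{t-1}-\PP)\vvalue^*+(\PP_{t-1}-\PP)(\vvalue_{t-1}-\vvalue^*)$ as in Lemmas \ref{theorem: J_1} and \ref{theorem: J_2} (the paper's bound on the second piece is $(1-\gamma)\textrm{Regret}'(T)/2$ plus lower order, absorbed on the left-hand side, rather than a pure bonus term). With the localized-regret lemma supplied, the rest of your outline assembles into the stated bound.
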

\begin{remark}
Notice that when $T=\tilde\Omega ({S^3A^2}/{(1-\gamma)^4})$ and $SA=\Omega( {1}/{(1-\gamma)})$, the regret is bounded by $\Tilde{O}\big({\sqrt{SAT}}/{(1-\gamma)^{1.5}}\big) $. In addition, since $\text{Regret}(T)\leq {T}/{(1-\gamma)}$ holds for any $T$, we have $\EE[\text{Regret}(T)]=\Tilde{O}\big(\sqrt{SAT}/{(1-\gamma)^{1.5}}+{T\delta}/{1-\gamma}\big).$
When choosing $\delta=1/{T}$, we have $\EE[\text{Regret}(T)]=\Tilde{O}\big({\sqrt{SAT}}/{(1-\gamma)^{1.5}}\big)$. 
\end{remark}

We also provide a regret lower bound, which suggests that our $\algname$ is nearly minimax optimal.
\begin{theorem}\label{thm:2}
Suppose $\gamma\ge 2/3, A\ge 30$ and $T\ge {100SAL}/{(1-\gamma)^4}$, then for any algorithm, there exists an MDP such that
\begin{align}
    \EE[\text{Regret}(T)]\ge \frac{\sqrt{SAT}}{10000(1-\gamma)^{1.5}}-\frac{4\sqrt{STL}}{(1-\gamma)^{1.5}}-\frac{8S}{(1-\gamma)^2},\notag
\end{align}
where $L=\log{(300S^4T^2/ (1-\gamma))}\log(10ST)$. 
\end{theorem}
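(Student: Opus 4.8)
The plan is to prove the lower bound (Theorem~\ref{thm:2}) via a reduction to a \emph{chain} of independent hard multi-armed-bandit-like instances, each a copy of the hard MDP from \cite{liu2020regret}, glued together so that a single infinite trajectory visits all of them.  First I would construct the MDP family.  I partition the state space into $\Theta(S)$ ``core'' states, and attach to each core state $A$ actions; one designated action at each core state is slightly better, with a transition bias of order $\epsilon$ toward a high-reward absorbing-ish region, while all other actions are neutral.  The parameter $\epsilon$ will be tuned at the end; the key is that to identify the good action at a given core state the agent needs $\tilde\Omega(1/((1-\gamma)\epsilon^2))$ visits to that state.  Because the discount factor is $\gamma$, a visit to a core state ``locks in'' the effect of the chosen action for an effective horizon $\Theta(1/(1-\gamma))$, so each suboptimal action choice costs $\Theta(\epsilon/(1-\gamma))$ in value; crucially, compared to the single hard MDP of \cite{liu2020regret}, chaining the instances lets the adversary charge this cost $\Theta(S)$ times instead of $O(1)$ times, which is exactly where the extra $\sqrt{S}$ (relative to their $\sqrt{AT}/(1-\gamma)^{1.5}$ term) comes from.

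Next I would set up the information-theoretic core.  Fix an algorithm; let $N_{s}(T)$ be the (random) number of times core state $s$ is visited in $T$ steps and $n_{s,a}(T)$ the number of times action $a$ is chosen there.  On the uniformly-random choice of which action is good at each core state, a standard Le~Cam / KL-divergence argument (as in the bandit lower bound and in \cite{liu2020regret}) shows that unless $n_{s,a^{\star}_s}(T) = \tilde\Omega(1/((1-\gamma)\epsilon^2))$ in expectation, the algorithm plays a suboptimal action at $s$ a constant fraction of the visits.  Summing the per-state regret contributions gives, schematically,
\begin{align}
\EE[\text{Regret}(T)] \gtrsim \frac{\epsilon}{1-\gamma}\sum_{s}\Big(\EE[N_s(T)] - \tilde O\big(\tfrac{A}{(1-\gamma)\epsilon^2}\big)\Big)_+ .\notag
\end{align}
Since the core states are visited a total of $\Theta(T)$ times over $T$ steps (each nontrivial transition returns to a core state within $O(1)$ steps, or one shows $\sum_s \EE[N_s(T)] \ge cT$ directly from the chain structure), we have $\sum_s \EE[N_s(T)] \ge cT$, so choosing $\epsilon \asymp \sqrt{SA/((1-\gamma)T)}$ balances the two terms and yields $\EE[\text{Regret}(T)] \gtrsim \sqrt{SAT}/(1-\gamma)^{1.5}$ up to logs.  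The side conditions $\gamma \ge 2/3$, $A \ge 30$, $T \ge 100SAL/(1-\gamma)^4$ are precisely what is needed to ensure $\epsilon < 1$ (so the transition probabilities are valid) and that the ``$-$'' subtracted terms $4\sqrt{STL}/(1-\gamma)^{1.5}$ and $8S/(1-\gamma)^2$ — coming respectively from the $\tilde O(A/((1-\gamma)\epsilon^2))$ exploration allowance summed over states and from boundary/burn-in effects of the chain — are dominated by the main term; I would track these explicitly to get the stated constants ($1/10000$, etc.).

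The main obstacle I expect is handling the \emph{dependence across the chain} introduced by having one infinite trajectory rather than $S$ independent instances.  In the episodic or generative setting the per-state sub-problems are independent, but here the agent's position is correlated, and an adversary could in principle hope the agent ``never returns'' to some states.  To control this I would make the chain strongly connected with mixing time $O(1/(1-\gamma))$ built into the neutral transitions, so that regardless of policy every core state is visited $\Omega(T(1-\gamma)/S)$ times in expectation (a fluid/occupancy-measure argument, or a coupling to a random walk on the chain); this is what keeps $\sum_s \EE[N_s(T)]$ at $\Theta(T)$ order and, after the $\epsilon$ tuning, delivers the $\sqrt{S}$.  A secondary technical point is the standard ``change-of-measure with a non-stationary adaptive policy'' bookkeeping — bounding the KL divergence of trajectory distributions between the ``all-neutral'' reference MDP and a perturbed one by $\sum_{s,a} \EE[n_{s,a}(T)]\cdot \mathrm{KL}(\text{Ber}(p)\|\text{Ber}(p+\epsilon)) = \tilde O(\epsilon^2 \sum \EE[n_{s,a}(T)])$ — which is routine but must be done carefully because the reward/transition perturbation lives in the transition kernel, not the reward, and the effective ``signal'' is scaled by $1/(1-\gamma)$ through the value function.
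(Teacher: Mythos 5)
Your construction and overall strategy are the same as the paper's: a cyclic chain of $S$ copies of the two-state hard MDP from Liu and Su, one $\epsilon$-better action per copy chosen uniformly at random, a per-copy change-of-measure argument forcing a minimum number of suboptimal pulls, and a structural argument that every copy is visited $\Theta(T/S)$ times. The paper proves the visit-count claim not by a mixing/occupancy argument but by a cleaner combinatorial one (its Lemma on $T_i>K$): because the blocks are arranged in a forced cycle and the escape probability from each block is action-independent, the number of excursions through any two blocks differs by at most $1$, so a pigeonhole plus Azuma argument gives $T_j>K=T/(10S)$ for all $j$ simultaneously. Your proposed coupling would likely also work but is where the construction details matter most; note that your two claims ``each core state is visited $\Omega(T(1-\gamma)/S)$ times'' and ``$\sum_s \EE[N_s(T)]\ge cT$'' are mutually inconsistent by a factor of $1-\gamma$, and the paper needs the stronger $\Omega(T/S)$ version.

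The one substantive quantitative gap is in your information-theoretic bookkeeping. You bound the trajectory KL by $\tilde O\big(\epsilon^2\sum_{s,a}\EE[n_{s,a}(T)]\big)$, but $\mathrm{KL}\big(\mathrm{Ber}(1-\gamma)\,\|\,\mathrm{Ber}(1-\gamma+\epsilon)\big)=\Theta\big(\epsilon^2/(1-\gamma)\big)$ because the base probability is $1-\gamma$, not a constant. This extra $1/(1-\gamma)$ is exactly where the improvement over a naive bandit bound comes from: the correct identification threshold is $\Theta\big(A(1-\gamma)/\epsilon^2\big)$ visits per block (not your $\Theta\big(A/((1-\gamma)\epsilon^2)\big)$, and not the $\Theta(A/\epsilon^2)$ that your stated KL bound would give), the correct tuning is $\epsilon\asymp\sqrt{SA(1-\gamma)/T}$ (the paper uses $\epsilon=\sqrt{A(1-\gamma)/K}/24$ with $K=T/(10S)$), and the per-suboptimal-visit cost in the regret metric $\sum_t[V^*(s_t)-V^\pi_t(s_t)]$ is $\Theta\big(\epsilon/(1-\gamma)^2\big)$ (the $Q$-gap $\Theta(\epsilon/(1-\gamma))$ amplified by the effective horizon). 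Your two parameterizations happen to multiply out to the same $\sqrt{SAT}/(1-\gamma)^{1.5}$, but taken literally the $\mathrm{KL}=\tilde O(\epsilon^2 n)$ version only yields $\sqrt{SAT}/(1-\gamma)$, so the $(1-\gamma)$-dependence of the Bernoulli KL must be tracked explicitly. Finally, you should make the conversion from $\sum_t[V^*(s_t)-V^\pi_t(s_t)]$ to a per-step quantity explicit (the paper does this via $\sum_t[V^*(s_t)-r(s_t,a_t)/(1-\gamma)]$ up to an $O(1/(1-\gamma)^2)$ error, and separately controls the potentially negative contribution of visits beyond the first $K$ via a martingale bound); this is where the $-4\sqrt{STL}/(1-\gamma)^{1.5}$ and $-8S/(1-\gamma)^2$ terms in the statement come from.
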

\begin{remark}
When $T$ is large enough and $A = \tilde\Omega(1)$, Theorem \ref{thm:2} suggests that the lower bound of regret is $\tilde{\Omega}({\sqrt{SAT}}/{(1-\gamma)^{1.5}})$. It can be seen that the regret of $\algname$ in Theorem \ref{thm:1} matches this lower bound up to logarithmic factors. Therefore, $\algname$ is nearly minimax optimal. 
\end{remark}

\section{Proof of the Main Results}\label{section: sketch}

In this section, we provide the proofs of Theorems \ref{thm:1} and~\ref{thm:2}. The missing proofs are deferred to the appendix. 

\subsection{Proof of Theorem \ref{thm:1}}\label{section: main}
    In this subsection, we prove Theorem \ref{thm:1}. For simplicity, let $\delta'={(1-\gamma)^2\delta}/{(80T\log^2 T)}$, then $U=\log (SAT^2/\delta')$. We first present the following key lemma, which shows that the optimal value functions $\vvalue^*$ and $\qvalue^*$ can be upper bounded by the estimated functions $\vvalue_t$ and $\qvalue_t$ with high probability: 
\begin{lemma}\label{lemma:UCB}
With probability at least $1-64T\delta \log^2 T/(1-\gamma)^2$, for all $t\in[T],s\in \cS,a\in \cA$, we have $\qvalue_t(s,a)\ge \qvalue^*(s,a),\ \vvalue_t(s)\ge \vvalue^*(s)$.
\end{lemma}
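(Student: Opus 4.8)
## Proof Proposal for Lemma \ref{lemma:UCB}

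\textbf{Overall approach.} The plan is to prove the statement by induction on $t$, showing simultaneously that $\qvalue_t(s,a) \ge \qvalue^*(s,a)$ and $\vvalue_t(s) \ge \vvalue^*(s)$ for all $s,a$, on a high-probability event where all relevant empirical quantities concentrate. The base case $t=1$ is immediate since $\qvalue_1(s,a) = 1/(1-\gamma) \ge \qvalue^*(s,a)$ because rewards lie in $[0,1]$ and hence $\qvalue^* \le \sum_{i=0}^\infty \gamma^i = 1/(1-\gamma)$; consequently $\vvalue_1(s) = \max_a \qvalue_1(s,a) = 1/(1-\gamma) \ge \vvalue^*(s)$.

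\textbf{Inductive step and concentration events.} Assume the claim holds at step $t$. Since $\qvalue_{t+1}(s,a) = \min\{\qvalue_t(s,a), \reward(s,a) + \gamma[\PP_t \vvalue_t](s,a) + \gamma\,\text{UCB}_t(s,a)\}$, and the first term already dominates $\qvalue^*(s,a)$ by the inductive hypothesis, it suffices to show
\begin{align}
\reward(s,a) + \gamma[\PP_t \vvalue_t](s,a) + \gamma\,\text{UCB}_t(s,a) \ge \qvalue^*(s,a) = \reward(s,a) + \gamma[\PP \vvalue^*](s,a).\notag
\end{align}
Using $\vvalue_t \ge \vvalue^*$ (inductive hypothesis), it is enough to control $[\PP_t \vvalue_t](s,a) - [\PP \vvalue_t](s,a)$ from below by $-\text{UCB}_t(s,a)$. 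I would split this difference and bound it via an empirical Bernstein inequality: for a fixed function $V$, $|[\PP_t V - \PP V](s,a)|$ is of order $\sqrt{\text{Var}_{s'\sim\PP_t}(V(s'))/N_t(s,a)} + \|V\|_\infty/N_t(s,a)$, which matches the first two terms of $\text{UCB}_t$. The subtlety is that $\vvalue_t$ is not a fixed function — it depends on the data — so one needs a union bound over a net of possible value functions, or more cleverly, exploit that $\vvalue_t$ is monotonically built and use the standard trick of bounding over the (at most polynomially many, up to the discretization) realizable value functions. This is the source of the $\log^2 T$ and the polynomial factors inside $U$.

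\textbf{Handling the recursive error term.} The main obstacle is the third term in $\text{UCB}_t$, namely $\sqrt{8\sum_{s'}\PP_t(s'|s,a)\min\{100 B_t(s'), 1/(1-\gamma)^2\}/(N_t(s,a)\vee 1)}$ with $B_t(s') = \beta/[(1-\gamma)^5(N_t(s')\vee 1)]$. This term must absorb the error from replacing the true variance $\text{Var}_{\PP}(\vvalue_t)$ by the empirical variance $\text{Var}_{\PP_t}(\vvalue_t)$, and also the error from the fact that $\vvalue_t$ is itself an overestimate of $\vvalue^*$ — i.e., one needs a self-bounding argument showing that the "overestimation gap" $\vvalue_t - \vvalue^*$ is itself controlled in terms of the visit counts $N_t(s')$. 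I expect the argument to set up an auxiliary bound of the form $\vvalue_t(s') - \vvalue^*(s') \le \sqrt{B_t(s')}$ (or a truncated version), prove it jointly with the main induction, and propagate it through $[\PP_t(\vvalue_t - \vvalue^*)]$ using Cauchy-Schwarz so that the square-root-of-weighted-$B_t$ structure appears naturally. Verifying that the chosen $\beta = S^2A^2U^5$ is large enough to close this coupled induction — accounting for the discount factor powers and the concentration constants — is the delicate bookkeeping step, and the failure probability $64T\delta\log^2 T/(1-\gamma)^2$ comes from union-bounding the Bernstein/Freedman-type events over all $t \in [T]$, all $(s,a)$, and the value-function net.
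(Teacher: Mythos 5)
Your induction skeleton, base case, and reading of the third bonus term's purpose are all consistent with the paper, but your central concentration step takes a route that does not close. You reduce the inductive step to lower-bounding $[(\PP_t-\PP)\vvalue_t](s,a)$ by $-\text{UCB}_t(s,a)$ and then propose to handle the data-dependence of $\vvalue_t$ by a union bound over a net of realizable value functions. A net for functions $\cS\to[0,1/(1-\gamma)]$ has log-cardinality $\Omega(S)$ (monotonicity of the iterates does not reduce the set of reachable value functions to polynomially many), so this route forces an extra $\sqrt{S}$ inside the square root of the bonus. The bonus \eqref{eq:UCB1} as defined carries only $U=\log(40SAT^3\log^2T/(\delta(1-\gamma)^2))$, with no $S$ inside the logarithm, so optimism cannot be verified for the algorithm's actual bonus along your path; the $\log^2 T$ in the failure probability comes from the horizon truncation $H=O(\log T/(1-\gamma))$ used in Lemma~\ref{lemma:regret-state}, not from a value-function net.

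The paper sidesteps the data-dependence entirely by decomposing against the \emph{fixed} function $\vvalue^*$:
\begin{align}
\gamma[\PP_t\vvalue_t](s,a)-\gamma[\PP\vvalue^*](s,a)=\gamma\big[(\PP_t-\PP)\vvalue^*\big](s,a)+\gamma\big[\PP_t(\vvalue_t-\vvalue^*)\big](s,a),\notag
\end{align}
where the second term is nonnegative by the inductive hypothesis and the first is controlled by the empirical Bernstein inequality for the single fixed function $\vvalue^*$ (Lemma~\ref{lemma:P-estimate-difference}), yielding a deviation of order $\sqrt{\VV^*_t(s,a)U/N_t(s,a)}+U/((1-\gamma)N_t(s,a))$ with no net and no extra $\sqrt{S}$. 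The remaining mismatch is that the bonus carries $\text{Var}_{\PP_t}(\vvalue_t)$ while the deviation carries $\text{Var}_{\PP_t}(\vvalue^*)$; the bound $\VV^*_t(s,a)\le 2\VV_t(s,a)+2\EE_{s'\sim\PP_t}[(\vvalue^*(s')-\vvalue_t(s'))^2]$ converts one to the other, and the second piece is exactly what the third bonus term absorbs, using the separately proved, visit-count-based overestimation bound $\vvalue_t(s')-\vvalue^*(s')\le 20SAU^2/((1-\gamma)^{2.5}\sqrt{N_t(s')})$ of Lemma~\ref{lemma:regret-state}, whose square is dominated by $100B_t(s')$ with $\beta=S^2A^2U^5$. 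So your intuition about the self-bounding role of $B_t$ is right, but the error it absorbs enters through the variance conversion for $\vvalue^*$, not through a uniform concentration of $[(\PP_t-\PP)\vvalue_t]$; you need the add-and-subtract against $\vvalue^*$ to make the argument go through at the stated bonus scale.
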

Equipped with Lemma \ref{lemma:UCB}, we can decompose the regret of $\algname$ as follows:
\begin{align}
    \text{Regret}(T)
    &\leq \sum_{t=1}^{T} \big[\vvalue_t(s_t) - \vvalue^{\pi}_t(s_t)\big]= 
    \underbrace{\sum_{t=1}^{T} \big[\qvalue_{t}(s_t,a_t) - \qvalue^{\pi}_t(s_t, a_t)\big]}_{{\text{Regret}'(T)}},\notag
\end{align}
where the inequality holds due to Lemma \ref{lemma:UCB}. 
Therefore, it suffices to bound $\text{Regret}'(T)$. We have
\begin{align}
   \text{Regret}'(T)
   &\leq \sum_{t=1}^{T} \Big(\reward(s_t, a_t)  + \gamma[ \PP_{t-1}\vvalue_{t-1}](s_{t},a_{t})+\CC{\gamma}\text{UCB}_{t-1}(s_{t},a_{t})\notag\\
   &\qquad -\reward(s_t, a_t) - \gamma [\PP\vvalue^{\pi}_{t+1}](s_t,a_t)\Big)\notag\\
    & = \sum_{t=1}^{T} \Big(\gamma[\PP_{t-1}\vvalue_{t-1}](s_{t},a_{t})+\CC{\gamma}\text{UCB}_{t-1}(s_{t},a_{t})-\gamma [\PP\vvalue^{\pi}_{t+1}](s_t,a_t)\Big),\notag
\end{align}
where the inequality holds due to the update rule \eqref{eq:update} and the Bellman equation $\qvalue^{\pi}_t(s_t, a_t) = \reward(s_t, a_t) + \gamma [\PP\vvalue^{\pi}_{t+1}](s_t, a_t)$.
We further have 
\begin{align}
    & \sum_{t=1}^{T} \Big(\gamma[\PP_{t-1}\vvalue_{t-1}](s_{t},a_{t})+\CC{\gamma}\text{UCB}_{t-1}(s_{t},a_{t})-\gamma [\PP\vvalue^{\pi}_{t+1}](s_t,a_t)\Big)\notag\\
    &=  \underbrace{\sum_{t=1}^{T}\gamma(\vvalue_{t-1}(s_{t+1})-\vvalue^{\pi}_{t+1}(s_{t+1}))}_{I_1}+\underbrace{\sum_{t=1}^{T}\gamma\big[(\PP_{t-1}-\PP)(\vvalue_{t-1} -\vvalue^*)\big] (s_t,a_t)}_{I_2}\notag\\
    &\quad + \underbrace{\sum_{t=1}^{T}\gamma[(\PP_{t-1}-\PP)\vvalue^*](s_t,a_t)}_{I_3}    +\underbrace{\sum_{t=1}^{T} \CC{\gamma}\text{UCB}_{t-1}(s_t,a_t)}_{I_4}\notag\\
    &\quad +\underbrace{\sum_{t=1}^{T}\gamma\big[\PP(\vvalue_{t-1}-\vvalue^{\pi}_{t+1})\big](s_t,a_t)-\gamma\big[\vvalue_{t-1}(s_{t+1})-\vvalue^{\pi}_{t+1}(s_{t+1})\big]}_{I_5}.
    \label{eq:sum-5-term}
\end{align}
\CCC{In the above decomposition, term $I_1$ controls the estimation error between the value functions $V_{t-1}$ and $V_{t+1}^{\pi}$, terms $I_2$ and $I_3$ measure the estimation error between the  transition probability function  $\PP$ and the estimated transition probability function $\PP_{t-1}$, term $I_4$ comes from the exploration bonus in Algorithm \ref{algorithm}, and term $I_5$ accounts for the randomness in the stochastic transition process, which can be controlled by the third term $O(\sqrt{TU}/(1-\gamma)^2)$ in Theorem \ref{thm:1}. }

In the remaining of the proof, it suffices to bound terms $I_1$ to $I_5$ separately. 

First, $I_1$ can be regarded as the difference between the estimated $\vvalue_{t-1}$ and the value function $\vvalue^{\pi}_{t+1}$ of policy $\pi$, and it can be bounded by the following lemma. 
\begin{lemma}\label{theorem: I_1}
 For the term $I_1$, We have $I_1\leq \gamma\text{Regret}'(T)+{(2S+2)\gamma}/{1-\gamma}$
\end{lemma}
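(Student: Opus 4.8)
\textbf{Proof proposal for Lemma~\ref{theorem: I_1}.}

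The plan is to recognize $I_1 = \sum_{t=1}^T \gamma\big(\vvalue_{t-1}(s_{t+1}) - \vvalue^\pi_{t+1}(s_{t+1})\big)$ as essentially a time-shifted copy of $\text{Regret}'(T)$, and to pay for the shift with lower-order terms. The first step is to re-index the sum: the term involving $s_{t+1}$ at step $t$ can be relabeled so that the argument becomes $s_t$ at step $t-1$. Concretely, $\sum_{t=1}^T \gamma\big(\vvalue_{t-1}(s_{t+1}) - \vvalue^\pi_{t+1}(s_{t+1})\big) = \sum_{t=2}^{T+1} \gamma\big(\vvalue_{t-2}(s_{t}) - \vvalue^\pi_{t}(s_{t})\big)$, which after dropping or adding the one or two boundary terms looks like $\gamma \sum_{t=1}^T \big(\vvalue_{t-2}(s_t) - \vvalue^\pi_t(s_t)\big)$ up to $O(1/(1-\gamma))$ corrections, since every value function is bounded in $[0, 1/(1-\gamma)]$.

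The second step is to replace $\vvalue_{t-2}$ by $\vvalue_{t}$ inside the sum. Here I would use monotonicity of the estimated value functions in $t$: from the update rule \eqref{eq:update}, $\qvalue_{t+1}(s,a) \le \qvalue_t(s,a)$ pointwise, hence $\vvalue_{t+1}(s) \le \vvalue_t(s)$ pointwise, so $\vvalue_{t-2}(s_t) \ge \vvalue_t(s_t)$ — wait, that points the wrong way for an upper bound, so instead I would bound $\vvalue_{t-2}(s_t) - \vvalue^\pi_t(s_t) \le \vvalue_{t-2}(s_t) - \vvalue_t^\pi(s_t)$ and note we actually want to compare with $\vvalue_{t-1}(s_t)$; the cleanest route is: since $I_1$ sums $\vvalue_{t-1}(s_{t+1})$, reindex to get $\vvalue_{t}(s_{t+1})$... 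Let me restate: the correct move is that $\vvalue_{t-1}(s_{t+1}) \le \vvalue_t(s_{t+1})$ is false (monotonicity decreasing), so $\vvalue_{t-1}(s_{t+1}) \ge \vvalue_t(s_{t+1})$, which is the wrong direction. The right approach, then, is to observe $\vvalue_{t-1}(s_{t+1}) - \vvalue^\pi_{t+1}(s_{t+1}) \le \vvalue_{t+1}(s_{t+1}) - \vvalue^\pi_{t+1}(s_{t+1}) + \big(\vvalue_{t-1}(s_{t+1}) - \vvalue_{t+1}(s_{t+1})\big)$, and the telescoping gap $\sum_t \big(\vvalue_{t-1}(s_{t+1}) - \vvalue_{t+1}(s_{t+1})\big)$ — summed over the trajectory — need not telescope because the state changes. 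Instead I will bound each summand: reindex so that $\sum_{t=1}^T \gamma \vvalue_{t+1}(s_{t+1}) = \sum_{t=2}^{T+1}\gamma \vvalue_t(s_t) = \sum_{t=1}^T \gamma \qvalue_t(s_t,a_t) + O(1/(1-\gamma))$, using $\vvalue_t(s_t) = \qvalue_t(s_t, a_t)$ by the greedy choice of $a_t$; similarly $\sum_{t=1}^T \gamma \vvalue^\pi_{t+1}(s_{t+1})$ needs the Bellman-type relation, but the clean statement is to directly reindex $I_1$ into $\gamma\sum_{t}\big(\qvalue_t(s_t,a_t) - \qvalue^\pi_t(s_t,a_t)\big) = \gamma\,\text{Regret}'(T)$ after accounting for: (i) the index shift between $t-1$ and $t$, absorbed using monotonicity $\vvalue_{t-1} \ge \vvalue_t$ in the right direction once we've flipped signs correctly, and (ii) the $S+1$ boundary/mismatch terms from states/actions not yet visited, each contributing at most $\gamma/(1-\gamma)$, giving the $(2S+2)\gamma/(1-\gamma)$ slack.

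The main obstacle I anticipate is getting the direction of the monotonicity argument right together with the index bookkeeping: we need an \emph{upper} bound on $I_1$ in terms of $\text{Regret}'(T)$ (not $\text{Regret}'(T)$ shifted), and the естественный reindexing produces $\vvalue$ at index $t-1$ or $t-2$ rather than $t$, so one must argue that replacing these earlier (hence larger, by the decreasing monotonicity of $\vvalue_t$) value functions by $\vvalue_t$ only \emph{decreases} the sum — which means the inequality we actually need is $\vvalue_{t-1}(s_{t+1}) \le \vvalue_{t-2}(s_{t+1})$ etc., i.e. we should bound $I_1$ by a sum involving the \emph{latest} available estimates, and the telescoping of the differences $\vvalue_{t-1} - \vvalue_t$ across distinct states must be controlled by the fact that each state's estimate is monotone and bounded, so the total decrease over all states is at most $S/(1-\gamma)$. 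Collecting the $O(S/(1-\gamma))$ loss from the telescoping across states, the $O(1/(1-\gamma))$ loss from the two boundary terms at $t=1$ and $t=T+1$, and absorbing the $\gamma$ factor, yields $I_1 \le \gamma\,\text{Regret}'(T) + (2S+2)\gamma/(1-\gamma)$, as claimed.
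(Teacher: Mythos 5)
Your final synthesis is essentially the paper's own proof: split $I_1$ into the staleness gap $\gamma\sum_t(\vvalue_{t-1}(s_{t+1})-\vvalue_{t+1}(s_{t+1}))$, which is a sum of nonnegative terms (by monotonicity of $\vvalue_t$) that can be over-counted by summing over all $s\in\cS$ and then telescoped per state to give $2S\gamma/(1-\gamma)$, plus the shifted term $\gamma\sum_t(\vvalue_{t+1}(s_{t+1})-\vvalue^{\pi}_{t+1}(s_{t+1}))$, which reindexes to $\gamma\,\text{Regret}'(T)$ up to two boundary terms of size $1/(1-\gamma)$ each. Despite the meandering, the argument you land on is correct and matches the paper's route.
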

Next, $I_2$ can be regarded as the ``correction" term between the estimated $\vvalue_{t-1}$ and the optimal value function $\vvalue^*$. It can be bounded by the following lemma.
\begin{lemma}\label{theorem: J_1}
With probability at least $1-64T\delta \log^2 T/(1-\gamma)^2-3\delta$, we have
\begin{align}
    I_2&\leq  (1-\gamma)\text{Regret}'(T)/2+\sqrt{2T\CC{\log(1/\delta)}} +\frac{5S^2A\CC{\log(ST/\delta)\log(3T)}}{(1-\gamma)^2}.\notag
\end{align}
\end{lemma}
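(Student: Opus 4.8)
The plan is to bound $I_2$ by the ``correction-term'' technique of \citet{azar2017minimax}, adapted to the discounted setting: control each per-step error $\gamma[(\PP_{t-1}-\PP)(\vvalue_{t-1}-\vvalue^*)](s_t,a_t)$ by the variance-weighted visit count, and feed the aggregate back into $\text{Regret}'(T)$ through Lemma~\ref{theorem: I_1}. First I would establish a uniform empirical Bernstein bound on the transitions: with probability at least $1-\delta$, for all $t\in[T]$, $s\in\cS$, $a\in\cA$, $s'\in\cS$,
\begin{align}
\big|\PP_{t-1}(s'|s,a)-\PP(s'|s,a)\big|\le\sqrt{\frac{2\PP(s'|s,a)U}{N_{t-1}(s,a)\vee1}}+\frac{2U}{N_{t-1}(s,a)\vee1},\notag
\end{align}
obtained from Bernstein's inequality plus a union bound over $s,a,s'$ and the at most $T$ possible values of $N_{t-1}(s,a)$. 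On the event of Lemma~\ref{lemma:UCB} we have $0\le\vvalue_{t-1}-\vvalue^*$ pointwise and $\vvalue_{t-1}(s')\le1/(1-\gamma)$, so, dropping absolute values and the factor $\gamma\le1$,
\begin{align}
I_2\le\sum_{t=1}^{T}\sum_{s'}\bigg(\sqrt{\frac{2\PP(s'|s_t,a_t)U}{N_{t-1}(s_t,a_t)\vee1}}+\frac{2U}{N_{t-1}(s_t,a_t)\vee1}\bigg)\big(\vvalue_{t-1}(s')-\vvalue^*(s')\big).\notag
\end{align}

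Then I would bound the two resulting sums. For the linear-in-$1/N$ part, $\sum_{s'}(\vvalue_{t-1}(s')-\vvalue^*(s'))\le S/(1-\gamma)$, so using the harmonic visit-count bound $\sum_{t}1/(N_{t-1}(s_t,a_t)\vee1)=O(SA\log T)$ it contributes $\tilde O\big(S^2AU/(1-\gamma)\big)$. For the square-root part, Cauchy--Schwarz over $s'$ (legitimate since $\vvalue_{t-1}-\vvalue^*\ge0$) gives $\sum_{s'}\sqrt{\PP(s'|s_t,a_t)}(\vvalue_{t-1}(s')-\vvalue^*(s'))\le\sqrt{S/(1-\gamma)}\cdot\sqrt{[\PP(\vvalue_{t-1}-\vvalue^*)](s_t,a_t)}$, and then the per-step AM--GM $\sqrt{ab}\le\frac{1-\gamma}{2}a+\frac{b}{2(1-\gamma)}$ with $a=[\PP(\vvalue_{t-1}-\vvalue^*)](s_t,a_t)$ turns this part into $\frac{1-\gamma}{2}\sum_{t}[\PP(\vvalue_{t-1}-\vvalue^*)](s_t,a_t)+\tilde O\big(S^2AU/(1-\gamma)^2\big)$, again by the harmonic bound.

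Finally I would close the recursion. Writing $[\PP(\vvalue_{t-1}-\vvalue^*)](s_t,a_t)=\big(\vvalue_{t-1}(s_{t+1})-\vvalue^*(s_{t+1})\big)+\xi_t$, the sequence $\{\xi_t\}$ is a martingale difference (conditionally on $s_t,a_t,\vvalue_{t-1}$, the state $s_{t+1}\sim\PP(\cdot|s_t,a_t)$) bounded in absolute value by $1/(1-\gamma)$, so Azuma--Hoeffding gives $\sum_t\xi_t\le\sqrt{2T\log(1/\delta)}/(1-\gamma)$ with probability at least $1-\delta$; and since $\vvalue^*\ge\vvalue^{\pi}_{t+1}$ pointwise, $\sum_t(\vvalue_{t-1}(s_{t+1})-\vvalue^*(s_{t+1}))\le\sum_t(\vvalue_{t-1}(s_{t+1})-\vvalue^{\pi}_{t+1}(s_{t+1}))=I_1/\gamma\le\text{Regret}'(T)+(2S+2)/(1-\gamma)$ by Lemma~\ref{theorem: I_1}. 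Hence $\frac{1-\gamma}{2}\sum_t[\PP(\vvalue_{t-1}-\vvalue^*)](s_t,a_t)\le\frac{1-\gamma}{2}\text{Regret}'(T)+(S+1)+\sqrt{2T\log(1/\delta)}$, and collecting all terms (the failure probability being that of Lemma~\ref{lemma:UCB} together with the $\delta$'s from the Bernstein and the Azuma bounds) yields the stated inequality. The main obstacle is the first step: making the Bernstein bound hold uniformly in $t$ despite the random sample size and the data-dependence of $\vvalue_{t-1}$, and then arranging the two-layer Cauchy--Schwarz/AM--GM so that exactly a $(1-\gamma)\text{Regret}'(T)/2$ term is extracted — small enough to be absorbed on the left-hand side of the overall regret recursion in Theorem~\ref{thm:1} — while the residual stays at order $S^2AU/(1-\gamma)^2$.
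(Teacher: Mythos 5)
Your proposal is correct and follows essentially the same route as the paper's proof: a uniform Bernstein bound on the empirical transitions (the paper's event $\cE_3$), weighting the per-state deviation by $\vvalue_{t-1}-\vvalue^*\ge 0$ (valid on the event of Lemma~\ref{lemma:UCB}), extracting exactly $\tfrac{1-\gamma}{2}\sum_t[\PP(\vvalue_{t-1}-\vvalue^*)](s_t,a_t)$ and closing it via the Azuma martingale step (event $\cE_4$), $\vvalue^*\ge\vvalue^{\pi}_{t+1}$ and Lemma~\ref{theorem: I_1}, with the residual controlled by the harmonic visit-count bound of Lemma~\ref{lemma:sum-of-state}. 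The only local difference is that where the paper splits the states into $\cS^1_t\cup\cS^2_t$ according to the size of $\PP(s'|s_t,a_t)\big(N_{t-1}(s_t,a_t)\vee 1\big)$, you use a Cauchy--Schwarz over $s'$ followed by AM--GM; the two devices are interchangeable here and yield the same $(1-\gamma)\text{Regret}'(T)/2$ term and the same $\tilde O\big(S^2AU/(1-\gamma)^2\big)$ residual.
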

In addition, $I_3$ can be regarded as the error between the empirical probability distribution $\PP_{t-1}$ and the true transition probability $\PP$. Note that $\vvalue^*$ is a fixed value function that does not have any randomness. Therefore, $I_3$ can be bounded through the standard concentration inequalities, and its upper bound is presented in the following lemma. \begin{lemma}\label{theorem: J_2}
With probability at least $1-2\delta-\delta/(1-\gamma)$, we have
\begin{align}
I_3&\leq \frac{2SAU^2}{1-\gamma}\notag+U\sqrt{2 SA}\sqrt{\frac{5T}{1-\gamma}+\frac{29U}{3(1-\gamma)^3}+\frac{2\text{Regret}’(T)}{1-\gamma}+\frac{\sqrt{2TU}}{(1-\gamma)^2}}.\notag
\end{align}
\end{lemma}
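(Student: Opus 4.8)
The plan is to bound $I_3 = \sum_{t=1}^T \gamma[(\PP_{t-1}-\PP)\vvalue^*](s_t,a_t)$ by exploiting that $\vvalue^*$ is a \emph{fixed} function with range $[0, 1/(1-\gamma)]$, so the only randomness comes from sampling the transitions. First I would rewrite each term $[(\PP_{t-1}-\PP)\vvalue^*](s_t,a_t)$ in terms of the empirical averages: grouping the $T$ steps by the state-action pair visited and using the definition~\eqref{eq:empirical_tran} of $\PP_{t-1}$, the quantity $[(\PP_{t-1}-\PP)\vvalue^*](s,a)$ is (up to the $N_t(s,a)\vee 1$ normalization and edge effects) an average of $N_{t-1}(s,a)$ i.i.d.\ mean-zero increments $\vvalue^*(s') - [\PP\vvalue^*](s,a)$. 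A martingale/Freedman-type argument (or the self-normalized concentration bounds invoked for Lemma~\ref{theorem: J_1}) then controls $I_3$ by roughly $U\sqrt{\sum_{t} \text{Var}_{s'\sim\PP(\cdot|s_t,a_t)}\vvalue^*(s')}$ plus a lower-order $U^2/(1-\gamma)$ term scaled by $SA$ (one such term per distinct $(s,a)$, hence the $2SAU^2/(1-\gamma)$).

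Next I would convert the summed variances into something $T$-dependent using the \emph{law of total variance} for discounted MDPs. The key identity is that along the trajectory, $\sum_{t=1}^T \text{Var}_{s'\sim\PP(\cdot|s_t,a_t)}\vvalue^*(s')$ is essentially the variance of the discounted return, which is $O(1/(1-\gamma)^2)$ per ``effective episode'' of length $1/(1-\gamma)$, giving a bound of order $T/(1-\gamma)$. More precisely, I expect a telescoping argument: $\text{Var}_{s'}\vvalue^*(s') = [\PP (\vvalue^*)^2](s_t,a_t) - ([\PP\vvalue^*](s_t,a_t))^2$, and using the Bellman optimality equation $[\PP\vvalue^*](s_t,a_t) = (\qvalue^*(s_t,a_t) - r(s_t,a_t))/\gamma$, one relates $([\PP\vvalue^*])^2$ to $\vvalue^*(s_t)^2$ up to a martingale difference $\vvalue^*(s_{t+1})^2 - [\PP(\vvalue^*)^2](s_t,a_t)$ and cross terms. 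Summing telescopes the $\vvalue^*(s_t)^2$ terms (bounded by $1/(1-\gamma)^2$ at the endpoints), the martingale differences are controlled by Azuma/Freedman at cost $\sqrt{TU}/(1-\gamma)^2$, and the remaining pieces bring in $\text{Regret}'(T)/(1-\gamma)$ — because $\vvalue^*(s_t) - \vvalue^\pi_t(s_t)$ and hence $\sum_t(\qvalue^*(s_t,a_t) - \text{reward})$ reappears. This is exactly the source of the $\text{Regret}'(T)/(1-\gamma)$, the $5T/(1-\gamma)$, the $29U/(3(1-\gamma)^3)$ (a cruder variance/martingale remainder), and the $\sqrt{2TU}/(1-\gamma)^2$ terms inside the square root.

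Finally I would combine: $I_3 \le \gamma \cdot U\sqrt{2SA}\cdot\sqrt{\text{(sum of variances bound)}} + \gamma\cdot\frac{2SAU^2}{1-\gamma}$, drop the leading $\gamma \le 1$, and substitute the law-of-total-variance bound $\sum_t \text{Var} \le \frac{5T}{1-\gamma} + \frac{29U}{3(1-\gamma)^3} + \frac{2\text{Regret}'(T)}{1-\gamma} + \frac{\sqrt{2TU}}{(1-\gamma)^2}$ to get the stated inequality, tracking the failure probabilities ($2\delta$ from the two concentration events on $I_3$ itself, $\delta/(1-\gamma)$ from the variance/martingale step, reflecting a union bound over the effectively $1/(1-\gamma)$-many relevant scales).

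The main obstacle I anticipate is the law-of-total-variance step in the discounted, \emph{non-episodic} setting: unlike episodic MDPs where the horizon $H$ gives a clean telescoping over one episode, here the trajectory is a single infinite chain, so the telescoping of $\vvalue^*(s_t)^2$ does not close on its own and one must carefully handle the $\gamma$-discounting in the recursion $\text{Var}_{s'}\vvalue^*(s')$ versus $\vvalue^*(s_t)$. Getting the cross-terms to collapse into a clean multiple of $\text{Regret}'(T)/(1-\gamma)$ rather than something circular (that would swamp the $(1-\gamma)\text{Regret}'(T)/2$ slack from Lemmas~\ref{theorem: I_1}–\ref{theorem: J_1}) is the delicate accounting, and it is why the final bound keeps $\text{Regret}'(T)$ only inside a square root — so that after the $\sqrt{ab}\le \epsilon a + b/\epsilon$ trick it contributes only an $O(\sqrt{SA}\,U)\cdot(1-\gamma)^{-?}\cdot\sqrt{\text{Regret}'(T)}$ term absorbable on the left-hand side of the master recursion.
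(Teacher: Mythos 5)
Your overall plan is sound and the first half coincides with the paper's argument: the paper also bounds each summand of $I_3$ by a Bernstein-type inequality in terms of the \emph{true} variance $\VV^*(s_t,a_t)$ (its event $\cE_1$, from Lemma~\ref{lemma:P-difference}), then applies Cauchy--Schwarz together with the pigeonhole bound $\sum_t 1/(N_{t-1}(s_t,a_t)\vee 1)\le SA\log(3T)$ from Lemma~\ref{lemma:sum-of-state} to produce the $U\sqrt{2SA}\sqrt{\sum_t\VV^*(s_t,a_t)}$ term and the lower-order $2SAU^2/(1-\gamma)$ term. Where you genuinely diverge is in controlling $\sum_t\VV^*(s_t,a_t)$. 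The paper splits it as $\sum_t\VV^{\pi}_t(s_t,a_t)+\sum_t(\VV^*-\VV^{\pi}_t)(s_t,a_t)$: the first piece is handled by a \emph{windowed} law-of-total-variance argument for the non-stationary policy value functions (Lemma~\ref{lemma:total-variance}, which blocks the horizon into $H\approx 1/(1-\gamma)$ sub-sequences and is the source of the $\delta/(1-\gamma)$ failure probability and the $5T/(1-\gamma)+29U/(3(1-\gamma)^3)$ terms), while the second piece is bounded via $(\vvalue^*)^2-(\vvalue^{\pi}_{t+1})^2\le\tfrac{2}{1-\gamma}(\vvalue^*-\vvalue^{\pi}_{t+1})$ and optimism (Lemma~\ref{lemma:opt-variance-gap}), which is where $2\text{Regret}'(T)/(1-\gamma)$ and $\sqrt{2TU}/(1-\gamma)^2$ enter. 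Your route instead telescopes $(\vvalue^*)^2$ directly along the trajectory using $[\PP\vvalue^*](s_t,a_t)=(\qvalue^*(s_t,a_t)-r(s_t,a_t))/\gamma$; this works, and in fact the non-episodic telescoping you worry about closes more easily than you fear, because the weighted sum $\sum_t\big(\vvalue^*(s_{t+1})^2-\gamma^{-2}\vvalue^*(s_t)^2\big)$ is dominated by the unweighted telescope (as $\gamma^{-2}\ge 1$) and hence by $1/(1-\gamma)^2$; since $\vvalue^*$ is stationary you also avoid the paper's blocking trick entirely. The drift you pick up is $\sum_t(\vvalue^*(s_t)-\qvalue^*(s_t,a_t))/(1-\gamma)$, and the one detail you leave implicit is that converting this into $\text{Regret}'(T)/(1-\gamma)$ requires the optimism event of Lemma~\ref{lemma:UCB} (via $\vvalue^*(s_t)-\qvalue^*(s_t,a_t)\le\qvalue_t(s_t,a_t)-\qvalue^*(s_t,a_t)\le\qvalue_t(s_t,a_t)-\qvalue^{\pi}_t(s_t,a_t)$) --- the same ingredient the paper's Lemma~\ref{lemma:opt-variance-gap} relies on, so make that dependence explicit. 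With that caveat, both routes land on the same bound with comparable constants; yours trades the paper's two auxiliary lemmas for a single recursion with a drift correction.
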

Furthermore, $I_4$ can be regarded as the summation of the UCB terms, which is also the dominating term of the total regret. It can be bounded by the following lemma. 
 \begin{lemma}\label{theorem: I_5}
With probability at least $1-4\delta-\delta/(1-\gamma)$, we have
\begin{align}
I_4&\leq\frac{37S^2A^{1.5}U^{3.5}}{(1-\gamma)^{2.5}}+ U\sqrt{8 SA}\sqrt{\frac{5T}{1-\gamma}+\frac{29U}{3(1-\gamma)^3}+\frac{2\text{Regret}’(T)}{1-\gamma}+\frac{12SU\sqrt{AT}}{(1-\gamma)^2}}.\notag
\end{align}
\end{lemma}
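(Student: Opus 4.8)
The plan is to split $\text{UCB}_{t-1}(s_t,a_t)$ into the three summands of~\eqref{eq:UCB1} and write $I_4=J_1+J_2+J_3$, where $J_1=\sum_{t=1}^{T}\gamma\sqrt{8U\,\text{Var}_{s'\sim\PP_{t-1}(\cdot|s_t,a_t)}(\vvalue_{t-1}(s'))/(N_{t-1}(s_t,a_t)\vee 1)}$ collects the Bernstein variance terms, $J_2=\sum_{t=1}^{T}8\gamma U/((1-\gamma)(N_{t-1}(s_t,a_t)\vee 1))$ collects the linear terms, and $J_3$ collects the $B_{t-1}$ terms. I will repeatedly use the pigeonhole estimates $\sum_{t=1}^{T}1/(N_{t-1}(s_t,a_t)\vee 1)\le SAU$ and $\sum_{t=1}^{T}1/\sqrt{N_{t-1}(s_t,a_t)\vee 1}\le 3\sqrt{SAT}$ (the second by Cauchy--Schwarz and $\sum_{s,a}N_T(s,a)=T$), together with their per-state analogues in $N_{t-1}(s_{t+1})$, all of which follow from $\sum_{n\le m}1/n\le 1+\ln m\le U$ and $\sum_{n\le m}1/\sqrt n\le 2\sqrt m$. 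Then $J_2\le 8SAU^2/(1-\gamma)$, which is absorbed into the first term of the claimed bound.

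For $J_3$, Cauchy--Schwarz pulls out the $1/(N_{t-1}(s_t,a_t)\vee 1)$ factor: $J_3\le\sqrt{8SAU}\cdot\big(\sum_{t=1}^{T}\sum_{s'}\PP_{t-1}(s'|s_t,a_t)\min\{100 B_{t-1}(s'),\,1/(1-\gamma)^2\}\big)^{1/2}$. Inside the second factor I would (i) replace $\PP_{t-1}$ by $\PP$ using $\|(\PP_{t-1}-\PP)(\cdot|s,a)\|_1\le O\big(\sqrt{(S+U)/(N_{t-1}(s,a)\vee 1)}\big)$, costing $O(SU\sqrt{AT}/(1-\gamma)^2)$ after summing; (ii) replace $[\PP\min\{100 B_{t-1},1/(1-\gamma)^2\}](s_t,a_t)$ by $\min\{100 B_{t-1}(s_{t+1}),1/(1-\gamma)^2\}$ up to a martingale difference bounded by $1/(1-\gamma)^2$, costing $O(\sqrt{TU}/(1-\gamma)^2)$ by Azuma; and (iii) bound $\sum_{t=1}^{T}\min\{100 B_{t-1}(s_{t+1}),1/(1-\gamma)^2\}\le \frac{100\beta}{(1-\gamma)^5}\sum_{t=1}^{T}\frac{1}{N_{t-1}(s_{t+1})\vee 1}\le O\big(\beta SU/(1-\gamma)^5\big)$. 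With $\beta=S^2A^2U^5$ this makes the leading contribution $\sqrt{8SAU}\cdot O\big(S^{1.5}AU^3/(1-\gamma)^{2.5}\big)=O\big(S^2A^{1.5}U^{3.5}/(1-\gamma)^{2.5}\big)$, which together with $J_2$ yields the $37 S^2A^{1.5}U^{3.5}/(1-\gamma)^{2.5}$ term in the statement; the residual pieces of $J_3$ are of strictly lower order than the second term and are absorbed there.

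The heart of the proof is $J_1$. Cauchy--Schwarz gives $J_1\le\sqrt{8SA}\,U\cdot\big(\sum_{t=1}^{T}\text{Var}_{s'\sim\PP_{t-1}(\cdot|s_t,a_t)}(\vvalue_{t-1}(s'))\big)^{1/2}$, so it suffices to show $\sum_{t=1}^{T}\text{Var}_{\PP_{t-1}}(\vvalue_{t-1})(s_t,a_t)\le \frac{5T}{1-\gamma}+\frac{29U}{3(1-\gamma)^3}+\frac{2\,\text{Regret}'(T)}{1-\gamma}+\frac{12SU\sqrt{AT}}{(1-\gamma)^2}$ on a high-probability event. I would do this in three moves. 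First, pass from the empirical variance of $\vvalue_{t-1}$ under $\PP_{t-1}$ to its true variance under $\PP$: since $\|\vvalue_{t-1}\|_\infty\le 1/(1-\gamma)$ and $\vvalue_{t-1}\ge 0$, the pairwise-difference identity for variance gives $|\text{Var}_{\PP_{t-1}}(\vvalue_{t-1})-\text{Var}_{\PP}(\vvalue_{t-1})|(s_t,a_t)\le \|(\PP_{t-1}-\PP)(\cdot|s_t,a_t)\|_1/(1-\gamma)^2$, which sums to $O(SU\sqrt{AT}/(1-\gamma)^2)$. Second, pass from $\text{Var}_{\PP}(\vvalue_{t-1})$ to $\text{Var}_{\PP}(\vvalue^{\pi}_{t+1})$: since $0\le \vvalue_{t-1}-\vvalue^{\pi}_{t+1}\le 1/(1-\gamma)$ (using $\vvalue_{t-1}\ge\vvalue^*\ge\vvalue^{\pi}_{t+1}$ from Lemma~\ref{lemma:UCB}), the one-sided comparison $\text{Var}_{\PP}(\vvalue_{t-1})(s_t,a_t)\le \text{Var}_{\PP}(\vvalue^{\pi}_{t+1})(s_t,a_t)+\frac{2}{1-\gamma}[\PP(\vvalue_{t-1}-\vvalue^{\pi}_{t+1})](s_t,a_t)$ holds, and $\sum_{t=1}^{T}[\PP(\vvalue_{t-1}-\vvalue^{\pi}_{t+1})](s_t,a_t)=\sum_{t=1}^{T}(\vvalue_{t-1}(s_{t+1})-\vvalue^{\pi}_{t+1}(s_{t+1}))+(\text{Azuma})\le \text{Regret}'(T)+O(S/(1-\gamma)+\sqrt{TU}/(1-\gamma))$ by Lemma~\ref{theorem: I_1}; this brings in the $2\,\text{Regret}'(T)/(1-\gamma)$ term. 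Third, control $\sum_{t=1}^{T}\text{Var}_{\PP}(\vvalue^{\pi}_{t+1})(s_t,a_t)$ by a discounted law of total variance: writing $W_t=\sum_{i\ge 0}\gamma^i\reward(s_{t+i},a_{t+i})$ and $\mathcal{F}_t$ for the $\sigma$-field of the trajectory through time $t$, the identity $\gamma^2\text{Var}_{\PP}(\vvalue^{\pi}_{t+1})(s_t,a_t)=\text{Var}[W_t\mid\mathcal{F}_t]-\gamma^2\EE[\text{Var}[W_{t+1}\mid\mathcal{F}_{t+1}]\mid\mathcal{F}_t]$, summed over $t$, telescoped using $0\le W_t\le 1/(1-\gamma)$, and combined with one more Azuma step for $\sum_t(\EE[\text{Var}[W_{t+1}\mid\mathcal{F}_{t+1}]\mid\mathcal{F}_t]-\text{Var}[W_{t+1}\mid\mathcal{F}_{t+1}])$, gives $\sum_{t=1}^{T}\text{Var}_{\PP}(\vvalue^{\pi}_{t+1})(s_t,a_t)\le O(T/(1-\gamma)+U/(1-\gamma)^3)$ with high probability (for $\gamma\le 4/5$ the trivial bound $\sum_t\text{Var}_{\PP}(\vvalue^{\pi}_{t+1})(s_t,a_t)\le T/(1-\gamma)^2\le 5T/(1-\gamma)$ already suffices, and for $\gamma>4/5$ the factor $\gamma^{-2}\le 2$ is harmless). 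Summing the three moves, together with a union bound over the auxiliary concentration and martingale events, yields the displayed inequality and the failure probability $4\delta+\delta/(1-\gamma)$; combining $I_4=J_1+J_2+J_3$ finishes the proof.

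The step I expect to be the main obstacle is the second and third moves above. Each must be carried out \emph{without} sacrificing a power of $1/(1-\gamma)$: this forces the one-sided variance comparison in place of the lossy $\text{Var}(X)\le 2\,\text{Var}(Y)+2\,\text{Var}(X-Y)$, and it forces the law of total variance to be used so that its intrinsic $\gamma^2$ factor is retained (a cruder accounting would reintroduce the extra $1/(1-\gamma)$ that costs \citet{liu2020regret} a factor of $(1-\gamma)^{-1}$ in the regret). A secondary difficulty is that $\text{Regret}'(T)$ occurs on both sides of the bound for $\sum_t\text{Var}_{\PP_{t-1}}(\vvalue_{t-1})(s_t,a_t)$, hence also of $I_4$; the lemma therefore has to keep the coefficient of $\text{Regret}'(T)$ small enough that, after combining it with Lemmas~\ref{theorem: I_1}--\ref{theorem: J_2}, the resulting self-referential inequality for $\text{Regret}'(T)$ in the proof of Theorem~\ref{thm:1} can be solved.
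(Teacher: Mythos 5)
Your proposal is correct and follows essentially the same route as the paper: the same three-way split of the bonus, Cauchy--Schwarz plus pigeonhole for each piece, the $L_1$-concentration and one-sided variance comparison (the paper's Lemma~\ref{lemma:estimate-variance-gap}) to reduce $\sum_t\VV_{t-1}(s_t,a_t)$ to $\sum_t\VV^{\pi}_t(s_t,a_t)$ plus a $2\,\text{Regret}'(T)/(1-\gamma)$ term, and the law of total variance (the paper's Lemma~\ref{lemma:total-variance}) for the latter sum. The only mechanical difference is that you telescope the conditional-variance identity directly with one Azuma step, while the paper windows the sum over $H=O(1/(1-\gamma))$ steps and applies Bernstein to $H$ interleaved subsequences to handle the dependence; both implementations give the same $O(T/(1-\gamma))$ bound.
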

Finally, $I_5$ is the summation of a martingale difference sequence. By Azuma-Hoeffding inequality, with probability at least $1-\delta$, we have
\begin{align}
    I_5&\leq \frac{\sqrt{2T\log(1/\delta)}}{1-\gamma}.\label{eq:I_3}
\end{align} 
Substituting the upper bounds of terms $I_1$ to $I_5$ from Lemma~\ref{theorem: I_1} to Lemma \ref{theorem: I_5}, as well as \eqref{eq:I_3}, into \eqref{eq:sum-5-term}, and taking a union bound to let all the events introduced in Lemma \ref{theorem: I_1} to Lemma \ref{theorem: I_5} and \eqref{eq:I_3} hold, we have with probability at least $1-{20TU^2\delta}/{(1-\gamma)^2}$, the following inequality holds: 
\begin{align}
    (1-\gamma)\text{Regret}'(T)
     \leq \frac{160S^2A^{1.5}U^{3.5}}{(1-\gamma)^{2.5}}+\frac{54U\sqrt{SAT}}{\sqrt{1-\gamma}}+\frac{2\sqrt{2TU}}{1-\gamma}+12U\sqrt{\frac{SA \text{Regret}'(T)}{1-\gamma}}.\label{eq:xx1}
\end{align}
Using the fact that $x \leq a+b\sqrt{x}\Rightarrow x \leq 1.1a+ 4b^2$, \eqref{eq:xx1} can be further bounded as follows
\begin{align}
    \text{Regret}(T)&\leq \text{Regret}'(T)\notag\\
    &\leq \frac{752S^2A^{1.5}U^{3.5}}{(1-\gamma)^{3.5}}+\frac{60U\sqrt{SAT}}{(1-\gamma)^{1.5}}+\frac{4\sqrt{TU}}{(1-\gamma)^2}.\notag
\end{align}
This completes our proof.
\subsection{{Proof of Theorem \ref{thm:2}}}\label{section: second}

\begin{figure*}[!h]
    \centering
    \begin{tikzpicture}[node distance=1.5cm,>=stealth',bend angle=45,auto]

  \tikzstyle{place}=[circle,thick,draw=gray!75,fill=gray!50,minimum size=6mm]
\tikzset{every loop/.style={min distance=15mm, font=\footnotesize}}
  \begin{scope}[xshift=-3.5cm]
    
    \node [place, label] (c0){$\state_{1,0}$};
    \coordinate [right of = c0, label = center:{$\vdots\ \ \vdots\ \ \vdots$}] (c1) {};
    \node [place] (c21) [right of=c1]{$\state_{1,1}$};
    \path[->] (c0)
    edge [in=120,out=60, draw=blue!100] node[below]{$1-\gamma+\epsilon$} (c21)
    edge [in=170,out=110, min distance = 6cm, loop, draw=blue!100] node[above] {$\gamma - \epsilon$} ()
    edge [in=-120,out=-60] node[above]{$1-\gamma$} (c21)
    edge [in=-170,out=-110, min distance = 6cm, loop] node[above] {$\gamma$} ()
    ;
    \coordinate [right of = c21] (c2) {};
    \node [place] (c3) [right of=c2]{$\state_{2,0}$};
    \path[->] (c21)
    edge [in=180,out=0, , draw=blue!100] node[above]{$1-\gamma$} (c3)
    edge [in=120,out=60, min distance = 6cm, loop, , draw=blue!100] node[above] {$ \gamma$} ()
    ;
    \coordinate[right of=c3, label = center:{$\vdots\ \ \vdots\ \ \vdots$}] (e1) {}; 
    \node [place] [right of=e1] (c4){$\state_{2,1}$};
    \path[->] (c3)
    edge [in=120,out=60, , draw=blue!100] node[below]{} (c4)
    edge [in=170,out=110, min distance = 6cm, loop, , draw=blue!100] node[above] {} ()
    edge [in=-120,out=-60] node[above]{} (c4)
    edge [in=-170,out=-110, min distance = 6cm, loop] node[above] {} ()
    ;
    \coordinate[below of=c4] (c5) {};
    \path[->] (c4)
    edge [in=90,out=270, , draw=blue!100] node[above]{} (c5)
    ;
    \node [place] [below of=c5] (c6){$\state_{S-1,0}$};
    \path[->] (c5)
    edge [in=90,out=270, , draw=blue!100] node[above]{} (c6)
    ;
    \coordinate[left of=c6, label = center:{$\vdots\ \ \vdots\ \ \vdots$}] (e10) {};
    \node [place] [left of=e10] (c7){$\state_{S-1, 1}$};
     \path[->] (c6)
    edge [in=60,out=120, , draw=blue!100] node[below]{} (c7)
    edge [in=10,out=70, min distance = 6cm, loop, , draw=blue!100] node[above] {} ()
    edge [in=-60,out=-120] node[above]{} (c7)
    edge [in=-10,out=-70, min distance = 6cm, loop] node[above] {} ()
    ;
    \coordinate[left of=c7] (e11) {};
    \node [place] [left of=e11] (c8){$\state_{S,0}$};
    \path[->] (c7)
    edge [in=0,out=180, , draw=blue!100] node[above]{} (c8)
    edge [in=60,out=120, min distance = 6cm, loop, , draw=blue!100] node[above] {} ()
    ;
        \coordinate[left of=c8, label = center:{$\vdots\ \ \vdots\ \ \vdots$}] (e12) {};
    \node [place] [left of=e12] (c9){$\state_{S,1}$};
     \path[->] (c8)
    edge [in=60,out=120, , draw=blue!100] node[below]{} (c9)
    edge [in=10,out=70, min distance = 6cm, loop, , draw=blue!100] node[above] {} ()
    edge [in=-60,out=-120] node[above]{} (c9)
    edge [in=-10,out=-70, min distance = 6cm, loop] node[above] {} ()
    ;
    \path[->](c9)
    edge [in=270,out=90, , draw=blue!100] node[above]{} (c0)
    edge [in=150,out=-150, min distance = 6cm, loop, , draw=blue!100] node[above] {} ()
    ;
  \end{scope}

\end{tikzpicture}
\vspace*{-14mm}
    \caption{A class of hard-to-learn MDPs considered in Theorem \ref{thm:2}. The MDP can be regarded as a combination of $S$ two-state MDPs, each of which is an MDP illustrated on the top-left corner. In addition, the $i$-th two-state MDP has the $a_i^*$-th action as its optimal action. The blue arrows represent the optimal actions in different states. $\epsilon = \sqrt{A(1-\gamma)/K}/24$. }
    \label{fig:hardmdp}
\end{figure*}
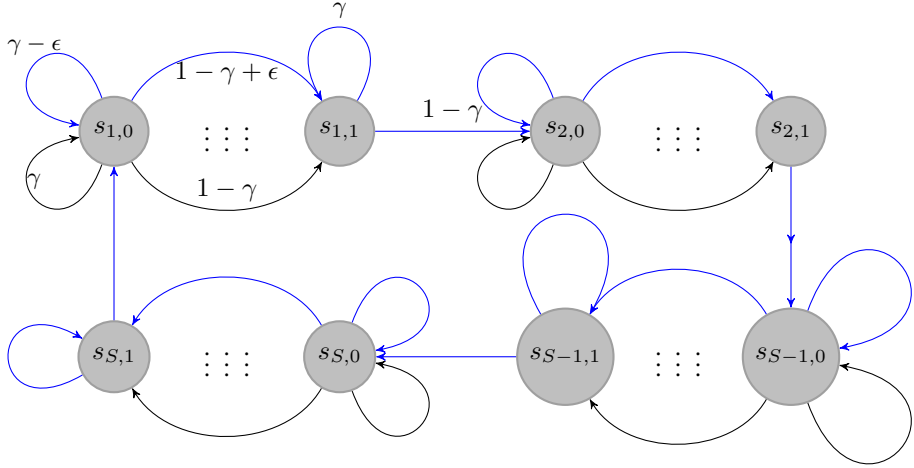

In this subsection, we provide the proof of Theorem \ref{thm:2}. The proof of the lower bound is based on constructing a class of hard MDPs. Specifically, the state space $\cS$ consists of $2S$ states $\{s_{i,0},s_{i,1}\}_{i\in[S]}$ and the action space $\cA$ contains $A$ actions. The reward function $r$ satisfies that $r(s_{i,0},a)=0$ and $r(s_{i,1},a)=1$ for any $a\in\cA,i\in[S]$. The probability transition function $\PP$ is defined as follows.
\begin{align}
    &\PP(s_{i,1}|s_{i,0},a)=1-\gamma+\ind_{a=a^*_i} \frac{1}{24}\sqrt{\frac{A(1-\gamma)}{K}},\PP(s_{i,1}|s_{i,1},a)=\gamma, \notag\\
    &\PP(s_{i,0}|s_{i,0},a)=\gamma-\ind_{a=a^*_i} \frac{1}{24}\sqrt{\frac{A(1-\gamma)}{K}},\PP(s_{i+1,0}|s_{i,1},a)=1-\gamma, \notag
\end{align}
where we assume $s_{S+1,0}=s_{1,0}$ for simplicity and $a^*_i$ is the optimal action for state $s_{i,0}$. The MDP is illustrated in Figure \ref{fig:hardmdp}, which can be regarded as $S$ copies of  the ``single" two-state MDP arranged in a circle. The two-state MDP is the same as that proposed in \cite{liu2020regret}. Each of the two-state MDP has two states and one ``optimal" action $a_i^*$ satisfied $\PP(s_{i,1}|s_{i,0},a_i^*)=1-\gamma+\epsilon$. 
Compared with the MDP instance in \cite{jaksch2010near}, both instances use $S$ copies of a single MDP. However, unlike the MDP in \cite{jaksch2010near}  which only has one ``optimal" action among all $SA$ actions, our MDP which has in total $S$ ``optimal" actions, which makes it harder to analyze. 

Now we begin to prove our lower bound. Let $\EE_{\ab^*}[\cdot]$ denote the expectation conditioned on one fixed selection of $\ab^* = (a_1^*,\dots, a_S^*)$. We introduce a shorthand notation $\EE^*$ to denote $\EE^*[\cdot] = 1/A^S\cdot\sum_{\ab^* \in \cA^S}\EE_{\ab^*}[\cdot]$. 
Here $\EE^*$ is the average value of expectation over the randomness from MDP defined by different optimal actions. From now on, we aim to lower bound $\EE^*[\text{Regret}(T)]$, since once $\EE^*[\text{Regret}(T)]$ is lower bounded, $\EE[\text{Regret}(T)]$ can be lower bounded by selecting $a_1^*,\dots, a_S^*$ which maximizes  $\EE[\text{Regret}(T)]$. We set $T = 10SK$ in the following proof. Based on the definition of $\EE^*$, we have the following lemma.
\begin{lemma}\label{lemma:transition}
The expectated regret $\EE^*[\text{Regret}(T)]$ can be lower bounded as follows:
\begin{align}
\EE^*[\text{Regret}(T)]\ge \EE^*\bigg[\sum_{t=1}^{T} \vvalue^*(s_t) - \frac{\reward(s_t,a_t)}{1-\gamma}\bigg]-\frac{4}{(1-\gamma)^2}.\notag
\end{align}
\end{lemma}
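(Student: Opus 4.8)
The plan is to pass from the regret as defined (a sum of value-function gaps) to a quantity controlled by the \emph{reward collected along the trajectory}, which is what the rest of the lower-bound argument analyzes. Concretely, I want to show
\[
\EE^*[\text{Regret}(T)] = \EE^*\Big[\sum_{t=1}^T \big(\vvalue^*(s_t) - \vvalue^\pi_t(s_t)\big)\Big] \ge \EE^*\Big[\sum_{t=1}^T \vvalue^*(s_t) - \frac{\reward(s_t,a_t)}{1-\gamma}\Big] - \frac{4}{(1-\gamma)^2}.
\]
The first equality is just Definition~\ref{def:regret}, so the work is to lower bound $-\sum_t \vvalue^\pi_t(s_t)$ by $-\frac{1}{1-\gamma}\sum_t \reward(s_t,a_t) - \frac{4}{(1-\gamma)^2}$, i.e.\ to show
\[
\EE^*\Big[\sum_{t=1}^T \vvalue^\pi_t(s_t)\Big] \le \frac{1}{1-\gamma}\EE^*\Big[\sum_{t=1}^T \reward(s_t,a_t)\Big] + \frac{4}{(1-\gamma)^2}.
\]

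The key step is to rewrite $\vvalue^\pi_t(s_t)$ using its definition as the discounted sum of future rewards: $\vvalue^\pi_t(s_t) = \EE\big[\sum_{i\ge 0}\gamma^i \reward(s_{t+i},a_{t+i}) \mid \text{history up to }t\big]$. Summing over $t$ and taking expectations, I would swap the order of summation and reindex: each reward $\reward(s_{u},a_u)$ for a fixed time $u$ gets counted with discount $\gamma^{u-t}$ for every $t \le u$, plus contributions from $t$'s beyond $T$ that reach past $T$. Splitting the double sum into the ``diagonal band'' $1\le t\le u\le T$ and the tail $u>T$, the in-range part gives $\sum_{u=1}^T \reward(s_u,a_u)\sum_{t=1}^u \gamma^{u-t} \le \frac{1}{1-\gamma}\sum_{u=1}^T \reward(s_u,a_u)$ since $\sum_{j\ge 0}\gamma^j = \frac{1}{1-\gamma}$ and rewards are nonnegative. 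The tail part, where $t\le T < u$, contributes at most $\sum_{t=1}^T \sum_{i : t+i>T}\gamma^i \cdot 1 \le \sum_{t=1}^T \gamma^{T-t+1}\cdot\frac{1}{1-\gamma} \le \frac{1}{(1-\gamma)^2}$ using $\reward\le 1$ and the geometric tail bound; being slightly more careful with the constant (bounding $\sum_{t=1}^T\gamma^{T+1-t}\le \frac{1}{1-\gamma}$) still lands inside $\frac{4}{(1-\gamma)^2}$ comfortably, which is why the stated slack is $4/(1-\gamma)^2$ rather than the tighter $1/(1-\gamma)^2$ — there is room to spare. Taking $\EE^*$ of the resulting pointwise-in-trajectory inequality (the bound holds for every realization, so it survives averaging over both the MDP randomness and the choice of $\ab^*$) gives the claim.

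There is really no probabilistic subtlety here: the inequality $\sum_{t=1}^T\vvalue^\pi_t(s_t)\le \frac{1}{1-\gamma}\sum_{t=1}^T\reward(s_t,a_t)+\frac{4}{(1-\gamma)^2}$ holds almost surely once we condition on the entire trajectory and use the tower property to evaluate the conditional expectations $\vvalue^\pi_t(s_t)=\EE[\sum_{i\ge0}\gamma^i\reward(s_{t+i},a_{t+i})\mid s_1,a_1,\dots,s_t]$, then take an outer expectation. The only mild care needed is the bookkeeping of which reward terms fall inside $\{1,\dots,T\}$ versus the overflow past $T$; the overflow is what produces the additive $O(1/(1-\gamma)^2)$ correction. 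So I do not expect a genuine obstacle — the ``hard part'' is purely the reindexing of the double sum and making sure the tail estimate is clean enough to fit under the advertised constant $4$. This lemma is a preparatory reduction; the substantive lower-bound work (relating $\vvalue^*(s_t)-\reward(s_t,a_t)/(1-\gamma)$ across the $S$ two-state gadgets to the information-theoretic cost of identifying each $a_i^*$) comes afterward.
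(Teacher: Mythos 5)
Your proposal matches the paper's proof essentially line for line: both replace $\vvalue^{\pi}_t(s_t)$ by the realized discounted reward sum via the tower property, reindex the double sum so each $\reward(s_u,a_u)$ with $u\le T$ carries a coefficient at most $\sum_{j\ge 0}\gamma^j = 1/(1-\gamma)$ (using $\reward\ge 0$), and control the overflow past step $T$ by a geometric tail of order $1/(1-\gamma)^2$, comfortably inside the stated $4/(1-\gamma)^2$. The only quibble is that the inequality as you display it with $\vvalue^{\pi}_t(s_t)$ on the left holds in expectation rather than almost surely (the almost-sure statement is for the reindexed realized reward sums), but your surrounding discussion of the tower property makes clear you mean exactly that, so this is a wording issue rather than a gap.
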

By Lemma \ref{lemma:transition}, it suffices to lower bound  $\sum_{t=1}^{T} [\vvalue^*(s_t) - \reward(s_t,a_t)/(1-\gamma)]$, which is $\text{Regret}^{\text{Liu}}(T)$ defined in \cite{liu2020regret}. When an agent visits the state set $\{s_{j,0}, s_{j,1}\}$ for the $i$-th time, we denote the state in $\{s_{j,0}, s_{j,1}\}$ it visited as $X_{j,i}$, and the following action selected by the agent as $A_{j,i}$. 
Let $T_j$ be the number of steps for the agent staying in $\{s_{j,0}, s_{j,1}\}$ in the total $T$ steps. Then the regret can be further decomposed as follows: 
\begin{align}
\EE^*\bigg[\sum_{t=1}^{T} \vvalue^*(s_t) - \frac{\reward(s_t,a_t)}{1-\gamma}\bigg]
&=\sum_{j=1}^{S} \EE^*\bigg[\sum_{i=1}^{T_j}  \vvalue^*(X_{j,i})-\frac{\reward(X_{j,i},A_{j,i})}{1-\gamma}\bigg]=I_1 + I_2 + I_3,\notag
\end{align}
where 
\begin{align}
    &I_1 = \sum_{j=1}^{S} \EE^*\bigg[\sum_{i=1}^{K}  \vvalue^*(X_{j,i})-\frac{\reward(X_{j,i},A_{j,i})}{1-\gamma}\bigg],\notag \\
    &I_2 = \sum_{j=1}^{S} \EE^*\bigg[\sum_{i=K+1}^{T_j}  \vvalue^*(X_{j,i})-\frac{\reward(X_{j,i},A_{j,i})}{1-\gamma}\bigg| T_j> K\bigg] \cdot \PP^*[T_j> K],\notag \\
    &I_3 = -\sum_{j=1}^{S} \EE^*\bigg[\sum_{i=T_j+1}^{K}  \vvalue^*(X_{j,i})-\frac{\reward(X_{j,i},A_{j,i})}{1-\gamma}\bigg| T_j< K\bigg]\cdot \PP^*[T_j< K].\notag
\end{align}
Note that $I_1$ essentially represents the regret over $S$ two-state MDPs in their first $K$ steps, and it can be lower bounded through the following lemma. 
\begin{lemma}\label{lemma:suhao}
If $K\ge 10SA/(1-\gamma)^4$, then for each $j\in [S]$, we have
\begin{align}
    &\EE^*\bigg[\sum_{i=1}^K (1-\gamma)\vvalue^*(X_{j,i})-\reward(X_{j,i},A_{j,i})\bigg]\ge \frac{\sqrt{AK}}{2304\sqrt{1-\gamma}}-\frac{1}{1-\gamma} .\notag
\end{align}
\end{lemma}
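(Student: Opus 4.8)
The claim is that over the first $K$ visits to any single two-state MDP component, the (Liu-style) regret is at least $\Omega(\sqrt{AK}/\sqrt{1-\gamma})$ minus a lower-order term. The plan is to reduce this to a bandit-style information-theoretic argument, essentially reproducing the two-state lower bound of \citet{liu2020regret} for one copy $j$, being careful that the copy index is fixed and the remaining copies contribute nothing to this particular sum. First I would compute the optimal value function explicitly on the two-state sub-MDP: since $s_{j,1}$ has reward $1$ and transitions to $s_{j,1}$ w.p. $\gamma$ and leaves the component w.p. $1-\gamma$, and $s_{j,0}$ has reward $0$ and under the optimal action $a_j^*$ reaches $s_{j,1}$ w.p. $1-\gamma+\epsilon$, one gets closed-form expressions for $\vvalue^*(s_{j,0})$ and $\vvalue^*(s_{j,1})$; the key quantity is the per-step suboptimality gap $\Delta(s,a) = (1-\gamma)\vvalue^*(s) - \reward(s,a)$, which is $0$ for the optimal action at $s_{j,0}$ and is $\Theta(\epsilon/(1-\gamma)) = \Theta(\sqrt{A/(K(1-\gamma))})$... wait, let me recompute the scaling: with $\epsilon = \frac{1}{24}\sqrt{A(1-\gamma)/K}$ the gap should come out to $\Theta(\sqrt{A/(K(1-\gamma))})$ per ``bad'' action choice at a $0$-reward state, so that $K$ visits of which a constant fraction are to $s_{j,0}$ yields $\sqrt{AK/(1-\gamma)}$-type regret once we show the learner cannot identify $a_j^*$ quickly.

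Next I would set up the change-of-measure argument. Condition on the visits to component $j$ and treat the learner, restricted to this component, as a multi-armed bandit problem with $A$ arms where only arm $a_j^*$ is ``good.'' Under $\EE^*$ (uniform average over which arm is optimal), the standard argument is: let $N_a$ be the number of times action $a$ is selected at state $s_{j,0}$ among the first $K$ visits; the regret on copy $j$ is lower bounded by (gap) $\times \EE^*[\text{number of visits to } s_{j,0} \text{ with a suboptimal action}]$, plus we must also account for the fact that the learner might linger in $s_{j,1}$. Then a Pinsker / KL-divergence bound shows that $\EE^*[N_{a_j^*}]$ cannot be much larger than $K/A$ unless the learner has gathered $\Omega(1/\epsilon^2)$ samples distinguishing $a_j^*$ from a reference action, and with only $K$ visits total and $\epsilon^2 = \Theta(A(1-\gamma)/K)$ this gives $\EE^*[N_{a_j^*}] \le (1+o(1)) K/A + O(\sqrt{\cdot})$, so a constant fraction of the $s_{j,0}$-visits use a suboptimal action. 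Combining the gap with this count yields the $\sqrt{AK}/(2304\sqrt{1-\gamma})$ term, and the $-1/(1-\gamma)$ slack absorbs the boundary/low-order terms (e.g. the value at $s_{j,1}$, the $\pm 1$ from truncating at the $K$-th visit, conditioning corrections).

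The main obstacle I expect is the coupling between the two states within the component and the control of how the $K$ ``visits'' are distributed between $s_{j,0}$ and $s_{j,1}$: the learner does not freely choose to be at $s_{j,0}$, and time spent at $s_{j,1}$ (where every action is optimal, reward $1$) contributes zero gap, so one needs a lower bound on the expected number of $s_{j,0}$-visits among the first $K$ component-visits — this should follow because from $s_{j,1}$ the chain moves out of the component w.p. $1-\gamma$ each step and re-enters at $s_{j,0}$, so visits alternate in a controlled way and a constant fraction are to $s_{j,0}$. A secondary technical point is making the KL computation rigorous when the learner's policy is adaptive and history-dependent across all $S$ components simultaneously; the clean way is to fix everything outside component $j$ by a coupling and note that, conditioned on the entries into component $j$, the within-component process is a genuine bandit instance, so the divergence decomposition $\mathrm{KL}(\PP_{\ab^*} \| \PP_{\text{ref}}) \le \EE[N_{a_j^*}] \cdot \mathrm{kl}(1-\gamma+\epsilon \,\|\, 1-\gamma) = \Theta(\epsilon^2/(1-\gamma)) \cdot \EE[N_{a_j^*}]$ goes through. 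I would present these as the three lemmas-within-a-lemma (value/gap computation, visit-count lower bound, and the change-of-measure bound on $\EE^*[N_{a_j^*}]$) and then assemble them, deferring the arithmetic of tracking the constant $2304$ and the $-1/(1-\gamma)$ remainder to the computation.
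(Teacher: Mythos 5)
Your plan matches the paper's proof essentially step for step: the paper likewise solves the Bellman optimality equations to get $\vvalue^*(s_{j,0})$ and $\vvalue^*(s_{j,1})$ in closed form, introduces the counts $N_0$, $N_1$, $N_0^*$ of visits to $s_{j,0}$, to $s_{j,1}$, and to $s_{j,0}$ with the optimal action, controls the visit split via $\EE[N_1]\le K/2+1/(2(1-\gamma))+\epsilon\EE[N_0^*]/(1-\gamma)$ and bounds $\EE[N_0^*]$ by exactly the change-of-measure/KL argument you describe (imported from Theorem 5 of Jaksch et al.\ 2010 rather than rederived via Pinsker), and then assembles the arithmetic under $K\ge 10SA/(1-\gamma)^4$, $\gamma\ge 2/3$, $A\ge 30$. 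The only real difference is presentational: the paper writes the regret exactly as $\EE[N_0]\vvalue^*(s_{j,0})+\EE[N_1](\vvalue^*(s_{j,1})-1/(1-\gamma))$ instead of your gap-times-count accounting, which is equivalent.
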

This lemma shows that the expected regret of first $K$ steps on states $s_{j,0}$ and $s_{j,1}$ is at least $ \tilde{\Omega}\big(\sqrt{AK}/(1-\gamma)^{0.5}-1/(1-\gamma)\big)$. Therefore by Lemma \ref{lemma:suhao}, we have
\begin{align}
    I_1 &= \sum_{j=1}^{S} \EE^*\bigg[\sum_{i=1}^{K}  \vvalue^*(X_{j,i})-\frac{\reward(X_{j,i},A_{j,i})}{1-\gamma}\bigg]\ge  \frac{\sqrt{SAT}}{2304\sqrt{10}(1-\gamma)^{1.5}}-\frac{S}{(1-\gamma)^2}.\label{eq:i1}
\end{align}
To bound $I_2$, we need the following lemma. 
\begin{lemma}\label{lemma:remain}
With probability  at least $1-2ST\delta\log T/(1-\gamma)$, for each $j\in[S]$ and $K+1 \leq t \leq T$, we have
\begin{align}
    &\sum_{i=K+1}^{t} \vvalue^*(X_{j,i})-\frac{\reward(X_{j,i},A_{j,i})}{1-\gamma}\ge - \frac{\sqrt{2t\log ({1}/{\delta})\log T}}{(1-\gamma)^{1.5}}-\frac{4}{(1-\gamma)^{2}}.\notag
\end{align}
\end{lemma}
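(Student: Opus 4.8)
The plan is to exhibit, for each fixed $j$, the partial sums $\sum_{i=K+1}^{t}[\vvalue^*(X_{j,i}) - \reward(X_{j,i},A_{j,i})/(1-\gamma)]$ as a supermartingale (up to a small drift) in the index $t$, and then apply a Azuma--Hoeffding / Freedman-type maximal inequality to control the minimum over $t\in\{K+1,\dots,T\}$. The first step is to recall the structure of the two-state sub-MDP on $\{s_{j,0},s_{j,1}\}$: each visit contributes a term that is either $\vvalue^*(s_{j,0}) - 0$ or $\vvalue^*(s_{j,1}) - 1/(1-\gamma)$, both of which are bounded in absolute value by $O(1/(1-\gamma))$ (using $0\le \vvalue^*\le 1/(1-\gamma)$ and $r\in\{0,1\}$). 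Crucially, one must check that the \emph{conditional expectation} of each increment, given the history up to the $i$-th visit, is nonnegative — i.e. that playing \emph{any} action never makes you worse off in expectation relative to the value function, because the per-step Bellman residual $\vvalue^*(x) - r(x,a) - \gamma[\PP\vvalue^*](x,a)$ is nonnegative for every $(x,a)$ with equality at $a = a^*_j$. This is exactly the calculation done in Lemma \ref{lemma:suhao} and I would reuse it; combined with a telescoping argument for the discounting it shows the increments have nonnegative conditional mean, so the partial sums dominate a martingale.

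Next I would set up the martingale more carefully. Let $M_{j,t} = \sum_{i=K+1}^{t}\big(\vvalue^*(X_{j,i}) - \reward(X_{j,i},A_{j,i})/(1-\gamma) - \mu_{j,i}\big)$ where $\mu_{j,i}\ge 0$ is the conditional mean of the $i$-th increment; then $M_{j,t}$ is a martingale with bounded differences $|M_{j,t}-M_{j,t-1}| \le c/(1-\gamma)$ for an absolute constant $c$. Since the number of visits $T_j$ to block $j$ is itself random but always at most $T$, I would either stop the martingale at time $T_j\wedge T$ or, more simply, extend it by zero increments past $T_j$ so that it is defined for all $t\le T$. A union bound over $t$ is wasteful; instead apply the maximal version of Azuma--Hoeffding (or the optional-stopping form) to get that, with probability at least $1-\delta'$, $\min_{K+1\le t\le T} M_{j,t} \ge -\sqrt{2T\log(1/\delta')}\cdot c/(1-\gamma)$ for a single block $j$. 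Taking $\delta'$ on the order of $\delta/(ST\log T)$ and a union bound over the $S$ blocks (and, if the stopping-time argument needs it, over the possible values of $T_j$, which contributes the extra $\log T$ factor quoted in the statement) yields the claimed failure probability $1-2ST\delta\log T/(1-\gamma)$. Finally, because $\sum_i \mu_{j,i}\ge 0$, we have $\sum_{i=K+1}^t[\vvalue^*(X_{j,i}) - \reward(X_{j,i},A_{j,i})/(1-\gamma)] \ge M_{j,t}$, and absorbing the lower-order boundary/truncation terms into the additive $-4/(1-\gamma)^2$ gives the bound.

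The main obstacle I anticipate is the bookkeeping around the random number of visits $T_j$ and the dependence between blocks: the visit counts to the different two-state MDPs are coupled through the single trajectory, and the ``$i$-th visit to block $j$'' filtration must be defined so that $A_{j,i}$ is measurable and the conditional-mean-nonnegativity genuinely holds. Getting a clean maximal inequality that is uniform over $K+1\le t\le T$ without an $S$- or $T$-sized union bound blowing up the constants — i.e. squeezing everything into the stated $\sqrt{2t\log(1/\delta)\log T}/(1-\gamma)^{1.5}$ form, with the right power of $(1-\gamma)$ coming from the $1/(1-\gamma)$ increment bound times the $1/\sqrt{1-\gamma}$ hidden in how $t$ relates to the effective horizon — is where the care is needed; the probabilistic content beyond that is a routine application of Azuma--Hoeffding.
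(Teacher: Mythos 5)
There is a genuine gap at the heart of your argument: the claim that each raw increment $\vvalue^*(X_{j,i})-\reward(X_{j,i},A_{j,i})/(1-\gamma)$ has nonnegative conditional mean is false, and it is exactly the hard part of the lemma. In this MDP one computes $\vvalue^*(s_{j,0})\approx \tfrac{\gamma}{(1-\gamma)(1+2\gamma)}$ and $\vvalue^*(s_{j,1})-\tfrac{1}{1-\gamma}\approx -\tfrac{\gamma}{(1-\gamma)(1+2\gamma)}$, so conditioned on $X_{j,i-1}=s_{j,1}$ the next increment has mean $\approx \tfrac{\gamma(1-2\gamma)}{(1-\gamma)(1+2\gamma)}<0$ for $\gamma\ge 2/3$ --- negative and of the same order $1/(1-\gamma)$ as the increments themselves. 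The nonnegativity you invoke is a property of the \emph{Bellman residual} $\vvalue^*(x)-\reward(x,a)-\gamma[\PP\vvalue^*](x,a)$, which is a different quantity; converting it into a statement about $\vvalue^*(x)-\reward(x,a)/(1-\gamma)$ forces you to write
\begin{align}
(1-\gamma)\Big[\vvalue^*(X_i)-\tfrac{\reward(X_i,A_i)}{1-\gamma}\Big]=\delta_i+\gamma\big([\PP\vvalue^*](X_i,A_i)-\vvalue^*(X_{i+1})\big)+\gamma\big(\vvalue^*(X_{i+1})-\vvalue^*(X_i)\big),\notag
\end{align}
with $\delta_i\ge 0$, a genuine martingale term, and a telescoping boundary term. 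Your ``telescoping argument'' gestures at this, but the conclusion you draw from it (``the partial sums dominate a martingale'' with nonnegative drift) is what needs to be, and is not, established; as written the step $\sum_i\mu_{j,i}\ge 0$ fails. Moreover, if you repair this via the display above and use your stated increment bound $c/(1-\gamma)$ on the martingale term, Azuma yields a deviation of order $\sqrt{t}/(1-\gamma)^2$ after dividing by $(1-\gamma)$, which is \emph{weaker} than the claimed $\sqrt{t\log T}/(1-\gamma)^{1.5}$ whenever $\log T\lesssim 1/(1-\gamma)$; your hand-waving about a ``$1/\sqrt{1-\gamma}$ hidden in how $t$ relates to the effective horizon'' does not materialize, since $t$ ranges up to $T$ independently of the horizon. (The repaired route can actually beat the lemma if you additionally observe that within block $j$ the function $\vvalue^*$ takes only two values differing by $O(1)$, so the martingale increments are $O(1)$ rather than $O(1/(1-\gamma))$; but that observation is absent from your proposal and is essential for the exponent.)

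For contrast, the paper avoids the drift issue entirely: it defines truncated discounted returns $Y_{j,i}=\sum_{k=0}^{H}\gamma^k\reward(X_{j,i+k},A_{j,i+k})$ with $H=\lfloor\log T/(1-\gamma)\rfloor+1$, uses $\EE[Y_{j,i}\,|\,\cdot\,]=\vvalue^{\pi}(X_{j,i})\le\vvalue^*(X_{j,i})$, applies Azuma--Hoeffding separately to each of the $H$ interleaved subsequences $\{Y_{jH+k}\}_j$ (to decouple the overlapping windows), and then recovers $\sum_i\reward(X_i,A_i)/(1-\gamma)$ from $\sum_i Y_i$ by reweighting each reward by $\sum_{k=0}^{\min(H,i-K-1)}\gamma^k$, which produces the $-4/(1-\gamma)^2$ boundary term and the $\log T/(1-\gamma)$ factors in the deviation. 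Your union bound over blocks and over $t$ matches the stated failure probability, and your boundary-term accounting is plausible, but without either the corrected decomposition plus the $O(1)$-oscillation refinement, or the paper's truncation device, the proof does not go through.
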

Lemma \ref{lemma:remain} gives a crude lower bound of $I_2$. Taking expectation over Lemma \ref{lemma:remain} and taking summation over all states, we have
\begin{align}
    I_2&\ge \sum_{j=1}^{S}  \EE^*\bigg[\bigg(-\frac{\sqrt{2T_j\log ({1}/{\delta})\log T}}{(1-\gamma)^{1.5}} - \frac{4}{(1-\gamma)^2}\bigg)      \bigg|T_j> K\bigg]\PP^*[T_j> K]\notag\\
    &\qquad - \sum_{j=1}^{S} \frac{T}{1-\gamma}\cdot \frac{2ST\delta\log T}{(1-\gamma)^2}\notag \\
    &\ge \sum_{j=1}^{S}  \EE^*\bigg[-\frac{\sqrt{2T_j\log ({1}/{\delta})\log T}}{(1-\gamma)^{1.5}}\bigg] - \frac{4S}{(1-\gamma)^2}- \frac{2S^2T^2\delta\log T}{(1-\gamma)^2} \notag\\
    &\ge \sum_{j=1}^{S}  -\frac{\sqrt{2\EE^*[T_j]\log ({1}/{\delta})\log T}}{(1-\gamma)^{1.5}} - \frac{4S}{(1-\gamma)^2}- \frac{2S^2T^2\delta\log T}{(1-\gamma)^2}\notag\\
    &\ge -\frac{\sqrt{2ST\log ({1}/{\delta})\log T}}{(1-\gamma)^{1.5}}-\frac{4S}{(1-\gamma)^2}-\frac{2S^2T^2\delta\log T}{(1-\gamma)^2},\label{eq:i2}
 \end{align}
where the first inequality holds due to Lemma \ref{lemma:remain}, the second inequality holds since $1-2ST\delta\log T/(1-\gamma) \leq 1$ and $\EE[-X|Y]\PP(Y) \geq \EE[-X]$ when $X \geq 0$, the third inequality holds due to Jensen's inequality and the fact that $\sqrt{x}$ is a concave function, and the last inequality holds due to Jensen's inequality and the fact that $\sum_{j=1}^S \EE^*[T_j]=T$. To bound $I_3$, we need the following lemma, which suggests that when $K$ is large enough, $T_i>K$ happens with high probability: 
\begin{lemma}\label{lemma:length}
When $K\ge 10 A\log (1/\delta)/{(1-\gamma)^4},$ with probability at least $1-2S\delta$, for all $i\in [S]$, we have $T_i >K$.
\end{lemma}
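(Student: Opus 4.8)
## Proof Proposal for Lemma~\ref{lemma:length}

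\textbf{Setup and strategy.} The claim is that if $K \ge 10 A \log(1/\delta)/(1-\gamma)^4$, then with probability at least $1 - 2S\delta$, every block $\{s_{i,0}, s_{i,1}\}$ is visited at least $K$ times within the first $T = 10SK$ steps. By a union bound over $i \in [S]$, it suffices to show that for a fixed block $j$, the event $\{T_j < K\}$ has probability at most $2\delta$ (the factor $2S$ in the statement absorbs the union bound over $S$ blocks and a factor of $2$ from a two-sided or two-part argument). The plan is to analyze the sojourn dynamics: each time the agent is in block $j$, it spends a geometrically-distributed number of steps there before transitioning out (via the edge $s_{j,1} \to s_{j+1,0}$ with probability $1-\gamma$), and after leaving it must cycle through all other blocks before returning. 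I would bound the total number of steps consumed by a single "visit episode" to block $j$ and show that $T/S = 10K$ steps is, with high probability, enough to force at least $K$ visits.

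\textbf{Key steps.} First, I would make precise what one "visit" to block $j$ costs in terms of steps. Starting at $s_{j,0}$: the agent stays in $\{s_{j,0}, s_{j,1}\}$ — the self-loop probabilities are $\gamma$ (or $\gamma \pm \epsilon$), so each step the chain leaves the block only through $s_{j,1} \xrightarrow{1-\gamma} s_{j+1,0}$. The number of steps spent in the block per episode is therefore stochastically dominated by (a constant times) a geometric random variable with success probability $\Theta(1-\gamma)$, hence has expectation $O(1/(1-\gamma))$. Second, between leaving block $j$ and returning to it, the agent must traverse the other $S-1$ blocks; the total time for this "away" portion is a sum of $S-1$ such geometric-type sojourns, with expectation $O(S/(1-\gamma))$. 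So one full cycle (one visit to block $j$ plus the excursion back) has expected length $O(S/(1-\gamma))$, and the total number of visits to block $j$ in $T$ steps concentrates around $T(1-\gamma)/(O(S)) = \Omega(K(1-\gamma))$ — wait, this needs care: I actually want $T_j$, the number of \emph{steps} in the block, not the number of episodes, but Lemma~\ref{lemma:remain} and Lemma~\ref{lemma:length} use $T_j$ as step-count. Re-examining: since each episode in block $j$ lasts $\Theta(1/(1-\gamma))$ steps in expectation while each away-excursion lasts $\Theta(S/(1-\gamma))$ steps, a constant fraction $\Theta(1/S)$ of all $T = 10SK$ steps are spent in block $j$, giving $\EE[T_j] = \Theta(K)$. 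Third, I would upgrade this expectation bound to a high-probability bound. The cleanest route is to bound the probability that the \emph{away}-excursions are too long: each away-excursion sojourn in a single block is sub-exponential (geometric tail), so a sum of $\le K$ of them concentrates, and using $K \ge 10A\log(1/\delta)/(1-\gamma)^4$ — which is far more than the $\Omega(\log(1/\delta)/(1-\gamma)^2)$ one naively needs — a Chernoff/Bernstein bound for sums of geometric random variables gives that $K$ visits are completed within $T$ steps except with probability $\delta$.

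\textbf{Main obstacle.} The delicate point is handling the dependence on the agent's (arbitrary, possibly adversarial) policy: the action taken in $s_{j,1}$ does not affect the transition out of the block (the $1-\gamma$ exit probability from $s_{j,1}$ is action-independent), and in $s_{j,0}$ the action only perturbs the self-loop probability by $\pm\epsilon$, which is negligible compared to $1-\gamma$ since $\epsilon = \frac{1}{24}\sqrt{A(1-\gamma)/K} \ll 1-\gamma$ under the assumed lower bound on $K$. So the sojourn-time distributions are \emph{uniformly} (over all policies and all realizations of $\ab^*$) dominated by explicit geometric distributions, which is what makes the concentration argument policy-independent. I would set this domination up carefully as the first lemma-internal claim, then the rest is a routine Bernstein bound for sums of independent geometric variables (or a martingale argument tracking the count of block-$j$ visits as a function of elapsed time), followed by the union bound over $i \in [S]$. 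The constant $10$ and the power $(1-\gamma)^{-4}$ in the hypothesis give generous slack, so I would not optimize constants.
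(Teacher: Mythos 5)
Your proposal is correct in outline, but it takes a genuinely different route from the paper. You treat the problem as a renewal process: the sojourn time in each block is conditionally geometric with exit rate between $1-\gamma$ and $1-\gamma+\epsilon$, you invoke stochastic domination to make the tail bounds uniform over adaptive policies, and you apply sub-exponential concentration twice --- an upper tail on the total length of $N$ full cycles (so that enough cycles complete within $T=10SK$ steps) and a lower tail on the accumulated block-$j$ time over those cycles (so that $T_j\ge K$). The paper never touches sojourn-time distributions. It instead defines the bounded indicator $Y_{j,i}$ of whether the agent exits its current state on the $i$-th step spent in block $j$, applies Azuma--Hoeffding to get $\sum_{i=1}^{K}Y_{j,i}\le 3(1-\gamma)K$ and $\sum_{i=1}^{5K}Y_{j,i}\ge 4(1-\gamma)K$, and then uses a purely deterministic combinatorial fact about the ring topology: between consecutive returns to any fixed block the agent must traverse every other block, so the per-block exit counts $Z_{j,0}+Z_{j,1}$ differ by at most $2$ across all $j$. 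Pigeonhole produces one block with $T_{j^*}\ge T/S=10K>5K$ steps, hence at least $4(1-\gamma)K$ exits; the near-equality of exit counts transfers this to every block $k$, and the first concentration bound converts many exits back into $T_k>K$. The two arguments are essentially dual --- the geometric-sum tails you need are exactly binomial tails on the $Y$'s --- but the paper's version only ever concentrates sums of $\{0,1\}$-valued martingale differences over a deterministic index range, which sidesteps both the heavier tails and the random number of completed cycles in your scheme, at the price of an argument specific to the cyclic construction. Your version is more generic (it would survive on connectivity structures where the balanced-exit-count fact fails) but requires the more delicate treatment of adaptivity and random summand counts that you correctly flag; with that care taken, your plan goes through.
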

Notice that the difference of transition probability between the optimal action and suboptimal actions is $\sqrt{A(1-\gamma)}/24K$. In this case, when $T$ is large enough, $T_i$ is close to $T/S=10K$. Thus $I_3$ can be lower bounded as follows:
\begin{align}
    I_3&\ge  -\sum_{j=1}^{S}\frac{K}{1-\gamma}\PP^*[T_j< K]\ge -\frac{ST\delta}{5(1-\gamma)},\label{eq:i3}
\end{align}
where the first inequality holds due to $0\leq \reward(X_{j,i},A_{j,i})\leq 1$ and the second inequality holds due to Lemma \ref{lemma:length}. Finally, setting $\delta={1}/\big({4ST^2(1-\gamma)^2 \log T}\big)$, we can verify that the requirements of $K$ in Lemma \ref{lemma:suhao} and Lemma \ref{lemma:length} hold when $T$ satisfies $T\ge {100SAL}/{(1-\gamma)^4}$, and $L= \log {(300S^4T^2/ ((1-\gamma)^2\delta))}\log T$. Therefore, substituting $\delta={1}/\big({4ST^2(1-\gamma)^2 \log T}\big)$ into \eqref{eq:i2} and \eqref{eq:i3}, and combining \eqref{eq:i1}, \eqref{eq:i2}, \eqref{eq:i3} and Lemma \ref{lemma:transition}, we have
\begin{align}
    \EE[\text{Regret}(T)]\ge \frac{\sqrt{SAT}}{10000(1-\gamma)^{1.5}}-\frac{4\sqrt{STL}}{(1-\gamma)^{1.5}}-\frac{8S}{(1-\gamma)^2},\notag
\end{align}
which completes the proof of Theorem \ref{thm:2}.

\begin{section}{Conclusions and Future Work}\label{section: conclusion}
We proposed $\algname$, an online RL algorithm for discounted tabular MDPs. We show that the regret of $\algname$ can be upper bounded by $\tilde O(\sqrt{SAT}/(1-\gamma)^{1.5})$ and we prove a matching lower bound on the expected regret $\tilde \Omega(\sqrt{SAT}/(1-\gamma)^{1.5})$. There is still a gap between the upper and lower bounds when $T\leq \max\{S^3A^2/(1-\gamma)^4, SA/(1-\gamma)^4\}$, and we leave it as an open problem for future work.
\end{section}

\section*{Acknowledgments and Disclosure of Funding}
We thank Csaba Szepesv\'ari for a valuable suggestion on improving the presentation of the proof.
We thank the anonymous reviewers for their helpful comments. 
JH, DZ and QG are partially supported by the National Science Foundation CAREER Award 1906169, IIS-1904183, BIGDATA IIS-1855099 and AWS Machine Learning Research Award. The views and conclusions contained in this paper are those of the authors and should not be interpreted as representing any funding agencies.

\bibliographystyle{ims}
\bibliography{reference}
\section*{Checklist}


\begin{enumerate}

\item For all authors...
\begin{enumerate}
  \item Do the main claims made in the abstract and introduction accurately reflect the paper's contributions and scope?
    \answerYes{}
  \item Did you describe the limitations of your work?
    \answerYes{We discuss the limitations and potential future works in Section \ref{section: conclusion}}
  \item Did you discuss any potential negative societal impacts of your work?
    \answerNA{Our work provides a theoretical analysis of discounted MDPs, and there is no potential negative social impact.}
  \item Have you read the ethics review guidelines and ensured that your paper conforms to them?
    \answerYes{}
\end{enumerate}

\item If you are including theoretical results...
\begin{enumerate}
  \item Did you state the full set of assumptions of all theoretical results?
    \answerYes{}
	\item Did you include complete proofs of all theoretical results?
    \answerYes{}
\end{enumerate}

\item If you ran experiments...
\begin{enumerate}
  \item Did you include the code, data, and instructions needed to reproduce the main experimental results (either in the supplemental material or as a URL)?
    \answerNA{}
  \item Did you specify all the training details (e.g., data splits, hyperparameters, how they were chosen)?
    \answerNA{}
	\item Did you report error bars (e.g., with respect to the random seed after running experiments multiple times)?
    \answerNA{}
	\item Did you include the total amount of compute and the type of resources used (e.g., type of GPUs, internal cluster, or cloud provider)?
    \answerNA{}
\end{enumerate}

\item If you are using existing assets (e.g., code, data, models) or curating/releasing new assets...
\begin{enumerate}
  \item If your work uses existing assets, did you cite the creators?
    \answerNA{}
  \item Did you mention the license of the assets?
    \answerNA{}
  \item Did you include any new assets either in the supplemental material or as a URL?
    \answerNA{}
  \item Did you discuss whether and how consent was obtained from people whose data you're using/curating?
    \answerNA{}
  \item Did you discuss whether the data you are using/curating contains personally identifiable information or offensive content?
    \answerNA{}
\end{enumerate}

\item If you used crowdsourcing or conducted research with human subjects...
\begin{enumerate}
  \item Did you include the full text of instructions given to participants and screenshots, if applicable?
    \answerNA{}
  \item Did you describe any potential participant risks, with links to Institutional Review Board (IRB) approvals, if applicable?
    \answerNA{}
  \item Did you include the estimated hourly wage paid to participants and the total amount spent on participant compensation?
    \answerNA{}
\end{enumerate}

\end{enumerate}

\newpage
\appendix

\section{More Discussions on the Regret and Sample Complexity}\label{section:discussion}
\subsection{Converting
 Sample Complexity of Exploration to Regret}\label{app:conversion}
In this subsection, we shows the relationship between the sample complexity of exploration and the regret. 

The definition of regret in Defintion \ref{def:regret} is related to the ``sample complexity of exploration'' $N(\epsilon,\delta)$ \citep{kakade2003sample,lattimore2012pac,dong2019q}, which is the upper bound on the number of steps $t$ such that $\vvalue^*(s_t) - \vvalue^{\pi}_t(s_t)\ge \epsilon$ with probability at least $1-\delta$. Compared with the regret, sample complexity of exploration focuses on the sub-optimalities at all steps $t$, rather than the first $T$ steps, and ignores the small sub-optimalities. Though both  metrics have been used to describe the performance of an algorithm, these two metrics are not directly comparable. More specifically, algorithms with fewer but larger sub-optimalities will have a small sample complexity of exploration but a high regret. In contrast, algorithms with a lot of moderate sub-optimalities will have a high sample complexity of exploration but a low regret.

By the definition of the sample complexity exploration $N(\epsilon,\delta)$, with probability at least $1-\delta$, the number of steps $t$  where  $\vvalue^*(s_t) - \vvalue^{\pi}_t(s_t)\ge \epsilon$ is upper bounded by $N(\epsilon,\delta)$. Thus, for the regret within $T$ steps, we have following inequality:
\begin{align}
    \text{Regret}(T) &=  \sum_{t=1}^T \big[\vvalue^*(s_t) - \vvalue^\pi_t(s_t)\big]\notag \\
    &= \sum_{t \in [T],\vvalue^*(s_t) - \vvalue^{\pi}_t(s_t)\ge \epsilon }\big[\vvalue^*(s_t) - \vvalue^\pi_t(s_t)\big]\notag\\
    &\qquad + \sum_{t \in [T],\vvalue^*(s_t) - \vvalue^{\pi}_t(s_t)< \epsilon }\big[\vvalue^*(s_t) - \vvalue^\pi_t(s_t)\big] \notag \\
    & \leq \frac{N(\epsilon,\delta)}{1-\gamma}+ T\epsilon,\label{eq:sample-complexity-regret} 
\end{align}
where the inequality holds due to the definition of $N(\epsilon,\delta)$. Furthermore,if an algorithm achieve sample complexity $N(\epsilon,\delta)=O(B\epsilon^{-\alpha})$, then we can choose $\epsilon=T^{-1/(\alpha+1)}(1-\gamma)^{1/(\alpha+1)}B^{-1/(\alpha+1)}$ to minimize the \eqref{eq:sample-complexity-regret}. Thus, we have
\begin{align}
    \text{Regret}(T) &\leq \frac{N(\epsilon,\delta)}{1-\gamma}+ T\epsilon\\\notag
    &=O\Big(\frac{B\epsilon^{-\alpha}}{1-\gamma}+T\epsilon\Big)\notag\\
    &=O\big(B^{1/(\alpha+1)}(1-\gamma)^{-1/(\alpha+1)}T^{\alpha/(\alpha+1)}\big).\notag
\end{align}
Furthermore, the best result in sample complexity of exploration \citep{zhang2020model} achieves $\tilde{O}\Big({SA}/{\big((1-\gamma)^3\epsilon^2\big)}\Big)$ sample complexity and this result implies $\tilde{O}(S^{1/3}A^{1/3}(1-\gamma)^{-4/3}T^{2/3})$ regret, which is worse than our result by a $T^{1/6}$ factor.

\subsection{Comparison with the Regret in \cite{liu2020regret}}

Our definition is similar to that of \citet{liu2020regret}. Note that \citet{liu2020regret} define the regret as $\text{Regret}^{\text{Liu}}(T) = \sum_{t=1}^T \Delta_t$, where $\Delta_t = (1-\gamma)\vvalue^*(s_t) -\reward(s_t,a_t)$.
Comparing the definition in \citet{liu2020regret} with our definition, we can show that $(1-\gamma)\text{Regret}(T) \approx \text{Regret}^{\text{Liu}}(T)$ since
\begin{align}
   (1-\gamma)\sum_{t=1}^T  \vvalue^{\pi}_t(s_t)  &\approx (1-\gamma)\sum_{t=1}^T\sum_{i=0}^{\infty} \gamma^i \reward(s_{t+i},a_{t+i})
   \approx \sum_{t=1}^T  \reward(s_t,a_t)\notag,
\end{align}
where the first approximate equality holds due to  Azuma-Hoeffding inequality and the second approximate equality holds due to $0\leq \reward(s, a)\leq 1$. Therefore, our regret definition is equivalent to that in \cite{liu2020regret} up to a $1-\gamma$ factor.

\section{Proof of Lemmas in Section \ref{section: main}}\label{sec:aaa}

In this section, we prove Lemma \ref{lemma:UCB} to Lemma \ref{theorem: I_5}. For simplicity, we introduce the following shorthand notations:
\begin{align}
    &\VV^*(s,a)=\text{Var}_{s'\sim \PP(\cdot|s,a)}\big(\vvalue^*(s')\big),\notag\\
    &\VV^{\pi}_t(s,a)=\text{Var}_{s'\sim \PP(\cdot|s,a)}\big(\vvalue^{\pi}_{t+1}(s')\big),\notag\\
    &\VV_t(s,a)=\text{Var}_{s'\sim \PP_t(\cdot|s,a)}(V_{t}(s')),\notag\\
     &\VV_t^*(s,a)=\text{Var}_{s'\sim \PP_t(\cdot|s,a)}(V^*(s')).\notag
\end{align}
We start with a list of technical lemmas that will be used to prove Lemma \ref{lemma:UCB} to Lemma \ref{theorem: I_5}. We first provide the Azuma-Hoeffding and Bernstein inequalities. 

\begin{lemma}[Azuma–Hoeffding inequality, \citealt{cesa2006prediction}]\label{lemma:azuma}
Let $\{x_i\}_{i=1}^n$ be a martingale difference sequence with respect to a filtration $\{\cG_{i}\}$ satisfying $|x_i| \leq M$ for some constant $M$, $x_i$ is $\cG_{i+1}$-measurable, $\EE[x_i|\cG_i] = 0$. Then for any $0<\delta<1$, with probability at least $1-\delta$, we have 
\begin{align}
    \sum_{i=1}^n x_i\leq M\sqrt{2n \log (1/\delta)}.\notag
\end{align} 
\end{lemma}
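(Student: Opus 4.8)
The plan is to prove the bound by the standard exponential-moment (Chernoff) method: control the conditional moment generating function of each increment via Hoeffding's lemma, iterate over the filtration to obtain a sub-Gaussian tail for the sum, and finally solve for the threshold in terms of $\delta$.

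First I would fix $\lambda > 0$ and write $S_n = \sum_{i=1}^n x_i$. Applying Markov's inequality to the nonnegative random variable $e^{\lambda S_n}$ gives, for any $u > 0$,
\[
    \PP(S_n \geq u) \leq e^{-\lambda u}\,\EE\big[e^{\lambda S_n}\big].
\]
The crux is to bound the moment generating function $\EE[e^{\lambda S_n}]$. Since $S_{n-1}$ is $\cG_n$-measurable, the tower property yields
\[
    \EE\big[e^{\lambda S_n}\big] = \EE\Big[e^{\lambda S_{n-1}}\,\EE\big[e^{\lambda x_n}\mid \cG_n\big]\Big],
\]
so it suffices to control the conditional factor $\EE[e^{\lambda x_n}\mid\cG_n]$ and then recurse down to $i=1$.

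The main obstacle, and the technical heart of the argument, is the conditional Hoeffding estimate $\EE[e^{\lambda x_i}\mid \cG_i] \leq e^{\lambda^2 M^2/2}$. I would establish it using the hypotheses $\EE[x_i\mid\cG_i]=0$ and $|x_i|\leq M$ almost surely: by convexity of $t\mapsto e^{\lambda t}$ on $[-M,M]$,
\[
    e^{\lambda x_i} \leq \frac{M - x_i}{2M}\,e^{-\lambda M} + \frac{M + x_i}{2M}\,e^{\lambda M},
\]
and taking conditional expectation annihilates the $x_i$ term, leaving $\EE[e^{\lambda x_i}\mid\cG_i] \leq \cosh(\lambda M)$. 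The elementary inequality $\cosh(y)\leq e^{y^2/2}$ then gives the claimed $e^{\lambda^2 M^2/2}$. Substituting back and iterating the conditioning step $n$ times produces $\EE[e^{\lambda S_n}] \leq e^{n\lambda^2 M^2/2}$.

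Combining the two displays gives $\PP(S_n \geq u) \leq \exp\!\big(-\lambda u + n\lambda^2 M^2/2\big)$. Optimizing the exponent over $\lambda > 0$ by choosing $\lambda = u/(nM^2)$ yields the sub-Gaussian tail $\PP(S_n \geq u) \leq \exp\!\big(-u^2/(2nM^2)\big)$. Finally, setting $u = M\sqrt{2n\log(1/\delta)}$ makes the right-hand side exactly $\delta$, so with probability at least $1-\delta$ we have $S_n < M\sqrt{2n\log(1/\delta)}$, which is the asserted inequality. The quantitative work is entirely in the Hoeffding step; the only other point requiring care is the measurability and integrability needed to justify the tower-property factorization, which follows from the martingale-difference and $\cG_{i+1}$-measurability assumptions.
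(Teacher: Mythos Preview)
Your proof is the standard Chernoff--Hoeffding argument and is correct. The paper, however, does not supply its own proof of this lemma: it is stated with attribution to \citealt{cesa2006prediction} and used as a black box throughout the analysis. So there is nothing to compare against; you have simply filled in the classical textbook derivation that the paper cites rather than proves.
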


\begin{lemma}[Bernstein inequality, \citealt{cesa2006prediction}]\label{lemma:freedman}
Let $\{x_i\}_{i=1}^n$ be a martingale difference sequence with respect to a filtration $\{\cG_{i}\}$ satisfying $|x_i| \leq M$ for some constant $M$, $x_i$ is $\cG_{i+1}$-measurable, $\EE[x_i|\cG_i] = 0$. Suppose that
\begin{align}
    \sum_{i=1}^n \EE(x_i^2|\cG_{i}) \leq v\notag
\end{align}
for some constant $v$. Then for any $\delta >0$, with probability at least $1-\delta$, 
\begin{align}
    \sum_{i=1}^n x_i \leq \sqrt{2v \log(1/\delta)} + \frac{2M \log(1/\delta)}{3}.\notag
\end{align}
\end{lemma}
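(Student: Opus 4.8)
The plan is to prove this Bernstein-type martingale inequality via the standard exponential-moment (Chernoff) method. Write $S_k = \sum_{i=1}^k x_i$ and $W_k = \sum_{i=1}^k \EE[x_i^2 \mid \cG_i]$, so that by assumption $W_n \le v$ holds surely. The central object will be, for each fixed $\lambda \in (0, 3/M)$, the process
\begin{align}
    Z_k = \exp\Big(\lambda S_k - \frac{\lambda^2 W_k}{2(1-\lambda M/3)}\Big),\notag
\end{align}
which I would show is a supermartingale with $\EE[Z_0] = 1$; a Chernoff bound on $S_n$ together with an optimization over $\lambda$ then yields the claim.

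First I would establish the one-step conditional moment generating function estimate: for a mean-zero random variable $x$ with $|x| \le M$ and any $0 < \lambda < 3/M$,
\begin{align}
    \EE\big[e^{\lambda x} \mid \cG\big] \le \exp\Big(\frac{\lambda^2\, \EE[x^2\mid\cG]}{2(1-\lambda M/3)}\Big).\notag
\end{align}
This follows by expanding $e^{\lambda x} = 1 + \lambda x + \sum_{k\ge 2}(\lambda x)^k/k!$, taking conditional expectation so the linear term drops out (since $\EE[x\mid\cG]=0$), using $|\EE[x^k\mid\cG]| \le M^{k-2}\EE[x^2\mid\cG]$ and the elementary bound $k! \ge 2\cdot 3^{k-2}$ to sum the resulting geometric series, and finishing with $1+u \le e^u$.

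Next, since $S_{k-1}$ and $\EE[x_k^2\mid\cG_k]$ are $\cG_k$-measurable, applying the estimate above conditionally to $x_k$ gives $\EE[Z_k \mid \cG_k] \le Z_{k-1}$, so the supermartingale property holds and $\EE[Z_n] \le 1$ by induction. Because $W_n \le v$ surely, this upgrades to the aggregate bound $\EE[e^{\lambda S_n}] \le \exp\big(\tfrac{\lambda^2 v}{2(1-\lambda M/3)}\big)$, and Markov's inequality yields
\begin{align}
    \PP(S_n \ge t) \le \exp\Big(-\lambda t + \frac{\lambda^2 v}{2(1-\lambda M/3)}\Big).\notag
\end{align}
Taking the standard Bernstein choice $\lambda = t/(v + Mt/3)$, which lies in $(0,3/M)$, collapses the right-hand side to the familiar tail $\exp\big(-t^2/(2(v+Mt/3))\big)$.

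Finally I would set this tail equal to $\delta$, i.e. solve $t^2/(2(v+Mt/3)) = \log(1/\delta)$ for $t$. This is a quadratic whose positive root is $\tfrac{M}{3}\log(1/\delta) + \sqrt{\tfrac{M^2}{9}\log^2(1/\delta) + 2v\log(1/\delta)}$, and applying $\sqrt{a+b}\le\sqrt a + \sqrt b$ bounds it by $\sqrt{2v\log(1/\delta)} + \tfrac{2M}{3}\log(1/\delta)$, which is exactly the stated inequality. The one genuinely delicate point is the conditional MGF estimate, specifically pinning down the constant $1/3$ through the factorial bound $k! \ge 2\cdot 3^{k-2}$; the supermartingale induction and the final quadratic optimization are routine.
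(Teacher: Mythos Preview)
The paper does not prove this lemma at all: it is stated as a known result with the citation to \citealt{cesa2006prediction} and used as a black-box tool elsewhere in the analysis. Your proposal supplies the standard exponential-moment (Chernoff) derivation of the Bernstein/Freedman inequality, and the argument is correct as written---the conditional MGF bound via the factorial estimate $k!\ge 2\cdot 3^{k-2}$, the supermartingale induction, the Chernoff tail, and the quadratic inversion all go through and deliver exactly the stated constants.
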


The following first lemma provides basic inequalities for the summations of counted numbers $N_i(s_i, a_i)$ and $N_i(s_i)$. 
\begin{lemma}\label{lemma:sum-of-state}
For all $t \in [T]$ and subset $\cC \subseteq   [T]$, we have
\begin{align}
    &\sum_{i=1}^t\frac{1}{N_{\CC{i-1}}(s_i,a_i)\vee 1}\leq SA\CC{\log(3T)},\notag\\
    &\sum_{i=1}^t\frac{1}{N_{i-1}(s_i)\vee 1}\leq S\CC{\log(3T)},\notag\\
    &\sum_{i\in \cC}\frac{1}{\sqrt{N_{i-1}(s_i,a_i)\vee 1}}\leq \sqrt{SA\CC{\log(3T)}|\cC|}.\notag
\end{align}
\end{lemma}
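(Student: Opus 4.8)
The plan is to prove the three inequalities of Lemma~\ref{lemma:sum-of-state} by reducing each to a standard ``pigeonhole on visitation counts'' argument. The underlying observation is that $N_{i-1}(s_i,a_i)$ is precisely the number of times the pair $(s_i,a_i)$ has been visited strictly before step $i$, so as $i$ ranges over the (at most $T$) steps at which a fixed pair $(s,a)$ is visited, the values $N_{i-1}(s,a)\vee 1$ run through $1,1,2,3,\dots,n_{s,a}-1$ where $n_{s,a}$ is the total number of visits to $(s,a)$ up to time $t$ (the extra $1$ coming from the $\vee 1$ on the first visit). The same statement holds verbatim for states with $N_{i-1}(s_i)$.

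First I would fix $(s,a)$ and bound $\sum_{i\le t,\,(s_i,a_i)=(s,a)} \frac{1}{N_{i-1}(s,a)\vee 1} \le 1 + \sum_{k=1}^{n_{s,a}-1}\frac{1}{k} \le 1 + \log(n_{s,a}) \le \log(3T)$, using $n_{s,a}\le T$ and a crude bound like $1+\log T \le \log(3T)$ (valid for all $T\ge 1$). Summing this over the at most $SA$ distinct pairs gives the first inequality; summing the analogous per-state bound over the at most $S$ distinct states gives the second. For the third inequality I would similarly fix $(s,a)$ and bound $\sum_{i\in\cC,\,(s_i,a_i)=(s,a)} \frac{1}{\sqrt{N_{i-1}(s,a)\vee 1}} \le 1 + \sum_{k=1}^{m_{s,a}-1}\frac{1}{\sqrt k} \le 2\sqrt{m_{s,a}} \le 2\sqrt{|\cC|}$ where $m_{s,a}\le|\cC|$ counts the visits to $(s,a)$ with index in $\cC$; then apply Cauchy--Schwarz over the (at most $SA$) active pairs: $\sum_{(s,a)} 2\sqrt{m_{s,a}} \le 2\sqrt{SA}\sqrt{\sum_{(s,a)} m_{s,a}} = 2\sqrt{SA}\sqrt{|\cC|}$. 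A mild cleanup of constants (absorbing the factor $2$ and the ``$+1$'' terms into the $\log(3T)$ factor, noting $4 \le \log(3T)$ fails for small $T$ but $\sqrt{\log(3T)}\ge 1$ suffices to dominate the stray constants after combining) yields exactly the claimed $\sqrt{SA\log(3T)\,|\cC|}$.

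The only genuinely delicate point is bookkeeping the $\vee 1$ correctly: on the very first visit to a pair, $N_{i-1}(s,a)=0$ so the term is $1/1=1$ rather than $1/0$, which is why each per-pair harmonic (or $p$-series) sum picks up an extra leading $1$; I need to make sure these $SA$ (resp.\ $S$) extra units are absorbed, which is why I keep the $\log(3T)$ rather than $\log T$ and why the constant-factor slack in the lemma statement is comfortable. Everything else is routine: the harmonic-sum bound $\sum_{k=1}^{m}1/k \le 1+\log m$ and the $p$-series bound $\sum_{k=1}^{m}1/\sqrt k \le 2\sqrt m$ are elementary, and Cauchy--Schwarz handles the aggregation in the third inequality. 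I expect no real obstacle; the main care is simply in verifying the constants and logarithmic factors line up with the stated form for all $T\ge 1$.
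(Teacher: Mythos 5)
Your argument for the first two inequalities is the same pigeonhole-on-visitation-counts computation the paper uses: group the steps by the pair $(s,a)$ (resp.\ the state $s$), note the counts run through $0,1,\dots$ so each pair contributes $1+\sum_{k}1/k$, and sum over the at most $SA$ (resp.\ $S$) groups. For the third inequality you diverge from the paper: you bound each pair's contribution by $2\sqrt{m_{s,a}}$ via the $p$-series estimate and then apply Cauchy--Schwarz \emph{across pairs}, obtaining $2\sqrt{SA\,|\cC|}$, whereas the paper applies Cauchy--Schwarz \emph{across time indices} first, $\sum_{i\in\cC}1/\sqrt{N_{i-1}(s_i,a_i)\vee 1}\leq\sqrt{|\cC|\sum_{i\in\cC}1/(N_{i-1}(s_i,a_i)\vee 1)}$, and then invokes the first inequality to get exactly $\sqrt{SA\log(3T)\,|\cC|}$. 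Your route actually yields a logarithm-free bound, which is sharper for large $T$, but it does not literally imply the stated form for all $T$: you need $4\leq\log(3T)$, which fails for $T\lesssim 18$, and your remark that ``$\sqrt{\log(3T)}\geq 1$ suffices to dominate the stray constants'' does not repair this --- $2\sqrt{SA|\cC|}$ genuinely exceeds $\sqrt{SA\log(3T)|\cC|}$ for small $T$. The clean fix is simply to use the paper's order of operations (Cauchy--Schwarz over $i\in\cC$, then the first inequality), which costs nothing and matches the stated constant; alternatively, state your log-free bound as a separate, stronger conclusion. One further bookkeeping point in your favor: you are right that within a single pair the counts at the indices in $\cC$ need not be consecutive, but since they are distinct increasing nonnegative integers the $j$-th is at least $j-1$, so your comparison to $1+\sum_{k=1}^{m-1}1/\sqrt{k}$ is valid.
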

Next lemma upper bounds the difference between the empirical measure $\PP_{t-1}$ and $\PP$, with respect to the true variance of the optimal value function $\VV^*(s,a)$.

\begin{lemma}\label{lemma:P-difference}
If  $0\leq \vvalue^*(s)\leq 1/(1-\gamma)$ for all $s\in \cS$, then with probability at least $1-\delta$, for all $t\in [T], s\in \cS, a \in \cA$,  we have
\begin{align}
    \big[(\PP_{t}-\PP)\vvalue^*\big](s,a)\leq \sqrt{\frac{2\VV^*(s,a)\CC{\log(SAT/\delta)}}{N_{t-1}(s,a)\vee 1}}+\frac{\CC{2\log(SAT/\delta)}}{3(1-\gamma)\big(N_{t-1}(s,a)\vee 1\big)}.\notag
\end{align}
\end{lemma}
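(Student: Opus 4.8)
\textbf{Proof proposal for Lemma~\ref{lemma:P-difference}.}

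The plan is to fix a state-action pair $(s,a)$ and index the observations of $(s,a)$ along the trajectory, then apply an anytime version of Bernstein's inequality (Lemma~\ref{lemma:freedman}) to the martingale difference sequence $x_k = \vvalue^*(s^{(k)}) - [\PP\vvalue^*](s,a)$, where $s^{(k)}$ is the $k$-th next-state observed from $(s,a)$. First I would set up the filtration: let $\cG$ contain all randomness up through the $k$-th visit to $(s,a)$, so that $\EE[x_k \mid \cG] = 0$ because $s^{(k)} \sim \PP(\cdot\mid s,a)$, and $|x_k| \le 1/(1-\gamma)$ by the assumed bound on $\vvalue^*$. Crucially $\EE[x_k^2\mid\cG] = \VV^*(s,a)$ for every $k$ since $\vvalue^*$ is a fixed (non-random) function — this is exactly why $I_3$ is easy to handle, and it lets me take $v = n\VV^*(s,a)$ after $n$ visits. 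Then Lemma~\ref{lemma:freedman} gives, with probability at least $1-\delta_0$, that $\sum_{k=1}^n x_k \le \sqrt{2n\VV^*(s,a)\log(1/\delta_0)} + \tfrac{2\log(1/\delta_0)}{3(1-\gamma)}$; dividing by $n = N_{t}(s,a)$ (when positive) yields the stated form with $\VV^*(s,a)/(N_t(s,a)\vee 1)$ inside the square root, since $\big[(\PP_t - \PP)\vvalue^*\big](s,a) = \tfrac{1}{N_t(s,a)}\sum_{k=1}^{N_t(s,a)} x_k$ when $N_t(s,a)\ge 1$, and the bound is trivially nonnegative (hence holds) when $N_t(s,a)=0$ since the empirical measure is then defined so that the left side vanishes or the right side dominates.

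The one technical wrinkle — and what I expect to be the main obstacle — is making the bound hold \emph{uniformly over all $t \in [T]$} simultaneously, not just for a fixed stopping time. The count $N_t(s,a)$ is itself random and adapted, so I cannot simply plug a fixed $n$. The standard fix is a union bound / peeling argument: apply Bernstein's inequality to the prefix sums $\sum_{k=1}^{n} x_k$ for each fixed $n \in \{1,\dots,T\}$ with failure probability $\delta_0 = \delta/(SAT)$, then union bound over $n \le T$ and over all $SA$ pairs $(s,a)$. Since for each $t$ the quantity $N_t(s,a)$ is one of these fixed values of $n$, the event we want is implied by the intersection of the $SAT$ fixed-$n$ events, giving total failure probability at most $\delta$ and turning $\log(1/\delta_0)$ into $\log(SAT/\delta)$, which matches the $\log(SAT/\delta)$ appearing in the statement. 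I would carry out the steps in this order: (1) fix $(s,a)$ and $n$, set up the MDS and verify the three hypotheses of Lemma~\ref{lemma:freedman} including $\EE[x_k^2\mid\cG_k] = \VV^*(s,a)$; (2) invoke Lemma~\ref{lemma:freedman} with $v = n\VV^*(s,a)$, $M = 1/(1-\gamma)$, $\delta_0 = \delta/(SAT)$; (3) divide by $n$ and identify the result with $[(\PP_t - \PP)\vvalue^*](s,a)$ at any $t$ with $N_t(s,a) = n$; (4) union bound over $n \in [T]$ and $(s,a) \in \cS\times\cA$, and handle the degenerate $N_t(s,a) = 0$ case separately, noting the claimed inequality holds vacuously there.

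A minor secondary point worth stating carefully is the direction of the inequality: Lemma~\ref{lemma:freedman} is one-sided, which is exactly what is needed here since $I_3$ only requires an upper bound on $[(\PP_{t-1}-\PP)\vvalue^*](s,a)$; no two-sided control is claimed, so there is no need to double the failure probability for the lower tail.
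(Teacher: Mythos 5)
Your proposal is correct and follows essentially the same route as the paper's own proof: the paper likewise fixes $(s,a)$, indexes the visits via stopping times $t_i$, applies the martingale Bernstein inequality (Lemma~\ref{lemma:freedman}) to $\vvalue^*(s_{t_i+1})-[\PP\vvalue^*](s,a)$ with conditional variance $\VV^*(s,a)$, union bounds over all $\tau\in[T]$ and all $SA$ pairs to get the $\log(SAT/\delta)$ factor, divides by $\tau=N_{t_\tau}(s,a)$, and treats the $N_t(s,a)=0$ case separately. The only cosmetic difference is that the paper absorbs the degenerate case into the second (additive) term rather than arguing the left side is nonpositive, but both resolutions are valid.
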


Similar to Lemma \ref{lemma:P-difference}, the following lemmas also upper bounds the difference between the empirical measure $\PP_{t-1}$ and $\PP$, but with respect to the estimated variance. 

\begin{lemma}[Theorem 4 in \citealt{maurer2009empirical}]\label{lemma: maurer-3}
Let $Z,Z_1,..,Z_n$ be i.i.d random variable with value in $[0,M]$ and let $\delta>0$, then with probability at least $1-\delta$, we have
\begin{align}
    \EE Z-\frac{1}{n}\sum_{i=1}^n Z_i\leq \sqrt{\frac{2\VV_n Z \log(1/\delta)}{n}}+\frac{7M\log(1/\delta)}{3n},\notag
\end{align}
where $\VV_n Z$ is the estimated variance $\VV_n Z=\sum_{1\leq i<j\leq n} (Z_i-Z_j)^2/n(n-1)$.
\end{lemma}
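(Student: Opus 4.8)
The statement is verbatim the empirical Bernstein inequality of Maurer and Pontil, so the most economical route is simply to invoke \citet{maurer2009empirical}. If a self-contained argument is preferred, the plan is to split the claimed bound into two independent high-probability events and then combine them: (i) a Bernstein bound for the empirical mean written in terms of the \emph{true} variance $\text{Var}(Z)$, and (ii) a concentration bound that replaces $\sqrt{\text{Var}(Z)}$ by the observable $\sqrt{\VV_n Z}$ at an $O\!\big(M\sqrt{\log(1/\delta)/n}\big)$ cost.

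For (i) I would apply the one-sided form of Bernstein's inequality (the i.i.d.\ specialization of Lemma~\ref{lemma:freedman}, with summands $(\EE Z - Z_i)/n$, each bounded by $M/n$ and with conditional second moment at most $\text{Var}(Z)/n^2$): with probability at least $1-\delta/2$,
\[
\EE Z - \frac1n\sum_{i=1}^n Z_i \le \sqrt{\frac{2\,\text{Var}(Z)\log(2/\delta)}{n}} + \frac{M\log(2/\delta)}{3n}.
\]
For (ii) I would use that the map $z\mapsto\sqrt{\VV_n(z)}$ is a seminorm composed with a linear map, hence Lipschitz, so changing one coordinate (which lies in $[0,M]$) moves its value by at most $M/\sqrt n$; and since $\VV_n$ is the unbiased $U$-statistic estimator of the variance, Jensen gives $\EE\sqrt{\VV_n Z}\le\sqrt{\EE\VV_n Z}=\sqrt{\text{Var}(Z)}$. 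A bounded-differences / self-bounding concentration argument for $\sqrt{\VV_n}$ then yields, with probability at least $1-\delta/2$,
\[
\sqrt{\text{Var}(Z)} \le \sqrt{\VV_n Z} + M\sqrt{\frac{2\log(2/\delta)}{n-1}}.
\]

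The genuine obstacle is exactly step (ii): a naive application of McDiarmid's inequality to $\sqrt{\VV_n}$ only controls deviations by an $n$-independent amount, so to obtain the $n^{-1/2}$ rate one must invoke the sharper Bernstein-type concentration available for self-bounding functions (equivalently, a lower-tail bound for the $U$-statistic $\VV_n$ itself, handling the small-variance regime separately) — this is the real technical content of Maurer–Pontil, and I would cite it rather than reprove it. Granting it, the rest is bookkeeping: substitute the step-(ii) bound into step (i), expand via $\sqrt{2(\sqrt a + c)^2 L/n}\le\sqrt{2aL/n}+c\sqrt{2L/n}$ with $L=\log(2/\delta)$, merge the two remaining lower-order terms $\tfrac{ML}{3n}+\tfrac{2ML}{\sqrt{n(n-1)}}$ into a single $\tfrac{7ML}{3(n-1)}$ term using $\sqrt{n(n-1)}\ge n-1$, take a union bound over the two events, and relabel $\log(2/\delta)$ as $\log(1/\delta)$ to match the paper's convention (absorbing the harmless $n$ versus $n-1$ discrepancy in the denominator).
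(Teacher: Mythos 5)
Your primary route---simply invoking \citet{maurer2009empirical}---is exactly what the paper does: Lemma~\ref{lemma: maurer-3} is stated as an imported result (Theorem 4 of Maurer and Pontil) with no proof given, so your proposal matches the paper's approach. Your supplementary sketch is also a faithful outline of the original argument (one-sided Bernstein with the true variance, plus a self-bounding concentration bound for $\sqrt{\VV_n}$), and you correctly identify that the self-bounding step, not McDiarmid, is where the real technical content lies.
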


\begin{lemma}\label{lemma:P-estimate-difference}
If  $0\leq \vvalue^*(s)\leq 1/(1-\gamma)$ for all $s\in \cS$, then with probability at least $1-\delta$, for all $t\in [T], s\in \cS, a \in \cA$, we have
\begin{align}
    \big[(\PP-\PP_{t})\vvalue^*\big](s,a)\leq \sqrt{\frac{2\VV_{t-1}^*(s,a)\CC{\log(SAT/\delta)}}{N_{t-1}(s,a)\vee 1}}+\frac{7\CC{\log(SAT/\delta)}}{3(1-\gamma)\big(N_{t-1}(s,a)\vee 1\big)}.\notag
\end{align}
\end{lemma}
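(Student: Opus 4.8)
The plan is to reduce the claim to a direct application of the empirical-Bernstein inequality of Maurer and Pontil (Lemma~\ref{lemma: maurer-3}), followed by a union bound over all triples $(t,s,a)$ and an adjustment of the estimated variance to match the definition $\VV_{t-1}^*(s,a) = \text{Var}_{s'\sim \PP_{t-1}(\cdot|s,a)}(\vvalue^*(s'))$ used in the paper. Fix a state--action pair $(s,a)$ and condition on the event that $(s,a)$ has been visited exactly $n := N_{t-1}(s,a)$ times up to step $t-1$; let $s'_1,\dots,s'_n$ be the corresponding observed successor states, which are i.i.d.\ draws from $\PP(\cdot|s,a)$. Set $Z_k = \vvalue^*(s'_k) \in [0, 1/(1-\gamma)]$, so that $M = 1/(1-\gamma)$, $\EE Z = [\PP\vvalue^*](s,a)$, and $\tfrac1n\sum_k Z_k = [\PP_{t-1}\vvalue^*](s,a)$. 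Lemma~\ref{lemma: maurer-3} then gives, with probability at least $1-\delta_0$,
\begin{align}
    \big[(\PP-\PP_{t})\vvalue^*\big](s,a) \le \sqrt{\frac{2\,\overline{\VV}_n\,\log(1/\delta_0)}{n}} + \frac{7\log(1/\delta_0)}{3(1-\gamma)n},\notag
\end{align}
where $\overline{\VV}_n = \sum_{1\le i<j\le n}(Z_i - Z_j)^2 / \big(n(n-1)\big)$ is the unbiased sample variance.

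The remaining bookkeeping has three parts. First, relate $\overline{\VV}_n$ to $\VV_{t-1}^*(s,a)$: the biased (plug-in) variance $\text{Var}_{s'\sim\PP_{t-1}}(\vvalue^*(s')) = \tfrac1n\sum_k Z_k^2 - (\tfrac1n\sum_k Z_k)^2$ equals $\tfrac{n-1}{n}\overline{\VV}_n \le \overline{\VV}_n$, so we may simply replace $\overline{\VV}_n$ by $\VV_{t-1}^*(s,a)$ at the cost of an inequality in the right direction (one can also keep the $\tfrac{n-1}{n}$ factor but it only helps). Second, handle the $N_{t-1}(s,a)\vee 1$ in the denominator: when $n = 0$ the left-hand side is $[\PP\vvalue^*](s,a) - 1/(1-\gamma) \le 0 \le \text{RHS}$ trivially, since $\vvalue^* \le 1/(1-\gamma)$ pointwise and the stated bound is nonnegative; when $n \ge 1$, $n = n\vee 1$. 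Third, take a union bound: for a fixed $(s,a)$ there are at most $T$ possible values of the visit count, and there are $SA$ pairs, so choosing $\delta_0 = \delta/(SAT)$ and union-bounding over all $(t,s,a)$ (more precisely, over $(s,a)$ and each possible realized count $n\le T$) yields $\log(1/\delta_0) = \log(SAT/\delta)$, which is exactly the logarithmic factor appearing in the statement. This gives the claimed bound simultaneously for all $t\in[T]$, $s\in\cS$, $a\in\cA$ with probability at least $1-\delta$.

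The one point requiring a little care — and the main obstacle — is the conditioning argument needed to invoke Lemma~\ref{lemma: maurer-3} with genuinely i.i.d.\ samples, since in the online setting the visit count $N_{t-1}(s,a)$ and the times at which $(s,a)$ is visited are themselves random and adapted to the trajectory. The standard fix is to argue conditionally on the stopping-time structure: for each fixed $(s,a)$ and each fixed value $n$, the first $n$ successor states observed from $(s,a)$ are i.i.d.\ $\PP(\cdot|s,a)$ regardless of when those visits occur (this is the strong Markov / optional-skipping property for i.i.d.\ draws triggered by a predictable sequence of queries), so Lemma~\ref{lemma: maurer-3} applies to that fixed-$n$ block; then one unions over $n = 1,\dots,T$ and over $(s,a)$. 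This is the same device already used implicitly in Lemma~\ref{lemma:P-difference}, so I would either cite that lemma's proof for the construction or spell out the optional-skipping step once and reuse it. Everything else is routine substitution and arithmetic.
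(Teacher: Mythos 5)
Your overall route is exactly the paper's: for each fixed $(s,a)$ you extract the i.i.d.\ successor observations via the stopping-time / optional-skipping construction, apply the Maurer--Pontil empirical Bernstein inequality (Lemma~\ref{lemma: maurer-3}) to $Z_k=\vvalue^*(s'_k)\in[0,1/(1-\gamma)]$, and union over the $SA$ pairs and the at most $T$ possible visit counts to produce the $\log(SAT/\delta)$ factor. Two of your bookkeeping steps, however, are stated backwards. First, the variance replacement goes the wrong way: you correctly compute that the plug-in variance satisfies $\VV_{t-1}^*(s,a)=\tfrac{n-1}{n}\overline{\VV}_n$, hence $\overline{\VV}_n=\tfrac{n}{n-1}\VV_{t-1}^*(s,a)\ge\VV_{t-1}^*(s,a)$; so replacing $\overline{\VV}_n$ by $\VV_{t-1}^*(s,a)$ \emph{shrinks} the right-hand side of the Maurer--Pontil bound, which is not ``an inequality in the right direction.'' To land on the stated bound you must either carry the factor $\tfrac{n}{n-1}\le 2$ (for $n\ge2$) into the constant, or note explicitly that the paper's own display identifies $\overline{\VV}_n$ with $\VV^*_{t_\tau}(s,a)$ and absorbs the discrepancy; the step cannot be waved through as ``only helping.''

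Second, your $n=0$ case is wrong as written: when $N_{t-1}(s,a)=0$ the empirical measure in~\eqref{eq:empirical_tran} is identically zero, so the left-hand side equals $[\PP\vvalue^*](s,a)\in[0,1/(1-\gamma)]$, which is \emph{not} $\le 0$. The case is still trivial, but for a different reason: the additive term $\tfrac{7\log(SAT/\delta)}{3(1-\gamma)(N_{t-1}(s,a)\vee1)}$ already dominates $1/(1-\gamma)$ when $N_{t-1}(s,a)=0$, which is exactly how the paper disposes of it in~\eqref{eq:c2}. Both slips are local and fixable, and everything else --- the stopping-time reduction, the union bound over $(s,a)$ and the realized count $n$, and the identification $n=N_{t-1}(s,a)$ --- matches the paper's argument.
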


The next lemma shows that the total variance of the nonstationary policy $\pi$ can be upper bounded by $O(T/(1-\gamma))$. It is worth noting that a trivial bound which bounds $\VV^{\pi}_i(s_i,a_i)$ by $1/(1-\gamma)^2$ only gives an $O(T/(1-\gamma)^2)$ bound. 
\begin{lemma}\label{lemma:total-variance}
With probability at least $1-\delta/(1-\gamma)$, we have
\begin{align}
    \gamma^2\sum_{t=1}^T  \VV^{\pi}_t(s_{t},a_{t}) \leq \frac{5T}{1-\gamma} + \frac{25\log(1/\delta)}{3(1-\gamma)^3}.\notag
\end{align}
\end{lemma}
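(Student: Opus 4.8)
The plan is to prove Lemma~\ref{lemma:total-variance} by analyzing the telescoping structure of discounted value functions and applying the Bernstein inequality (Lemma~\ref{lemma:freedman}) to a carefully chosen martingale difference sequence. First I would observe that for the non-stationary policy $\pi$ actually executed by the algorithm, the conditional variance $\VV^{\pi}_t(s_t,a_t) = \text{Var}_{s'\sim \PP(\cdot|s_t,a_t)}(\vvalue^{\pi}_{t+1}(s'))$ equals $\EE\big[(\vvalue^{\pi}_{t+1}(s_{t+1}))^2 \mid s_t,a_t\big] - \big([\PP\vvalue^{\pi}_{t+1}](s_t,a_t)\big)^2$. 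The key identity is the pointwise Bellman relation $[\PP\vvalue^{\pi}_{t+1}](s_t,a_t) = \big(\qvalue^{\pi}_t(s_t,a_t) - \reward(s_t,a_t)\big)/\gamma = \big(\vvalue^{\pi}_t(s_t) - \reward(s_t,a_t)\big)/\gamma$, using that $a_t$ is the action taken under $\pi$. So each variance term can be rewritten using the realized value-function evaluations along the trajectory.

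Next I would set up the martingale. Define $x_t := \gamma^2\big[(\vvalue^{\pi}_{t+1}(s_{t+1}))^2 - \EE[(\vvalue^{\pi}_{t+1}(s_{t+1}))^2 \mid \cG_t]\big]$, which is a martingale difference sequence bounded by $M = \gamma^2/(1-\gamma)^2$ with conditional second moment bounded by $\gamma^4/(1-\gamma)^2 \cdot \VV^{\pi}_t(s_t,a_t)$ (or more crudely by $\gamma^4/(1-\gamma)^4$, but one should use the sharper bound to avoid circularity — actually the cleanest route is to bound $\sum \EE[x_t^2|\cG_t] \le M^2 \cdot (\text{something})$ and accept a slightly looser constant, or bootstrap). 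Then $\sum_{t=1}^T \gamma^2\VV^{\pi}_t(s_t,a_t) = \sum_{t=1}^T \big[x_t + \gamma^2\EE[(\vvalue^{\pi}_{t+1}(s_{t+1}))^2|\cG_t] - \gamma^2([\PP\vvalue^{\pi}_{t+1}](s_t,a_t))^2\big]$. Substituting the Bellman identity $\gamma[\PP\vvalue^{\pi}_{t+1}](s_t,a_t) = \vvalue^{\pi}_t(s_t) - \reward(s_t,a_t)$ turns the second part into $\sum_t \big[\gamma^2(\vvalue^{\pi}_{t+1}(s_{t+1}))^2 - (\vvalue^{\pi}_t(s_t) - \reward(s_t,a_t))^2\big]$ plus the martingale term, where now $\gamma^2(\vvalue^{\pi}_{t+1}(s_{t+1}))^2$ telescopes against $(\vvalue^{\pi}_{t+1}(s_{t+1}))^2$ after re-indexing (up to the $\gamma^2$ factor and boundary terms). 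Expanding $(\vvalue^{\pi}_t(s_t) - \reward(s_t,a_t))^2 = (\vvalue^{\pi}_t(s_t))^2 - 2\reward(s_t,a_t)\vvalue^{\pi}_t(s_t) + \reward(s_t,a_t)^2$ and using $0 \le \reward \le 1$, $0 \le \vvalue^\pi \le 1/(1-\gamma)$, the telescoping sum collapses to something of order $T/(1-\gamma)$ plus the martingale term.

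The main obstacle I expect is handling the martingale term and the telescoping boundary terms cleanly, and in particular avoiding circularity: a direct Bernstein bound on $\sum x_t$ needs a bound on $\sum_t \EE[x_t^2|\cG_t]$, which itself involves the variances we are trying to bound. The standard fix — which I would use — is to first bound $\sum_t \EE[x_t^2|\cG_t] \le (\gamma^2/(1-\gamma)^2)\sum_t \gamma^2\VV^{\pi}_t(s_t,a_t) \le (1/(1-\gamma)^2)\cdot W$ where $W$ denotes the quantity to be bounded, plug into Bernstein to get $\sum x_t \le \sqrt{2W\log(1/\delta)}/(1-\gamma) + \tfrac{2\log(1/\delta)}{3(1-\gamma)^2}$, then obtain an inequality of the form $W \le \tfrac{cT}{1-\gamma} + \tfrac{\sqrt{2W\log(1/\delta)}}{1-\gamma} + \tfrac{2\log(1/\delta)}{3(1-\gamma)^2}$, and finally solve this self-bounding inequality using the elementary fact ($x \le a + b\sqrt{x} \Rightarrow x \le 2a + b^2$) to arrive at the stated $W \le \tfrac{5T}{1-\gamma} + \tfrac{25\log(1/\delta)}{3(1-\gamma)^3}$. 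Tracking the constants through the telescoping and the self-bounding step is the bookkeeping-heavy part, but conceptually it is the law-of-total-variance argument of \citet{azar2013minimax,azar2017minimax} adapted to the discounted, non-stationary-policy, single-trajectory setting.
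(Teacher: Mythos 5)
Your proposal is correct, but it takes a genuinely different route from the paper. The paper's proof works with the \emph{multi-step} law of total variance: it shows $\EE\big[\sum_{i=0}^{\infty}\gamma^{2i+2}\VV^{\pi}_{t+i}(s_{t+i},a_{t+i})\,\big|\,\cF_{t-H}\big]\leq 1/(1-\gamma)^2$ (the variance of the discounted return), truncates the inner sum at the effective horizon $H=\lfloor 1/(1-\gamma)\rfloor+1$ to get random variables $X_t$, splits $\sum_t X_t$ into $H$ interleaved subsequences to restore the martingale structure destroyed by the overlapping windows, applies Bernstein to each, and finally rearranges the double sum to recover $\gamma^2\sum_t\VV^{\pi}_t(s_t,a_t)$ up to boundary terms. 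You instead use the \emph{one-step} variance decomposition $\gamma^2\VV^{\pi}_t(s_t,a_t)=\gamma^2\EE[(\vvalue^{\pi}_{t+1}(s_{t+1}))^2\mid\cG_t]-(\vvalue^{\pi}_t(s_t)-\reward(s_t,a_t))^2$ (valid because $a_t=\pi_t(s_t)$, so $\vvalue^{\pi}_t(s_t)=\qvalue^{\pi}_t(s_t,a_t)$), telescope the squared value functions along the realized trajectory using $\gamma^2\leq 1$, extract the $2T/(1-\gamma)$ main term from the cross term $2\reward(s_t,a_t)\vvalue^{\pi}_t(s_t)$, and control a single martingale. This checks out: the boundary of the telescope contributes only $1/(1-\gamma)^2$, and even a plain Azuma bound on the martingale (each increment is bounded by $1/(1-\gamma)^2$) gives $\sqrt{2T\log(1/\delta)}/(1-\gamma)^2\leq T/(1-\gamma)+\log(1/\delta)/(2(1-\gamma)^3)$ by AM--GM, which already fits inside the stated constants, so your more delicate self-bounding Bernstein step is optional here. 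Your route buys a cleaner argument (one martingale instead of $H$, no truncation error, and failure probability $\delta$ rather than the paper's $\delta/(1-\gamma)$ from the union bound over the $H$ subsequences); the paper's route has the minor advantage of not needing to expand $(\vvalue^{\pi}_t(s_t)-\reward(s_t,a_t))^2$ and is the same device it reuses to bound $I_3$ in the proof of Lemma \ref{lemma:regret-state}. The only blemishes in your sketch are notational (the displayed identity with an extraneous $+x_t$, and a missing factor of $4$ in the Lipschitz bound $\EE[x_t^2\mid\cG_t]\leq 4\gamma^4\VV^{\pi}_t(s_t,a_t)/(1-\gamma)^2$), neither of which affects the validity or the order of the final bound.
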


Based on previous concentration Lemma, we define the following high probability events and our proof of Lemma \ref{theorem: I_1} to Lemma \ref{theorem: I_5}  relies on these high probability events.
Let $\cE$ denote the event when the conclusion of Lemma \ref{lemma:UCB} holds. Then by Lemma \ref{lemma:UCB}, we have $\Pr(\cE)\ge 1-64T\delta \log^2 T/(1-\gamma)^2$. We also define the following event:
\begin{align}
 \cE_1&=\bigg\{\big[(\PP_{t}-\PP)\vvalue^*\big](s,a)\leq \sqrt{\frac{2\VV^*(s,a)\CC{\log(SAT/\delta)}}{N_{t-1}(s,a)\vee 1}}\notag\\
 &\qquad\qquad\qquad\qquad\qquad\qquad+\frac{\CC{2\log(SAT/\delta)}}{3(1-\gamma)\big(N_{t-1}(s,a)\vee 1\big)},\forall s\in \cS, a \in \cA,  t\in [T]\bigg\},\notag\\
 \cE_2&=\bigg\{\big[(\PP-\PP_{t})\vvalue^*\big](s,a)\leq \sqrt{\frac{2\VV_{t-1}^*(s,a)\CC{\log(SAT/\delta)}}{N_{t-1}(s,a)\vee 1}}\notag\\
 &\qquad\qquad\qquad\qquad\qquad\qquad +\frac{7\CC{\log(SAT/\delta)}}{3(1-\gamma)\big(N_{t-1}(s,a)\vee 1\big)},\forall s\in \cS, a \in \cA,  t\in [T]\bigg\}\notag,\\
 \cE_3&=\bigg\{\PP_{t-1}(s'|s_t,a_t)-\PP(s'|s_t,a_t)\leq \sqrt{\frac{2\PP(s'|s_t,a_t)(1-\PP(s'|s_t,a_t))\CC{\log(ST/\delta)}}{N_{t-1}(s_t,a_t)\vee 1}},\notag\\
 &\qquad+\frac{\CC{2\log(ST/\delta)}}{3\big(N_{t-1}(s_t,a_t)\vee 1\big)} \forall s\in \cS, a \in \cA,  t\in [T]\bigg\}\notag,\\
 \cE_4&=\bigg\{\sum_{t=1}^{T}  \PP(s'|s_t,a_t) \big(\vvalue_{t-1}(s')-\vvalue^*(s')\big)\leq {\sum_{t=1}^{T}\big(\vvalue_{t-1}(s_{t+1})-\vvalue^*(s_{t+1})\big)} +\frac{\sqrt{2T\log(1/\delta)}}{1-\gamma} \bigg\}\notag,\\
 \cE_5&=\big\{\gamma^2\sum_{t=1}^T  \VV^{\pi}_t(s_{t},a_{t}) \leq \frac{5T}{1-\gamma} + \frac{25\log(1/\delta)}{3(1-\gamma)^3}\big\}\notag,\\
 \cE_6&=\bigg\{\sum_{t=1}^{T}\big[\PP(\vvalue_{t-1}-\vvalue^{\pi}_{t+1})\big](s_t,a_t)-\sum_{t=1}^{T}\big[\vvalue_{t-1}(s_{t+1})-\vvalue^{\pi}_{t+1}(s_{t+1})\big]\leq \frac{\sqrt{2T\log(1/\delta)}}{1-\gamma}\bigg\}\notag,\\
  \cE_7&=\bigg\{\sum_{t=1}^{T}\big[\PP(\vvalue^*-\vvalue^{\pi}_{t+1})\big](s_t,a_t)-\sum_{t=1}^{T}\big[\vvalue^*(s_{t+1})-\vvalue^{\pi}_{t+1}(s_{t+1})\big]\leq \frac{\sqrt{2T\log(1/\delta)}}{1-\gamma}\bigg\}\notag,\\
  \cE_8&=\bigg\{\big\|\PP_{t-1}(\cdot|s,a)-\PP(\cdot|s,a)\big\|_1\leq \frac{\sqrt{2S\CC{\log(T/\delta)}}}{\sqrt{N_{t-1}(s,a)\vee 1}},\forall s\in \cS,a\in \cA,t\in [T]\bigg\}\notag,\\
  \cE_9&=\bigg\{\sum_{t=1}^{T}\sum_{s'}\PP(s'|s_t,a_t)\min\Big\{\frac{100S^2A^2U^5}{(1-\gamma)^5\big(N_{t-1}(s')\vee 1\big)},\frac{1}{(1-\gamma)^2}\Big\}\notag\\
  &\qquad\leq \sum_{t=1}^T \min\Big\{\frac{100S^2A^2U^5}{(1-\gamma)^5\big(N_{t-1}(s_{t+1})\vee 1\big)},\frac{1}{(1-\gamma)^2}\Big\}+\frac{\sqrt{2TU}}{(1-\gamma)^2}\bigg\}\notag,
  \end{align}
where $ U=\log ({40SAT^3\log^2 T}/{(\delta}(1-\gamma)^2)).$
For these high probability events, according to the Lemma \ref{lemma:azuma}, we have $\Pr(\cE_4)\ge 1-\delta,\Pr(\cE_6)\ge 1-\delta,\Pr(\cE_7)\ge 1-\delta,\Pr(\cE_8)\ge 1-\delta,\Pr(\cE_9)\ge 1-\delta.$ According to the Lemma \ref{lemma:freedman}, we have $\Pr(\cE_3)\ge 1-\delta$. According to the Lemma \ref{lemma:P-difference}, we have $\Pr(\cE_1)\ge 1-\delta$. According to the Lemma \ref{lemma:P-estimate-difference}, we have $\Pr(\cE_2)\ge 1-\delta$. According to the Lemma \ref{lemma:total-variance}, we have $\Pr(\cE_5)\ge 1-\delta/(1-\gamma).$

The next lemma shows that the total difference between the optimal variance and the variance induced by $\pi$ can be bounded in terms of $\text{Regret}'(T)$. 
\begin{lemma}\label{lemma:opt-variance-gap}
On the event $\cE_7$,  we have 
\begin{align}
    \sum_{i=1}^T\big(\VV^*(s_i,a_i)-\VV^{\pi}_i(s_i,a_i)\big)\leq \frac{2\text{Regret}'(T)}{1-\gamma}+\frac{2+\sqrt{2T\CC{\log(1/\delta)}}}{(1-\gamma)^2}.\notag
\end{align}
\end{lemma}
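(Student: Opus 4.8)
The plan is to expand the difference of variances pointwise and then sum over the trajectory, controlling each piece by $\text{Regret}'(T)$ up to lower-order terms. Recall that for any distribution $\PP(\cdot|s,a)$ and functions $f,g$, we have the identity $\text{Var}(f) - \text{Var}(g) = \EE[f^2 - g^2] - (\EE f)^2 + (\EE g)^2 = [\PP(f-g)(f+g)](s,a) - ([\PP f] - [\PP g])(s,a)\cdot([\PP f] + [\PP g])(s,a)$. Applying this with $f = \vvalue^*$, $g = \vvalue^{\pi}_{t+1}$ at the point $(s_t,a_t)$, and using that $0 \le \vvalue^{\pi}_{t+1}, \vvalue^* \le 1/(1-\gamma)$ so that $f+g \le 2/(1-\gamma)$ and similarly for the averaged quantities, I would bound
\begin{align}
    \VV^*(s_t,a_t) - \VV^{\pi}_t(s_t,a_t) \le \frac{2}{1-\gamma}\big[\PP(\vvalue^* - \vvalue^{\pi}_{t+1})\big](s_t,a_t) + \frac{2}{1-\gamma}\big|[\PP\vvalue^*](s_t,a_t) - [\PP\vvalue^{\pi}_{t+1}](s_t,a_t)\big|.\notag
\end{align}
Since $\vvalue^* \ge \vvalue^{\pi}_{t+1}$ (the optimal value dominates any policy's value), both terms are nonnegative and equal up to the absolute value, so the right-hand side is at most $\frac{4}{1-\gamma}[\PP(\vvalue^* - \vvalue^{\pi}_{t+1})](s_t,a_t)$ — though a factor of $2$ rather than $4$ may suffice with a more careful split; I would keep the constants loose and match the target $2/(1-\gamma)$ coefficient by being tighter here.

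Next I would sum over $t \in [T]$. The quantity $\sum_{t=1}^T [\PP(\vvalue^* - \vvalue^{\pi}_{t+1})](s_t,a_t)$ is converted to its empirical counterpart using the event $\cE_7$, which states precisely that $\sum_{t=1}^{T}[\PP(\vvalue^*-\vvalue^{\pi}_{t+1})](s_t,a_t) \le \sum_{t=1}^{T}[\vvalue^*(s_{t+1})-\vvalue^{\pi}_{t+1}(s_{t+1})] + \sqrt{2T\log(1/\delta)}/(1-\gamma)$. Then $\sum_{t=1}^T [\vvalue^*(s_{t+1}) - \vvalue^{\pi}_{t+1}(s_{t+1})]$ is, up to reindexing and at most one boundary term of size $1/(1-\gamma)$, equal to $\sum_{t=1}^T [\vvalue^*(s_t) - \vvalue^{\pi}_t(s_t)] = \text{Regret}(T) \le \text{Regret}'(T)$, where the final inequality is exactly the regret decomposition established right after Lemma \ref{lemma:UCB}. (One must be slightly careful that $\vvalue^{\pi}_{t+1}(s_{t+1})$ and $\vvalue^{\pi}_{t}(s_t)$ have shifted step indices, but since $\vvalue^*$ is stationary and $\vvalue^{\pi}_{t}$ only enters through its value at the visited state, the shift costs at most a single $1/(1-\gamma)$ term plus possibly re-expressing $\vvalue^\pi_{t+1}(s_{t+1})$ via the one-step Bellman recursion — this is where I would be most careful.) Collecting terms gives $\sum_{t=1}^T(\VV^*(s_t,a_t) - \VV^{\pi}_t(s_t,a_t)) \le \frac{2\text{Regret}'(T)}{1-\gamma} + \frac{2 + \sqrt{2T\log(1/\delta)}}{(1-\gamma)^2}$, as claimed.

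The main obstacle I anticipate is the bookkeeping of the index shift between $\vvalue^{\pi}_{t+1}$ and $\vvalue^{\pi}_t$ together with pinning down the constant $2$ (rather than a larger constant) in front of $\text{Regret}'(T)/(1-\gamma)$; the variance-difference identity and the concentration step via $\cE_7$ are routine, but getting exactly the stated coefficients requires being economical in the pointwise bound — likely using $\vvalue^* + \vvalue^{\pi}_{t+1} \le 2/(1-\gamma)$ on only one of the two terms and a telescoping/Bellman argument on the other, or observing that $[\PP\vvalue^*] - [\PP\vvalue^{\pi}_{t+1}] \le [\PP(\vvalue^* - \vvalue^{\pi}_{t+1})]$ directly so that the two contributions combine into a single $\frac{2}{1-\gamma}\sum_t[\PP(\vvalue^*-\vvalue^{\pi}_{t+1})](s_t,a_t)$ rather than $\frac{4}{1-\gamma}$.
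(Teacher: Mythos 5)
Your proposal follows essentially the same route as the paper: expand the pointwise variance difference, bound $\vvalue^*+\vvalue^{\pi}_{t+1}\le 2/(1-\gamma)$, pass from $[\PP(\vvalue^*-\vvalue^{\pi}_{t+1})](s_t,a_t)$ to the sampled next states via $\cE_7$, and reindex into $\text{Regret}'(T)$ plus an $O(1/(1-\gamma)^2)$ boundary term. The one point you leave unresolved --- recovering the coefficient $2$ rather than $4$ --- is settled not by ``combining'' the two contributions (two terms each bounded by $\tfrac{2}{1-\gamma}[\PP(\vvalue^*-\vvalue^{\pi}_{t+1})]$ would still sum to $\tfrac{4}{1-\gamma}$) but by observing that $([\PP\vvalue^{\pi}_{t+1}])^2-([\PP\vvalue^*])^2\le 0$ since $0\le[\PP\vvalue^{\pi}_{t+1}]\le[\PP\vvalue^*]$, so that entire second contribution is nonpositive and can simply be dropped, leaving only $[\PP((\vvalue^*)^2-(\vvalue^{\pi}_{t+1})^2)](s_t,a_t)$ --- exactly the paper's first step; also note that your index-shift worry is a non-issue, since $\sum_{t=1}^T[\vvalue^*(s_{t+1})-\vvalue^{\pi}_{t+1}(s_{t+1})]$ is literally the regret sum over the window $\{2,\dots,T+1\}$ and needs no Bellman recursion, only the $\vvalue^*\le\vvalue_t$ step from Lemma \ref{lemma:UCB} (which the paper's proof also implicitly invokes beyond the stated event $\cE_7$).
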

Similar to Lemma \ref{lemma:opt-variance-gap}, the next lemma shows that the total difference between the estimated variance and the variance induced by $\pi$ can be upper-bounded in terms of $\text{Regret}'(T)$. 
\begin{lemma}\label{lemma:estimate-variance-gap}
On the event $\cE_6\cap \cE_8$,  we have 
\begin{align}
   \sum_{i=1}^T\big(\VV_{i-1}(s_i,a_i)-\VV^{\pi}_i(s_i,a_i)\big)\leq \frac{2\text{Regret}'(T)}{1-\gamma}+\frac{9S\sqrt{2AT\CC{\log(T/\delta)\log(3T)}}}{(1-\gamma)^2}.\notag
\end{align}
\end{lemma}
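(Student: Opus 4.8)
The plan is to follow the strategy behind Lemma~\ref{lemma:opt-variance-gap}, the only new feature being that the two variances here differ both in the sampling distribution ($\PP_{i-1}$ versus $\PP$) and in the function ($\vvalue_{i-1}$ versus $\vvalue^{\pi}_{i+1}$), so I split the gap into a distribution-change part and a function-change part. Writing $\text{Var}_p(f)=[pf^2]-[pf]^2$, for each $i\in[T]$ I would use
\begin{align}
\VV_{i-1}(s_i,a_i)-\VV^{\pi}_i(s_i,a_i)
&=\underbrace{\text{Var}_{\PP_{i-1}(\cdot|s_i,a_i)}(\vvalue_{i-1})-\text{Var}_{\PP(\cdot|s_i,a_i)}(\vvalue_{i-1})}_{\text{(a)}}\notag\\
&\quad+\underbrace{\text{Var}_{\PP(\cdot|s_i,a_i)}(\vvalue_{i-1})-\text{Var}_{\PP(\cdot|s_i,a_i)}(\vvalue^{\pi}_{i+1})}_{\text{(b)}}.\notag
\end{align}

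For term (a) I would invoke the elementary bound $|\text{Var}_p(f)-\text{Var}_q(f)|\le 3\|f\|_\infty^2\|p-q\|_1$ together with $0\le\vvalue_{i-1}\le 1/(1-\gamma)$, so that it is at most $\tfrac{3}{(1-\gamma)^2}\|\PP_{i-1}(\cdot|s_i,a_i)-\PP(\cdot|s_i,a_i)\|_1$. On the event $\cE_8$ this is $\tfrac{3}{(1-\gamma)^2}\sqrt{2S\log(T/\delta)}/\sqrt{N_{i-1}(s_i,a_i)\vee 1}$, and summing over $i\in[T]$ via the third inequality of Lemma~\ref{lemma:sum-of-state} (with $\cC=[T]$) gives a total of order $S\sqrt{AT\log(T/\delta)\log(3T)}/(1-\gamma)^2$, i.e.\ exactly the lower-order term appearing in the claim.

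For term (b), both variances are taken under the same distribution, so I would use that $\text{Var}_p(f)-\text{Var}_p(g)=[p((f-g)(f+g))]-[p(f-g)][p(f+g)]\le\tfrac{2}{1-\gamma}[p(f-g)]$ whenever $0\le g\le f\le 1/(1-\gamma)$. Taking $f=\vvalue_{i-1}$, $g=\vvalue^{\pi}_{i+1}$ and using $\vvalue_{i-1}\ge\vvalue^*\ge\vvalue^{\pi}_{i+1}\ge 0$ (the first inequality is optimism, Lemma~\ref{lemma:UCB}), term (b) is at most $\tfrac{2}{1-\gamma}[\PP(\vvalue_{i-1}-\vvalue^{\pi}_{i+1})](s_i,a_i)$. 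Summing over $i$ and invoking the event $\cE_6$ then replaces $\sum_i[\PP(\vvalue_{i-1}-\vvalue^{\pi}_{i+1})](s_i,a_i)$ by $\sum_{i=1}^T[\vvalue_{i-1}(s_{i+1})-\vvalue^{\pi}_{i+1}(s_{i+1})]+\sqrt{2T\log(1/\delta)}/(1-\gamma)$.

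It then remains to bound $\sum_{i=1}^T[\vvalue_{i-1}(s_{i+1})-\vvalue^{\pi}_{i+1}(s_{i+1})]$ by $\text{Regret}'(T)$ up to lower-order terms. I would insert $\vvalue_{i+1}(s_{i+1})$: the ``value-drop'' sum $\sum_i[\vvalue_{i-1}(s_{i+1})-\vvalue_{i+1}(s_{i+1})]$ is a sum of nonnegative increments (monotonicity of the iterates, from the $\min$ in the update rule~\eqref{eq:update}) that telescopes across each length-two window, hence is $O(S/(1-\gamma))$; and $\sum_i[\vvalue_{i+1}(s_{i+1})-\vvalue^{\pi}_{i+1}(s_{i+1})]$, after the shift $j=i+1$ and the identities $\vvalue_j(s_j)=\qvalue_j(s_j,a_j)$, $\vvalue^{\pi}_j(s_j)=\qvalue^{\pi}_j(s_j,a_j)$ (valid because $a_j=\pi_j(s_j)$), equals $\sum_{j=2}^{T+1}[\qvalue_j(s_j,a_j)-\qvalue^{\pi}_j(s_j,a_j)]\le\text{Regret}'(T)+1/(1-\gamma)$. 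Multiplying through by $\tfrac{2}{1-\gamma}$ yields the $\tfrac{2\text{Regret}'(T)}{1-\gamma}$ term, and all the remaining pieces — term (a) and the $O(S/(1-\gamma)^2)$, $O(\sqrt{T}/(1-\gamma)^2)$ leftovers — are absorbed into the single error term $\tfrac{9S\sqrt{2AT\log(T/\delta)\log(3T)}}{(1-\gamma)^2}$. I expect the bookkeeping in this last paragraph to be the main obstacle: correctly handling the index shift $s_{i+1}$ versus $s_i$, controlling the value-drop term by telescoping over states, and checking that the constant coming out of term (a) together with the leftover terms indeed fits under the stated constant $9$.
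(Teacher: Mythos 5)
Your proof is correct and follows essentially the same route as the paper's: a distribution-change part controlled by the $\ell_1$ bound on the event $\cE_8$ together with Lemma \ref{lemma:sum-of-state}, and a function-change part controlled by factoring the difference of squares, invoking $\cE_6$, and telescoping (monotonicity of the iterates plus the index shift) back to $\text{Regret}'(T)$. The only harmless difference is that the paper treats the squared-mean terms by comparing both to $\EE[\vvalue^*]^2$ via optimism (producing a separate third term $I_3$), whereas you keep the full variance in the distribution-change term and pay a constant $3$ instead of $2$ there, which still fits comfortably under the stated constant $9$.
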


\subsection{Proof of Lemma \ref{lemma:UCB}}\label{state-regret}
For simplicity, we denote $U=\log (SAT^2/\delta)$ and $H=\lfloor 2\log T/(1-\gamma)\rfloor+1$ and for $h\in[H]$, we define 
\begin{align}
    \text{Regret}'(t,s,h)= \sum_{1\leq i\leq t,s_i=s} \gamma^h\big[\vvalue_{i+h}(s_{i+h}) - \vvalue^{\pi}_{i+h}(s_{i+h})\big]\notag.
\end{align}
 Then we have the following lemma.

\begin{lemma}\label{lemma:regret-state}
 For each $t\in[T]$, with probability at least $1-4H^2\delta$, for all $s\in \cS,h\in[H]$, we 
have
\begin{align}
    \text{Regret}'(t,s,h)\leq \frac{16SAU^2\sqrt{N_t(s)}}{(1-\gamma)^{2.5}}+\frac{4S^2A^{1.5}U^3}{(1-\gamma)^{3.5}}.\notag
\end{align}
In addition, if $N_t(s)> 0$, we have 
\begin{align*}
    \vvalue_t(s)-\vvalue^*(s)\leq\frac {20SAU^2}{(1-\gamma)^{2.5}\sqrt{N_t(s)}}.
\end{align*}
\end{lemma}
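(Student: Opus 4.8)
The plan is to establish the first inequality by unrolling, $H$ times, the one‑step relation between $\vvalue$ and $\vvalue^{\pi}$ along the trajectory, then reorganizing the resulting double sum, and finally to deduce the second inequality from the first by a counting argument. Write $a_j=\pi_j(s_j)$ and $\Delta_j:=\vvalue_j(s_j)-\vvalue^{\pi}_j(s_j)$. Since $a_j$ is greedy for $\qvalue_j$, the update rule~\eqref{eq:update} gives $\vvalue_j(s_j)=\qvalue_j(s_j,a_j)\le \reward(s_j,a_j)+\gamma[\PP_{j-1}\vvalue_{j-1}](s_j,a_j)+\gamma\,\text{UCB}_{j-1}(s_j,a_j)$, while~\eqref{eq:bellman} gives $\vvalue^{\pi}_j(s_j)=\reward(s_j,a_j)+\gamma[\PP\vvalue^{\pi}_{j+1}](s_j,a_j)$. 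Subtracting, writing $[\PP(\vvalue_{j-1}-\vvalue^{\pi}_{j+1})](s_j,a_j)=\vvalue_{j-1}(s_{j+1})-\vvalue^{\pi}_{j+1}(s_{j+1})+\xi_j$ with $\xi_j$ a martingale‑difference term (as in the analysis of $\cE_6$), and using the exact identity $\vvalue_{j-1}(s_{j+1})-\vvalue^{\pi}_{j+1}(s_{j+1})=\Delta_{j+1}+\big(\vvalue_{j-1}(s_{j+1})-\vvalue_{j+1}(s_{j+1})\big)$, we obtain
\begin{align}
 \Delta_j\le \gamma\Delta_{j+1}+\gamma E_j,\qquad E_j:=\big(\vvalue_{j-1}-\vvalue_{j+1}\big)(s_{j+1})+\big[(\PP_{j-1}-\PP)\vvalue_{j-1}\big](s_j,a_j)+\text{UCB}_{j-1}(s_j,a_j)+\xi_j.\notag
\end{align}
Iterating this $H$ times, using $\gamma^{H}\le e^{-(1-\gamma)H}\le T^{-2}$ so that the tail $\gamma^{H}\Delta_{i+H}\le\gamma^{H}/(1-\gamma)$ is negligible, and summing over the $N_t(s)$ visits $\{i\le t:\,s_i=s\}$ gives, for every $h\ge 0$,
\[\text{Regret}'(t,s,h)\ \le\ \frac{1}{T(1-\gamma)}\ +\ \sum_{i\le t,\,s_i=s}\ \sum_{k=0}^{H-1}\gamma^{h+k+1}\,E_{i+h+k}.\]

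The essential step is to reorganize the double sum over visits $i$ and offsets $k$ into a single sum over the time index $j=i+h+k$, factoring out the geometric weight $\sum_{k}\gamma^{h+k+1}\le\gamma/(1-\gamma)$; if $\cJ$ denotes the resulting index set then $|\cJ|\le H\,N_t(s)$ since each visit to $s$ contributes an $H$‑window. For the non‑negative parts of $E_j$ this yields $\frac{\gamma}{1-\gamma}\sum_{j\in\cJ}(\cdot)$, and then: (i) the monotonicity terms $\sum_{j\in\cJ}\big(\vvalue_{j-1}-\vvalue_{j+1}\big)(s_{j+1})$ telescope state by state (using that $\vvalue_t$ is non‑increasing) to at most $2S/(1-\gamma)$, contributing $O\!\big(S/(1-\gamma)^2\big)$, which is absorbed by the $S^2A^{1.5}U^{3.5}/(1-\gamma)^{3.5}$ term; (ii) since $\VV_t,\VV^*\le1/(1-\gamma)^2$ and $\min\{100 B_t(\cdot),1/(1-\gamma)^2\}\le1/(1-\gamma)^2$, one has $\text{UCB}_{j-1}(s_j,a_j)\le \frac{C\sqrt{U}}{(1-\gamma)\sqrt{N_{j-1}(s_j,a_j)\vee1}}+\frac{CU}{(1-\gamma)(N_{j-1}(s_j,a_j)\vee1)}$, so by Lemma~\ref{lemma:sum-of-state} ($\sum_{j\in\cJ}1/\sqrt{N_{j-1}(s_j,a_j)\vee1}\le\sqrt{SA\log(3T)|\cJ|}$ and $\sum_{j\in\cJ}1/(N_{j-1}(s_j,a_j)\vee1)\le SA\log(3T)$) together with $H=\Theta(\log T/(1-\gamma))$ the bonus sum contributes $O\!\big(U^{1.5}\sqrt{SA\,N_t(s)}/(1-\gamma)^{2.5}\big)+O\!\big(SAU^2/(1-\gamma)^2\big)$ (note $\sqrt{SA}\le SA$); (iii) the estimation error splits into $[(\PP_{j-1}-\PP)\vvalue^*](s_j,a_j)$, bounded on $\cE_1$ with $\VV^*\le1/(1-\gamma)^2$ and Cauchy–Schwarz, and $[(\PP_{j-1}-\PP)(\vvalue_{j-1}-\vvalue^*)](s_j,a_j)\le\|\PP_{j-1}-\PP\|_1\cdot\|\vvalue_{j-1}-\vvalue^*\|_\infty$, bounded on $\cE_8$ using $\|\vvalue_{j-1}-\vvalue^*\|_\infty\le1/(1-\gamma)$, both of the same order as (ii); and (iv) the martingale part $\sum_j w_j\xi_j$ with predictable weights $0\le w_j\le\gamma/(1-\gamma)$ is controlled by Lemma~\ref{lemma:azuma}. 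Collecting the constants and union‑bounding the concentration events across the recursion — which is where the $4H^2\delta$ arises — proves the first inequality.

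The second inequality then follows immediately: for every $i\le t$ with $s_i=s$ one has $\vvalue_i(s_i)-\vvalue^{\pi}_i(s_i)\ge\vvalue_t(s)-\vvalue^*(s)$, because $\vvalue$ is non‑increasing in $t$ and $\vvalue^{\pi}_i(s)\le\vvalue^*(s)$; summing over the $N_t(s)$ visits gives $N_t(s)\big(\vvalue_t(s)-\vvalue^*(s)\big)\le\text{Regret}'(t,s,0)$, hence by the first inequality $\vvalue_t(s)-\vvalue^*(s)\le\frac{16SAU^2}{(1-\gamma)^{2.5}\sqrt{N_t(s)}}+\frac{4S^2A^{1.5}U^{3}}{(1-\gamma)^{3.5}N_t(s)}$. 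The second summand is at most $\frac{4SAU^2}{(1-\gamma)^{2.5}\sqrt{N_t(s)}}$ exactly when $N_t(s)\ge S^2AU^2/(1-\gamma)^2$; when $N_t(s)$ is below this threshold the claimed bound already exceeds $1/(1-\gamma)\ge\vvalue_t(s)-\vvalue^*(s)$, so it holds in all cases.

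The step I expect to be the main obstacle is the index bookkeeping in the second paragraph. A naive per‑visit estimate of the monotonicity correction $\sum_{i:s_i=s}\sum_{k}\gamma^{h+k+1}\big(\vvalue_{i+h+k-1}-\vvalue_{i+h+k+1}\big)(s_{i+h+k+1})$ gives only $O\!\big(N_t(s)\cdot S/(1-\gamma)\big)$, which is linear in $N_t(s)$ and would destroy the bound; it is essential to first collapse the double sum onto the time index $j$ (so the geometric weights sum to $\gamma/(1-\gamma)$) and only then telescope in $j$, which brings the correction down to $O\!\big(S/(1-\gamma)^2\big)$. A secondary subtlety is that $\vvalue^{\pi}_{j+1}$ depends on the algorithm's future randomness, so each $\xi_j$ must be packaged into a genuine martingale‑difference sum before Azuma applies — as for $\cE_6$, but now over sub‑trajectories restricted to the visits of a fixed state, which is what forces the extra $H$‑factors in the failure probability.
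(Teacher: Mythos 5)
Your proposal is correct and follows essentially the same route as the paper's proof: the same one-step inequality from the update rule and the Bellman equation, the same decomposition of the error into a telescoping term for the monotone $\vvalue_t$, an empirical-transition error bounded via the $\ell_1$ concentration, a crude bound on the bonus sum via Lemma~\ref{lemma:sum-of-state}, and a martingale term, followed by the identical averaging/threshold argument for the second claim. The only difference is bookkeeping: the paper keeps the recursion $\text{Regret}'(t,s,h)\leq\text{Regret}'(t,s,h+1)+(\text{per-level error})$ and pays the factor $H$ by summing over levels (handling the overlapping-window martingale by splitting into $H$ arithmetic sub-horizons), whereas you unroll fully and collapse the double sum onto the time index with predictable geometric weights bounded by $\gamma/(1-\gamma)$ --- an equivalent accounting that produces the same orders in $S$, $A$, $U$, $N_t(s)$ and $1/(1-\gamma)$.
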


Now, we start the proof of Lemma \ref{lemma:UCB},
\begin{proof}[Proof of Lemma \ref{lemma:UCB}]

\noindent We prove this lemma by induction. At the first step $t=1$, for all $s\in \cS$, we have $\vvalue_1(s)=1/(1-\gamma)\ge \vvalue^*(s)$. When Lemma \ref{lemma:UCB} holds for the first $t$ steps, we consider for each $s\in\cS, a\in\cA$, then by the update rule \eqref{eq:update}, we have
\begin{align}
    \qvalue_{t+1}(s,a)=\min\Big\{\qvalue_{t}(s,a),\reward(s,a)+\gamma [\PP_t\vvalue_{t}](s,a)+\CC{\gamma}\text{UCB}_t(s,a)\Big\}.\notag
\end{align}
If $\qvalue_{t+1}(s,a)=\qvalue_{t}(s,a)$, then by induction, we have
\begin{align}
    \qvalue_{t+1}(s,a)\ge \reward(s,a)+\frac{8\gamma U}{1-\gamma} \ge \reward(s,a)+ \gamma[ \PP\vvalue^*](s,a)=\qvalue^*(s,a),\notag
\end{align}
where the first inequality holds due to \eqref{eq:update} in Algorithm \ref{algorithm} and the second inequality holds due to $0\leq \vvalue^*(s) \leq 1/(1-\gamma)$.
Otherwise,  if $N_{t}(s,a)=0$, then we have
\begin{align}
    \qvalue_{t+1}(s,a)=\qvalue_{t}(s,a)\ge \qvalue^{*}(s,a).\notag
\end{align}
When $N_{t}(s,a)> 0$, with probability at least $1-\delta$, we have
\begin{align}
    &\qvalue_{t+1}(s,a)-\qvalue^{*}(s,a)\notag \\
    &=\gamma [\PP_t\vvalue_{t}](s,a)+\CC{\gamma}\text{UCB}_t(s,a)-\gamma[\PP\vvalue^*](s,a)\notag\\
    &=\CC{\gamma}\text{UCB}_t(s,a)+\gamma[(\PP_t-\PP)\vvalue^*](s,a)+\gamma [\PP_t(\vvalue_{t}-\vvalue^*)](s,a)\notag\\
    &\ge \CC{\gamma}\text{UCB}_t(s,a)+\gamma[(\PP_t-\PP)\vvalue^*](s,a)\notag\\
    &\ge \CC{\gamma}\text{UCB}_t(s,a)-\CC{\gamma} \sqrt{\frac{4\VV_t^*(s,a)U}{N_t(s,a)\vee 1}}-\frac{8U\CC{\gamma}}{(1-\gamma)\big(N_t(s,a)\vee 1\big)}\notag\\
    &\ge \CC{\gamma}\sqrt{\frac{8\VV_t(s,a)U}{N_t(s,a)\vee 1}}-\CC{\gamma}\sqrt{\frac{4\VV_t^*(s,a)U}{N_t(s,a)\vee 1}}  +\CC{\gamma}\sqrt{\frac{8\sum_{s'}\PP_t(s'|s,a)\min\big\{100B_t(s'),{1}{/(1-\gamma)^2}\big\}}{N_t(s,a)\vee 1}},\label{eq:17}
\end{align}
where the first inequality holds due to $\vvalue_t(s)\ge\vvalue^*(s)$ , the second inequality holds due to Lemma~\ref{lemma:P-estimate-difference} and the third inequality holds due to the definition of $\text{UCB}_t$ in \eqref{eq:UCB1}. For the term $\VV_t^*(s,a)$, we have
\begin{align}
    \VV_t^*(s,a)&=\EE_{s'\sim \PP_t(\cdot|s,a)}\bigg[\big(\vvalue^*(s')-\EE[\vvalue
    ^*(s')]\big)^2\bigg]\notag\\
    &=\EE_{s'\sim \PP_t(\cdot|s,a)}\bigg[\big(\vvalue^*(s')-\vvalue_t(s')-\EE[\vvalue^*(s')-\vvalue_t(s')]+\vvalue_t(s')-\EE[\vvalue_t(s')]\big)^2\bigg]\notag\\
    &\leq 2\EE_{s'\sim \PP_t(\cdot|s,a)}\bigg[\big(\vvalue_t(s')-\EE[\vvalue_t(s')]\big)^2\bigg]\notag\\
    &\qquad +2\EE_{s'\sim \PP_t(\cdot|s,a)}\bigg[\big(\vvalue^*(s')-\vvalue_t(s')-\EE[\vvalue^*(s')-\vvalue_t(s')]\big)^2\bigg]\notag\\
    &\leq 2\VV_t(s,a)+2\EE_{s'\sim \PP_t(\cdot|s,a)}\bigg[\big(\vvalue^*(s')-\vvalue_t(s')\big)^2\bigg],\label{eq:12}
\end{align}
where the first inequality holds due to $(x+y)^2 \leq 2x^2+2y^2$ and the second inequality holds due to $\EE\big[(X-\EE[X])^2\big]\leq \EE[X^2]$. Substituting \eqref{eq:12} into \eqref{eq:17}, with probability at least $1-4(t+1)H^2\delta$, we have
\begin{align}
    \qvalue_{t+1}(s,a)-\qvalue^{*}(s,a)
    & \ge \CC{\gamma}\sqrt{\frac{8\VV_t(s,a)U}{N_t(s,a)\vee 1}} +\CC{\gamma}\sqrt{\frac{8\sum_{s'}\PP_t(s'|s,a)\min\big\{100B_t(s'),{1}{/(1-\gamma)^2}\big\}}{N_t(s,a)\vee 1}}\notag\\
    &\qquad - \CC{\gamma}\sqrt{\frac{8\VV_t(s,a)U+8U\EE_{s'\sim \PP_t(\cdot|s,a)}\big(\vvalue^*(s')-\vvalue_t(s')\big)^2}{N_t(s,a)\vee 1}}\notag\\
    &\ge \CC{\gamma}\sqrt{\frac{8\sum_{s'}\PP_t(s'|s,a)\min\big\{100B_t(s'),{1}{/(1-\gamma)^2}\big\}}{N_t(s,a)\vee 1}}\notag\\
    &\qquad -\CC{\gamma}\sqrt{\frac{8U\EE_{s'\sim \PP_t(\cdot|s,a)}\big(\vvalue^*(s')-\vvalue_t(s')\big)^2}{N_t(s,a)\vee 1}}\notag\\
    &\ge 0,\notag
\end{align}
where the first inequality holds due to \eqref{eq:17}, the second inequality holds due to \eqref{eq:12}, the third inequality holds due to $\sqrt{a+b}\leq \sqrt{a}+\sqrt{b}$, the last inequality holds due to Lemma \ref{lemma:regret-state} with probability at least $1-4H^2\delta$ and induction hypothesis with probability at least $1-4tH^2\delta$. In addition, for all $s\in \cS$, we have
\begin{align}
    \vvalue_{t+1}(s)=\max_{a\in \cA} \qvalue_{t+1}(s,a)\ge \max_{a\in \cA} \qvalue^*(s,a)=\vvalue^*(s).\notag
\end{align}
Thus, by induction, we complete the proof of Lemma \ref{lemma:UCB}.
\end{proof}

\subsection{Proof of Lemma \ref{theorem: I_1}}
\begin{proof}[Proof of Lemma \ref{theorem: I_1}]

\noindent We have
\begin{align}
&\sum_{t=1}^{T}\gamma\big(\vvalue_{t-1}(s_{t+1})-\vvalue^{\pi}_{t+1}(s_{t+1})\big)\notag\\
&\qquad=\underbrace{\gamma \sum_{t=1}^{T}\big(\vvalue_{t-1}(s_{t+1})-\vvalue_{t+1}(s_{t+1})\big)}_{I_1}+\underbrace{\gamma \sum_{t=1}^{T}\big(\vvalue_{t+1}(s_{t+1})-\vvalue^{\pi}_{t+1}(s_{t+1})\big)}_{I_2}.\notag
\end{align}
For the term $I_1$, we have 
\begin{align}
    \sum_{t=1}^{T}\gamma\big(\vvalue_{t-1}(s_{t+1})-\vvalue_{t+1}(s_{t+1})\big)
    & \leq \gamma\sum_{t=1}^{T} \sum_{s\in \cS}\big[\vvalue_{t-1}(s)-\vvalue_{t+1}(s)\big] \notag\\
    &=\gamma\sum_{s\in \cS}\sum_{t=1}^{T} \big[\vvalue_{t-1}(s)-\vvalue_{t+1}(s)\big]\notag\\
    &=\gamma\sum_{s\in \cS}\big (\vvalue_0(s)+\vvalue_1(s)-\vvalue_{T}(s)-\vvalue_{T+1}(s)\big)\notag\\
    &\leq \frac{2S\gamma}{1-\gamma},\label{eq:I_1}
\end{align}
where the first inequality holds due to $\vvalue_{t-1}(s)\ge \vvalue_{t+1}(s)$ by \eqref{eq:update} in Algorithm \ref{algorithm}, and the second inequality holds due to $0\leq \vvalue_t(s)\leq 1/(1-\gamma)$. 
For the term $I_2$, we have
\begin{align}
    I_2&=\gamma\sum_{t=2}^{T+1}\big(\vvalue_{t}(s_{t})-\vvalue^{\pi}_{t}(s_{t})\big)\notag\\
    &=\gamma\text{Regret}'(T)+\gamma\big(\vvalue_{T+1}(s_{T+1})-\vvalue^{\pi}_{T+1}(s_{T+1})\big)-\gamma\big(\vvalue_{1}(s_{1})-\vvalue^{\pi}_{1}(s_{1})\big)\notag\\
    &\leq \gamma\text{Regret}'(T)+\frac{2\gamma}{1-\gamma},\label{eq:I_4}
\end{align}
where the inequality holds due to $0
\leq \vvalue_t(s),\vvalue^{\pi}_t(s)\leq 1/(1-\gamma)$. 
Combining \eqref{eq:I_1} and \eqref{eq:I_4}, we complete the proof of Lemma \ref{theorem: I_1}.
\end{proof}

\subsection{Proof of Lemma \ref{theorem: J_1}}
\begin{proof}[Proof of Lemma \ref{theorem: J_1}]

\noindent On the event $\cE$, we have
\begin{align}
    &\sum_{t=1}^{T}\gamma\big[(\PP_{t-1}-\PP)(\vvalue_{t-1} -\vvalue^*)\big] (s_t,a_t)\notag \\
    &\qquad=\gamma\sum_{t=1}^{T} \sum_{s'\in \cS}\big(\PP_{t-1}(s'|s_t,a_t)-\PP(s'|s_t,a_t)\big)\big(\vvalue_{t-1}(s')-\vvalue^*(s'))\notag\\
    &\qquad\leq \sum_{t=1}^{T} \sum_{s'\in \cS} \Bigg[\sqrt{\frac{2\PP(s'|s_t,a_t)(1-\PP(s'|s_t,a_t))\CC{\log(2ST/\delta)}}{N_{t-1}(s_t,a_t)\vee 1}}+\frac{\CC{2\log(ST/\delta)}}{3\big(N_{t-1}(s_t,a_t)\vee 1\big)}\Bigg]
    \notag\\
    &\qquad \qquad \times\big(\vvalue_{t-1}(s')-\vvalue^*(s')\big)\notag\\
    &\qquad\leq  \underbrace{\sum_{t=1}^{T}  \sum_{s'\in \cS} \sqrt{2\CC{\log(ST/\delta)}}\sqrt{\frac{\PP(s'|s_t,a_t)}{N_{t-1}(s_t,a_t)\vee 1}}\big(\vvalue_{t-1}(s')-\vvalue^*(s')\big)}_{I_1}
    \notag\\
    &\qquad \qquad+\underbrace{\sum_{t=1}^{T} \frac{ 2S\CC{\log(ST/\delta)}}{3(1-\gamma)\big(N_{t-1}(s_t,a_t)\vee 1\big)}}_{I_2},\label{eq:J_1}
\end{align}
 where first inequality holds due to the definition of $\cE_2$ and the second inequality holds due to $0 \leq \vvalue_{t+1}(s')-\vvalue^*(s')\leq 1/(1-\gamma)$.
To bound term $I_1$, we separate $\cS$ into two subsets $\cS^1_t \cup \cS^2_t$, where 
\begin{align}
    \cS^1_t = \bigg\{s \in \cS: \PP(s|s_t,a_t)\big(N_{t-1}(s_t, a_t)\vee 1\big)\ge \frac{8\CC{\log(ST/\delta)}}{(1-\gamma)^2}\bigg\},\ \cS^2_t = \cS/\cS^1_t.\notag
\end{align}
Then on the event $\cE_4$, we have
\begin{align}
        I_1&=\sum_{t=1}^{T}  \sum_{s'\in \cS^1_t} \PP(s'|s_t,a_t) \sqrt{2\CC{\log(ST/\delta)}  }\sqrt{\frac{1}{ \PP(s'|s_t,a_t)\big(N_{t-1}(s_t, a_t)\vee 1\big)}}\big(\vvalue_{t-1}(s')-\vvalue^*(s')\big)\notag\\
    &\qquad +\sum_{t=1}^{T}  \sum_{s'\in \cS^2_t}  \frac{\sqrt{2\CC{\log(ST/\delta)}\PP(s'|s_t,a_t)\big(N_{t-1}(s_t, a_t)\vee 1\big)}}{N_{t-1}(s_t, a_t)\vee 1}\big(\vvalue_{t-1}(s')-\vvalue^*(s')\big)\notag\\
    &\leq \sum_{t=1}^{T}  \sum_{s'\in \cS^1_t}(1-\gamma) \PP(s'|s_t,a_t) \big(\vvalue_{t-1}(s')-\vvalue^*(s')\big)/2 \notag\\
    &\qquad +\sum_{t=1}^{T}  \sum_{s'\in \cS^2_t}  \frac{4\CC{\log(ST/\delta)}}{3(1-\gamma)^2\big(N_{t-1}(s_t, a_t)\vee 1\big)}\notag\\
    &\leq \sum_{t=1}^{T}  \sum_{s'\in \cS^1_t}(1-\gamma) \PP(s'|s_t,a_t) \big(\vvalue_{t-1}(s')-\vvalue^*(s')\big)/2 +\frac{4S^2A\CC{\log(ST/\delta)\log(3T)}}{3(1-\gamma)^2}\notag\\
    &\leq \sum_{t=1}^{T} \sum_{s'\in \cS} (1-\gamma) \PP(s'|s_t,a_t) \big(\vvalue_{t-1}(s')-\vvalue^*(s')\big)/2 +\frac{4S^2A\CC{\log(ST/\delta)\log(3T)}}{3(1-\gamma)^2}\notag\\
    &\leq (1-\gamma)/2\cdot\bigg[{\sum_{t=1}^{T}\big(\vvalue_{t-1}(s_{t+1})-\vvalue^*(s_{t+1})\big)}+\frac{\sqrt{2T\log(1/\delta)}}{1-\gamma}\bigg]+\frac{4S^2A\CC{\log(ST/\delta)\log(3T)}}{3(1-\gamma)^2}\notag\\
    &\leq  (1-\gamma)/2\cdot\sum_{t=1}^{T}\big(\vvalue_{t-1}(s_{t+1})-\vvalue^{\pi}_{t+1}(s_{t+1})\big)+\sqrt{2T\CC{\log(1/\delta)}}+\frac{4S^2A\CC{\log(ST/\delta)\log(3T)}}{3(1-\gamma)^2}\notag \\
    &\leq  (1-\gamma)/2\cdot\bigg[\text{Regret}'(T)+\frac{(2S+2)}{1-\gamma}\bigg]+\sqrt{2T\CC{\log(1/\delta)}}+\frac{4S^2A\CC{\log(ST/\delta)\log(3T)}}{3(1-\gamma)^2}
    ,\label{eq:K_1}
    \end{align}
    where the first inequality holds due to separate condition of $\PP(s')$, the second inequality holds due to Lemma \ref{lemma:sum-of-state}, the third inequality holds due to $\vvalue_{t-1}(s')\ge \vvalue^*(s')$, the fourth inequality holds due to the definition of event $\cE_4$, the fifth inequality holds due to $\vvalue^*\ge\vvalue^{\pi}_{t+1}$, and the last inequality holds due to Lemma \ref{theorem: I_1}. 
    For the term $I_2$, according to Lemma \ref{lemma:sum-of-state}, we have
\begin{align}
    I_2\leq \frac{2S^2A\CC{\log(ST/\delta)\log(3T)}}{3(1-\gamma)}.\label{eq:K_2}
\end{align}
Substituting \eqref{eq:K_1},\eqref{eq:K_2} into \eqref{eq:J_1}, we complete the proof of Lemma \ref{theorem: J_1}.
\end{proof}

\subsection{Proof of Lemma \ref{theorem: J_2}}
\begin{proof}[Proof of Lemma \ref{theorem: J_2}]

\noindent On the event $\cE_1\cap \cE_5 \cap \cE_7$, we have
\begin{align}
    &\sum_{t=1}^{T}\gamma[(\PP_{t-1}-\PP)\vvalue^*](s_t,a_t)\notag\\
    &\qquad\leq \sum_{t=1}^T \CC{\gamma} \sqrt{\frac{2\VV^*(s_t,a_t)\CC{\log(SAT/\delta)}}{N_{t-1}(s_t,a_t)\vee 1}}+\frac{2\CC{\log(SAT/\delta)}\CC{\gamma}}{(1-\gamma)\big(N_{t-1}(s_t, a_t)\vee 1\big)}\notag\\
    &\qquad\leq \CC{\gamma}\sqrt{2 \CC{\log(SAT/\delta)}}\sqrt{\sum_{t=1}^T\VV^*(s_t,a_t)} \sqrt{\sum_{t=1}^T \frac{1}{N_{t-1}(s_t,a_t)\vee 1}}\notag\\
    &\qquad \qquad+\sum_{t=1}^T \frac{2\gamma \CC{\log(SAT/\delta)}}{(1-\gamma)\big(N_{t-1}(s_t,a_t)\vee 1\big)}\notag\\
    &\qquad\leq \CC{\gamma}U\sqrt{2 SA}\sqrt{\sum_{t=1}^T\VV^*(s_t,a_t)}+\frac{2\gamma SAU^2}{1-\gamma}\notag\\
    &\qquad= \CC{\gamma}U\sqrt{2 SA}\sqrt{\sum_{t=1}^T\VV^{\pi}_t(s_t,a_t)+\sum_{t=1}^T\VV^{*}(s_t,a_t)-\sum_{t=1}^T\VV^{\pi}_t(s_t,a_t)} +\frac{2\gamma SAU^2}{1-\gamma}\notag\\
    &\qquad\leq U\sqrt{2 SA}\sqrt{\frac{5T}{1-\gamma}+\frac{29U}{3(1-\gamma)^3}+\frac{2\text{Regret}’(T)}{1-\gamma}+\frac{\sqrt{2TU}}{(1-\gamma)^2}}+\frac{2 SAU^2}{1-\gamma},\label{eq:J_2}
\end{align}
where the first inequality holds due to the definition of event $\cE_1$, the second inequality holds due to Cauchy-Schwarz inequality, the third inequality holds due to Lemma \ref{lemma:sum-of-state} and the definition of $U$, and the last inequality holds due to Lemma \ref{lemma:opt-variance-gap} and the definition of event $\cE_5$. Thus, we complete the proof of Lemma \ref{theorem: J_2}.
\end{proof}

\subsection{Proof of Lemma \ref{theorem: I_5}}
\begin{proof}[Proof of Lemma \ref{theorem: I_5}]

\noindent For the term $\text{UCB}_{t-1}(s_t,a_t)$, we have
\begin{align}
   \sum_{t=1}^{T} \CC{\gamma}\text{UCB}_{t-1}(s_t,a_t) &\leq  \underbrace{\sum_{t=1}^{T} 
    \CC{\gamma}\sqrt{\frac{8U\VV_{t-1}(s_t,a_t)}{N_{t-1}(s_t,a_t)\vee 1}}}_{I_1}+\underbrace{\sum_{t=1}^{T}\CC{\gamma}\frac{8U}{(1-\gamma)\big(N_{t-1}(s_t, a_t)\vee 1\big)}}_{I_2}\notag\\
        &\qquad +\underbrace{\sum_{t=1}^{T}\CC{\gamma}\sqrt{\frac{8\sum_{s'}\PP_t(s'|s_t,a_t)\min\big\{100B_t(s'),{1}{/(1-\gamma)^2}\big\}}{N_{t-1}(s_t,a_t)\vee 1}}}_{I_3}.\label{eq:I_5}
\end{align}
For the term $I_1$, on the event $\cE_5\cap \cE_6\cap\cE_8$, we have
\begin{align}
    I_1&\leq  \CC{\gamma}\sqrt{8U\sum_{t=1}^{T} \VV_{t-1}(s_t,a_t)}\sqrt{\sum_{t=1}^{T}\frac{1}{N_{t-1}(s_t,a_t)\vee 1} }\notag\\
    &\leq \CC{\gamma}U\sqrt{8SA} \sqrt{\sum_{t=1}^{T} \VV_{t-1}(s_t,a_t)}\notag\\
    &=\CC{\gamma}U\sqrt{8SA} \sqrt{\sum_{i=1}^T\VV^{\pi}_t(s_t,a_t)+\sum_{t=1}^{T} \VV_{t-1}(s_t,a_t)-\sum_{i=1}^T\VV^{\pi}_t(s_t,a_t)}\notag\\
    &\leq U\sqrt{8SA}\sqrt{\frac{5T}{1-\gamma}+\frac{29U}{3(1-\gamma)^3}+\frac{2\text{Regret}’(T)}{1-\gamma}+\frac{9SU\sqrt{AT}}{(1-\gamma)^2}},\label{eq:L_1}
\end{align}
where the first inequality holds due to Cauchy-Schwarz inequality, the second inequality holds due to Lemma \ref{lemma:sum-of-state}, the last inequality holds due to the definition of event $\cE_5$ and Lemma \ref{lemma:estimate-variance-gap}. For the term $I_2$, by Lemma \ref{lemma:sum-of-state}, we have
\begin{align}
    I_2 = \sum_{t=1}^{T}\frac{8U}{(1-\gamma)\big(N_{t-1}(s_t, a_t)\vee 1\big)}\leq \frac{8 SAU^2}{1-\gamma}.\label{eq:L_2}
\end{align}
For the term $I_3$, on the event $\cE_8\cap \cE_9$, we have
\begin{align}
    &I_3\notag\\
    &\leq \sqrt{8\sum_{t=1}^{T}\frac{1}{N_{t-1}(s_t,a_t)\vee 1}}\sqrt{\sum_{t=1}^{T}\sum_{s'}\PP_t(s'|s_t,a_t)\min\bigg\{\frac{100S^2A^2U^5}{(1-\gamma)^5N_{t-1}(s')},\frac{1}{(1-\gamma)^2}\bigg\}}\notag\\
    &\leq \sqrt{8SAU}\sqrt{\sum_{t=1}^{T}\sum_{s'}\PP_t(s'|s_t,a_t)\min\bigg\{\frac{100S^2A^2U^5}{(1-\gamma)^5\big(N_{t-1}(s')\vee 1\big)},\frac{1}{(1-\gamma)^2}\bigg\}}\notag\\
    &\leq  \sqrt{8SAU}\cdot\notag\\
    & \sqrt{\sum_{i=1}^T\frac{\sqrt{2SU}}{(1-\gamma)^2\sqrt{N_t(s_t,a_t)\vee 1}}+\sum_{t=1}^{T}\sum_{s'}\PP(s'|s_t,a_t)\min\bigg\{\frac{100S^2A^2U^5}{(1-\gamma)^5\big(N_{t-1}(s')\vee 1\big)},\frac{1}{(1-\gamma)^2}\bigg\}}\notag\\
    &\leq \sqrt{8SAU}\sqrt{\frac{SU\sqrt{2AT}}{(1-\gamma)^2}+\frac{\sqrt{2TU}}{(1-\gamma)^2}+\sum_{t=1}^T \min\bigg\{\frac{100S^2A^2U^5}{(1-\gamma)^5\big(N_{t-1}(s_{t+1})\vee 1\big)},\frac{1}{(1-\gamma)^2}\bigg\}}\notag\\
    &\leq \sqrt{8SAU} \sqrt{\frac{SU\sqrt{2AT}}{(1-\gamma)^2}+\frac{\sqrt{2TU}}{(1-\gamma)^2}+\frac{100S^3A^2U^6}{(1-\gamma)^5}},\label{eq:L_3}
\end{align}
where the first inequality holds due to Cauchy-Schwarz inequality, the second inequality holds due to Lemma \ref{lemma:sum-of-state}, the third inequality holds due to the definition of event $\cE_8$, the forth inequality holds due to the definition of event $\cE_9$ and the last inequality holds due to Lemma \ref{lemma:sum-of-state}. Substituting \eqref{eq:L_1}, \eqref{eq:L_2} and \eqref{eq:L_3} into \eqref{eq:I_5}, we complete the proof of Lemma \ref{theorem: I_5}.
\end{proof}

\section{Proof of Lemmas in Section \ref{section: second}}

\subsection{Proof of Lemma \ref{lemma:transition}}
\begin{proof}[Proof of Lemma \ref{lemma:transition}]

\noindent We have
\begin{align}
    \EE^*\bigg[\sum_{t=1}^{T} \vvalue^*(s_t) - \vvalue^{\pi}_t(s_t)\bigg]
&=\EE^*\bigg[\sum_{t=1}^{T} \vvalue^*(s_t) - \sum_{k=0}^{\infty}\gamma^k\reward(s_{t+k},a_{t+k})\bigg]\notag\\
    &=\EE^*\bigg[\sum_{t=1}^{T} \Big(\vvalue^*(s_t) - \sum_{k=0}^{t}\gamma^k\reward(s_{t},a_{t})\Big)-\sum_{t=T+1}^{\infty}{\sum_{k=0}^{T}}\gamma^{t-k}\reward(s_t,a_t)\bigg] \notag\\
    &\ge \EE^*\bigg[\sum_{t=1}^{T} \vvalue^*(s_t)-\frac{\reward(s_t,a_t)}{1-\gamma}\bigg]-\sum_{t=T+1}^{\infty}{\sum_{k=0}^{T}}\gamma^{t-k}\notag\\
    &\ge \EE^*\bigg[\sum_{t=1}^{T} \vvalue^*(s_t)-\frac{\reward(s_t,a_t)}{1-\gamma}\bigg]-\frac{4}{(1-\gamma)^2}.
\end{align}
where the first inequality holds due to $0\leq \reward(s_t,a_t)\leq 1$ and the last inequality holds due to $\sum_{k=0}^{\infty} \gamma^k=1/(1-\gamma)$. Thus, we finish the proof of Lemma \ref{lemma:transition}.
\end{proof}

\subsection{Proof of Lemma \ref{lemma:suhao}}
\begin{proof}[Proof of Lemma \ref{lemma:suhao}]
In this proof, we follow the proof technique in \cite{liu2020regret} and \cite{jaksch2010near}.
For simplicity, we denote $\epsilon=\sqrt{A(1-\gamma)/K}/24$ and we first determine the optimal policy in these hard-to-learn MDPs. According to \eqref{eq:bellman}, for optimal policy $\pi^*$, we have
\begin{align}
     \qvalue^*(s,a)
    = \reward(s,a) + \gamma [\PP \vvalue^*](s,a),\notag
\end{align}
For each $j\in [S]$ and state $s=s_{j,1}$, the choice of action $a$ will not effect the reward $\reward(s,a)$ and the probability transition function $\PP(\cdot|s,a)$. For optimal action $a^*$ at state $s=s_{j,0}$, we have
\begin{align}
    \vvalue^*(s_{j,0})
   & = \reward(s,a) + \gamma [\PP \vvalue^*](s,a^*)\notag\\
&=0+\gamma \PP(s_{j,0}|s_{j,0},a^*) \vvalue^*(s_{j,0})+\gamma\PP(s_{j,1}|s_{j,0},a^*) \vvalue^*(s_{j,1}).\notag
\end{align}
Since $\PP(s_{j,0}|s_{j,0},a^*)+\PP(s_{j,1}|s_{j,0},a^*)=1$, we have
\begin{align}
    (1-\gamma) \vvalue^*(s_{j,0}) =\gamma \big(\vvalue^*(s_{j,1})- \vvalue^*(s_{j,0})\big),\notag
\end{align}
and it implies that $\vvalue^*(s_{j,1})\ge \vvalue^*(s_{j,0})$. Therefore, for all action $a\ne a_{j}^* $, we  have $\qvalue^*(s_{j,0},a_{j}^*)\ge \qvalue^*(s_{j,0},a)$ and it further implies that the optimal action at state $s=s_{j,0}$ is $a_{j}^*$. Thus, according to the optimal bellman equation \ref{eq:bellman}, for each $j\in [S]$, we have
\begin{align}
    \vvalue^*(s_{j,0})&=\gamma(1-\gamma+\epsilon)\vvalue^*(s_{j,1})+\gamma(\gamma-\epsilon)\vvalue^*(s_{j,0}),\notag\\
    \vvalue^*(s_{j,1})&=1+\gamma(1-\gamma)\vvalue^*(s_{j+1,1})+\gamma^2\vvalue^*(s_{j,1}),\notag
\end{align}
and it implies that the optimal value function $\vvalue^*$ is 
\begin{align}
    &\vvalue^*(s_{j,0})=\frac{\gamma-\gamma^2+\gamma\epsilon}{(1-\gamma)(1-2\gamma^2+\gamma+\gamma\epsilon)},\notag\\
    &\vvalue^*(s_{j,1})=\frac{1-\gamma^2+\gamma\epsilon}{(1-\gamma)(1-2\gamma^2+\gamma+\gamma\epsilon)}.\notag
\end{align}
\noindent When an agent visits the state set $\{s_{j,0}, s_{j,1}\}$ for the $i$-th time, we denote the state in $\{s_{j,0}, s_{j,1}\}$ it visited as $X_{j,i}$, and the following action selected by the agent as $A_{j,i}$. For each $j\in[S]$, by the definition of $X_{j,i}$, we have
\begin{align}
    &\PP(X_{j,i}=s_{j,1}|X_{j,i-1}=s_{j,0},A_{j,i-1})=1-\gamma+\ind_{A_{j,i}=a^*_j} \epsilon, \notag\\
    &\PP(X_{j,i}=s_{j,0}|X_{j,i-1}=s_{j,0},A_{j,i-1})=\gamma-\ind_{a=a^*_j} \epsilon, \notag\\
    &\PP(X_{j,i}=s_{j,0}|X_{j,i-1}=s_{j,0},A_{j,i-1})=1-\gamma, \notag\\
    &\PP(X_{j,i}=s_{j,1}|X_{j,i-1}=s_{j,1},A_{j,i-1})=\gamma, \notag
\end{align}
where the third equality holds because when $X_{j,i-1}$ leave state $s_{j,0},s_{j,1}$, the next state in $s_{j,0},s_{j,1}$ must be $s_{j,0}$. Similar to the proof of Theorem 5 in \cite{jaksch2010near}, we focus on the first $K$ visits to the state set $\{s_{j,0},s_{j,1}\}$ and let random variable $N_0,N_1$ and $N_0^*$ denote the total number of visit state $s_{j,0}$, the total number of visit state $s_{j,1}$ and the total number of visit state $s_{j,0}$ with action $a_j^*$. By the same argument as the proof of Theorem 5 in \cite{jaksch2010near}, for the random variable $N_1$ and $N_0^*$, we have following property:
\begin{align}
    \EE[N_1]&\leq \frac{K}{2}+ \frac{1}{2(1-\gamma)}+ \frac{\epsilon \EE[N_0^*]}{1-\gamma},\label{eq:1000}
\end{align}
and
\begin{align}
    \EE[N_0^*]&\leq \frac{K}{2A}+\frac{1}{2A(1-\gamma)}+\frac{\epsilon K}{2}\sqrt{\frac{K}{A(1-\gamma)}}+\frac{\epsilon K}{2\sqrt{A}(1-\gamma)}. \label{eq:2000}
\end{align}
Therefore, the regret can be upper bounded by
\begin{align}
   &\EE^*\bigg[\sum_{i=1}^K \vvalue^*(X_{j,i})-\frac{\reward(X_{j,i},A_{j,i})}{1-\gamma}\bigg]\notag\\
   &=\EE [N_0] \big(\vvalue^*(s_{j,0})-0\big)+\EE[N_1]\bigg(\vvalue^*(s_{j,1})-\frac{1}{1-\gamma}\bigg)\notag\\
   &=\frac{(\gamma-\gamma^2+\gamma\epsilon)\big(K-\EE[N_1]\big)-(\gamma-\gamma^2)\EE[N_1]}{(1-\gamma)(1-2\gamma^2+\gamma+\gamma\epsilon)}\notag\\
   &\ge  \frac{\frac{K\gamma\epsilon}{2}-\gamma-\frac{\gamma\epsilon}{2(1-\gamma)}-\frac{\EE[N_0^*]\epsilon(2\gamma-2\gamma^2+\gamma \epsilon)}{1-\gamma}}{(1-\gamma)(1-2\gamma^2+\gamma+\gamma\epsilon)}\notag\\
   &\ge \frac{\frac{K\gamma\epsilon}{2}-\gamma-\frac{\gamma\epsilon}{2(1-\gamma)}-\bigg(\frac{K}{2A}+\frac{1}{2A(1-\gamma)}+\frac{\epsilon K}{2}\sqrt{\frac{K}{A(1-\gamma)}}+\frac{\epsilon K}{2\sqrt{A}(1-\gamma)}\bigg)\frac{\epsilon(2\gamma-2\gamma^2+\gamma \epsilon)}{1-\gamma}}{(1-\gamma)(1-2\gamma^2+\gamma+\gamma\epsilon)}.\label{eq:3000}
\end{align}
where the second inequality holds due to the fact that $\EE[N_0]+\EE[N_1]=K$, the third inequality holds due to \eqref{eq:1000} and the last inequality holds due to \eqref{eq:2000}. Since $K\ge 10SA/(1-\gamma)^4$, $\gamma> 2/3$ and $A\ge 30$, \eqref{eq:3000} can be further bounded by
\begin{align}
    &\EE^*\bigg[\sum_{i=1}^K \vvalue^*(X_{j,i})-\frac{\reward(X_{j,i},A_{j,i})}{1-\gamma}\bigg]\notag\\
    &\ge \frac{\frac{K\gamma\epsilon}{2}-\gamma-\frac{\gamma\epsilon}{2(1-\gamma)}-\bigg(\frac{K}{2A}+\frac{1}{2A(1-\gamma)}+\frac{\epsilon K}{2}\sqrt{\frac{K}{A(1-\gamma)}}+\frac{\epsilon K}{2\sqrt{A}(1-\gamma)}\bigg)\frac{\epsilon(2\gamma-2\gamma^2+\gamma \epsilon)}{1-\gamma}}{(1-\gamma)(1-2\gamma^2+\gamma+\gamma\epsilon)}\notag\\
    &\ge \gamma \times \frac{\frac{K\epsilon}{4}-1-3\epsilon\bigg(\frac{5K}{8A}+\frac{\epsilon K}{2}\sqrt{\frac{K}{A(1-\gamma)}}+\frac{\epsilon K}{2\sqrt{A}(1-\gamma)}\bigg)}{(1-\gamma)(1-2\gamma^2+\gamma+\gamma\epsilon)}\notag\\
    &\ge\gamma \times \frac{\frac{\sqrt{AK(1-\gamma)}}{576}-1}{(1-\gamma)(1-2\gamma^2+\gamma+\gamma\epsilon)}\notag\\
    &\ge \frac{\sqrt{AK}}{2304(1-\gamma)^{1.5}}-\frac{1}{(1-\gamma)^2}, 
\end{align}
where the second inequality holds to  $\epsilon=\sqrt{A(1-\gamma)/K}/24\leq 1-\gamma$ with  $K\ge 10SA/(1-\gamma)^4$, the third inequality holds due to $\epsilon=\sqrt{A(1-\gamma)/K}/24$ with $A\ge 30$ and the last inequality holds due to $\gamma \ge 2/3$ and $\epsilon=\sqrt{A(1-\gamma)/K}/24\leq 1-\gamma$. Therefore, we finish the proof of Lemma \ref{lemma:suhao}.

\end{proof}

\subsection{Proof of Lemma \ref{lemma:remain}}
\begin{proof}[Proof of Lemma \ref{lemma:remain}]

\noindent For each $j\in[S]$ and $t\in[T]$, we denote $H=\lfloor {\log T}/{(1-\gamma)} \rfloor+1$, random variable 
\begin{align}
    Y_{j,i}=\sum_{k=0}^H\gamma^k\reward(X_{j,i+k},A_{j,i+k}),\notag
\end{align}
and filtration $\mathcal{F}_{j,i}$ contain all random variable before $X_{j,i+H}$. For simplicity, we ignore the subscript $j$ and only focus on the subscript $i$.

Since $Y_i$ is $\mathcal{F}_{i}$-measurable and $0\leq Y_i\leq {1}/{(1-\gamma)}$ , for each $k\in[H]$, with probability at least $1-\delta$, we have
\begin{align}
    \sum_{i=\lfloor \frac{K}{H}\rfloor+1}^{\lfloor \frac{t}{H}\rfloor+1} Y_{iH+k}&\leq   \sum_{i=\lfloor \frac{K}{H}\rfloor+1}^{\lfloor \frac{t}{H}\rfloor+1} \EE\bigg[ Y_{iH+k}| \mathcal{F}_{(i-1)H+k}\bigg]+\sqrt{\frac{2t}{1-\gamma}\log \frac{1}{\delta}}\notag\\
    &=\sum_{i=\lfloor \frac{K}{H}\rfloor+1}^{\lfloor \frac{t}{H}\rfloor+1} \vvalue^
    {\pi}_{iH+k}(X_{iH+k})+\sqrt{\frac{2t}{1-\gamma}\log \frac{1}{\delta}} \notag\\
    &\leq \sum_{i=\lfloor \frac{K}{H}\rfloor+1}^{\lfloor \frac{t}{H}\rfloor+1} \vvalue^*(X_{iH+k})+\sqrt{\frac{2t}{1-\gamma}\log \frac{1}{\delta}}, \label{eq:31}
\end{align}
where the first inequality holds due to Lemma \ref{lemma:azuma} and the second inequality holds due to the definition of optimal value function $\vvalue^*$. Taking summation of \eqref{eq:31}, for all $k\in[H]$, with probability at least $1-H\delta$, we have
\begin{align}
    \sum_{i=K+1}^{t} \vvalue^*(X_{i}) + \frac{\sqrt{2t\log \frac{1}{\delta}\log T}}{(1-\gamma)^{1.5}}&\ge\sum_{i=K+1}^{t} Y_i\notag\\
    &=\sum_{i=K+1}^{t} \sum_{k=0}^H\gamma^k\reward(X_{i+k},A_{i+k})\notag\\
    &\ge \sum_{i=K+1}^t r(X_i,A_i) \sum_{k=0}^{\min(H,i-K-1)}\gamma^i\notag\\
    &\ge  \sum_{i=K+1}^t \frac{r(X_i,A_i)}{1-\gamma}- \frac{4}{(1-\gamma)^2},\notag
\end{align}
where the second inequality holds due to $0\leq\reward(s,a)\leq 1.$
Finally, taking union for all $j\in[S]$ and $t\in[T]$, we complete the proof.
\end{proof}

\subsection{Proof of Lemma \ref{lemma:length}}
\begin{proof}[Proof of Lemma \ref{lemma:length}]
\noindent Let $Y_{j,i}$ be an indicator random variables which denote whether the agent at state $X_{j,i}$ with action $A_{j,i}$ goes to the different state. $Y_{j,i} = 1$ if the agent goes to the different state and $Y_{j,i} = 0$ if the agent stay at the same state. Let filtration $\mathcal{F}_{j,i}$ contain all random variables before $X_{j,i}$. Then, for each $j\in[S]$, with probability at least $1-\delta$, we have
\begin{align}
    \sum_{i=1}^K Y_{j,i}&\leq \sum_{i=1}^K \EE\big[Y_{j,i}|\mathcal{F}_{j,i-1}\big]+\sqrt{2K\log \frac{1}{\delta}}\leq (1-\gamma+\epsilon) K+ \sqrt{2K\log \frac{1}{\delta}} \leq 3(1-\gamma)K,\label{eq:ooo0}
\end{align}
where the first inequality holds due to Lemma \ref{lemma:azuma}, the second inequality holds due to the definition of our MDPs and the last one holds due to the selection of $K$. Similarly, with probability at least $1-\delta$, we have
\begin{align}
    \sum_{i=1}^{5K} Y_{j.i}& \ge \sum_{i=1}^{2K} \EE\big[Y_{j,i}|\mathcal{F}_{j,i-1}\big]-\sqrt{10K\log \frac{1}{\delta}}\ge 5K(1-\gamma)-\sqrt{10K\log \frac{1}{\delta}}\ge 4(1-\gamma) K,\label{eq:41}
\end{align}
where the first inequality holds due to Lemma \ref{lemma:azuma}, the second inequality holds due to the definition of our MDPs and the last one holds due to the selection of $K$. Taking a union bound \eqref{eq:ooo0} and \eqref{eq:41} for all $j \in [S]$, then we have \eqref{eq:ooo0} and \eqref{eq:41} hold with probability at least $1-2S\delta$. Let $Z_{j,i}$ be the number of times for the agent to start from state $s_{j,i}$ and travel the next different state in the first $T$ steps. By definition, we have
\begin{align}
    Z_{j,0}+Z_{j,1}=\sum_{i=1}^{T_j}Y_{j,i}.\label{ooo41}
\end{align}
By Pigeonhole principle, there exist a $j^*$ such that $T_{j^*}\ge {T}/{S}=10K>5K$. Therefore, we have
\begin{align}
Z_{j^*,0}+Z_{j^*,1}=\sum_{i=1}^{T_{j^*}}Y_{{j^*},i} \geq \sum_{i=1}^{5K}Y_{{j^*},i} \geq 4(1-\gamma)K.\label{eq:ooo1}
\end{align}
Furthermore, after leaving the state $s_{j^*,0}$, the agent will visit all other states before arrive the state $s_{j^*,0}$ again. Thus, for any $k \in [S]$, the difference between $Z_{{j^*},0}$ and $Z_{k,0}$ is at most 1, so do $Z_{{j^*},1}$ and $Z_{k,1}$. Therefore, for any $k \in [S]$, we have
\begin{align}
    Z_{k,0}+Z_{k,1} \geq Z_{j^*,0}+Z_{j^*,1}-2 \geq 4(1-\gamma)K-2 >  3(1-\gamma)K \geq \sum_{i=1}^K Y_{k,i},\label{eq:ooo2}
\end{align}
where the second inequality holds due to \eqref{eq:ooo1}, the third inequality holds since $K > 2/(1-\gamma)$ and the last one holds due to \eqref{eq:ooo0}. Finally, by \eqref{ooo41} we have $Z_{k,0}+Z_{k,1} = \sum_{i=1}^{T_k}Y_{k,i}$. Combining it with \eqref{eq:ooo2}, we have $\sum_{i=1}^{T_k}Y_{k,i} > \sum_{i=1}^K Y_{k,i}$, which suggests that $T_k > k$. Thus, we complete the proof.

\end{proof}

\section{Proof of Lemmas in Appendix \ref{sec:aaa}}

\subsection{Proof of Lemma \ref{lemma:sum-of-state}}
\begin{proof}[Proof of Lemma \ref{lemma:sum-of-state}]

\noindent We have
\begin{align}
    \sum_{i=1}^t\frac{1}{N_{i-1}(s_i,a_i)\vee 1}&=\sum_{s\in \cS,a\in \cA}1+\sum_{s\in \cS,a\in \cA}\sum_{i=1}^{N_{t-1}(s,a)}\frac {1}{i}\leq SA+\sum_{s\in \cS,a\in \cA}\sum_{i=1}^{t}\frac {1}{i}\leq SA\CC{\log(3T)}.\label{eq:sa}
\end{align}
We also have
\begin{align}
    \sum_{i=1}^t\frac{1}{N_{i-1}(s_i)\vee 1}&=\sum_{s\in \cS}1+\sum_{i=1}^{N_t(s)}\frac {1}{i}\leq S+\sum_{s\in \cS}\sum_{i=1}^{t}\frac {1}{i}\leq S\CC{\log(3T)}.\notag
\end{align}
According to \eqref{eq:sa}, for a subset $\cC \subseteq   [T]$, we have
\begin{align}
  \sum_{i\in \cC}\frac{1}{\sqrt{N_{i-1}(s_i,a_i)\vee 1}}\leq \sqrt{|\cC|\sum_{i\in \cC}\frac{1}{{N_{i-1}(s_i,a_i)\vee 1}} }\leq \sqrt{SA\CC{\log(3T)}|\cC|},\notag
\end{align}
where the first inequality holds due to Cauchy-Schwarz inequality and the second inequality holds due to \eqref{eq:sa}. Thus, we complete the proof.
\end{proof}

\subsection{Proof of Lemma \ref{lemma:P-difference}}
\begin{proof}[Proof of Lemma \ref{lemma:P-difference}]
For each $s\in \cS, a \in \cA,$ we denote $t_0=0$ and 
\begin{align}
    t_i=\min \big\{t|t>t_{i-1}, (s_t,a_t)=(s,a) \big\}.
\end{align}
Here, $t_i$ is the time which state-action pair $(s,a)$ appear for the $i$th time and the random variable $t_i$ is a stopping time. Beside, the random variable $\vvalue^*(s_{t_i+1})(i=1,2.,,)$ are random variable with value in $\big[0,1/(1-\gamma)\big]$ and variance $\VV^*(s,a)$. By Lemma \ref{lemma:freedman} and a union bound, with probability at least $1-\delta$, for all $s\in \cS, a \in \cA, \tau\in [T]$, we have
\begin{align}
    \sum_{i=1}^{\tau}\vvalue^*(s_{t_i+1})-\sum_{i=1}^{\tau}\PP\vvalue^*(s,a)\leq \sqrt{2\tau \VV^*(s,a)\log(SAT/\delta)}+\frac{2\log(SAT/\delta)}{3(1-\gamma)}\notag.
\end{align}
Thus, for all $\tau\in [T]$, we have
\begin{align}
    \big[(\PP_{t_{\tau}+1}-\PP)\vvalue^*\big](s,a)&=\frac{1}{\tau}\sum_{i=1}^{\tau}\vvalue^*(s_{t_i+1})-\frac{1}{\tau}\sum_{i=1}^{\tau}\PP\vvalue^*(s,a)\notag\\
    &\leq \sqrt{\frac{2\VV^*(s,a)\CC{\log(SAT/\delta)}}{\tau}}+\frac{\CC{2\log(SAT/\delta)}}{3(1-\gamma)\tau}\notag\\
    &= \sqrt{\frac{2\VV^*(s,a)\CC{\log(SAT/\delta)}}{N_{t_\tau}(s,a)}}+\frac{\CC{2\log(SAT/\delta)}}{3(1-\gamma)N_{t_\tau}(s,a)}.\label{eq:b1}
\end{align}
In addition, for $\tau=0$, we have
\begin{align}
    \big[(\PP_{t_\tau+1}-\PP)\vvalue^*\big](s,a)&\leq \frac{1}{1-\gamma}\leq \frac{\CC{2\log(SAT/\delta)}}{3(1-\gamma)\big(N_{t_\tau}(s,a)\vee 1\big)},\label{eq:b2}
\end{align}
where the first inequality holds due to $0\leq \vvalue^*(s)\leq 1/(1-\gamma)$ and the second inequality holds due to $N_{t_\tau}(s,a)=0$. Since $\PP_t$ and $N_{t-1}(s,a)$ changed only when $t=t_{\tau}+1$, we complete the proof by combining \eqref{eq:b1} and \eqref{eq:b2}.
\end{proof}

\subsection{Proof of Lemma \ref{lemma:P-estimate-difference}}
\begin{proof}[Proof of Lemma \ref{lemma:P-estimate-difference}]
For each $s\in \cS, a \in \cA,$ we denote $t_0=0$ and denote
\begin{align}
    t_i=\min \big\{t|t>t_{i-1}, (s_t,a_t)=(s,a) \big\}.
\end{align}
Here, $t_i$ is the time which state-action pair $(s,a)$ appear for the $i$th time and the random variable $t_i$ is a stopping time. Beside, the random variable $\vvalue^*(s_{t_i+1})(i=1,2.,,)$ are random variable with value in $\big[0,1/(1-\gamma)\big]$ and variance $\VV^*(s,a)$. By Lemma \ref{lemma: maurer-3} and a union bound, with probability at least $1-\delta$, for all $s\in \cS, a \in \cA, \tau\in [T]$, we have
\begin{align}
    \sum_{i=1}^{\tau}\PP\vvalue_{t}^*(s,a)-\sum_{i=1}^{\tau}\vvalue^*(s_{t_i+1})\leq \sqrt{2\tau \VV_{t_\tau}^*(s,a)\log(SAT/\delta)}+\frac{7\log(SAT/\delta)}{3(1-\gamma)}\notag.
\end{align}
Thus, for all $\tau\in [T]$, we have
\begin{align}
    \big[(\PP-\PP_{t_{\tau}+1})\vvalue^*\big](s,a)&=\frac{1}{\tau}\Big|\sum_{i=1}^{\tau}\vvalue^*(s_{t_i+1})-\sum_{i=1}^{\tau}\PP\vvalue^*(s,a)\Big|\notag\\
    &\leq \sqrt{\frac{2\VV_{t_\tau}^*(s,a)\CC{\log(SAT/\delta)}}{\tau}}+\frac{\CC{7\log(SAT/\delta)}}{3(1-\gamma)\tau}\notag\\
    &= \sqrt{\frac{2\VV_{t_\tau}^*(s,a)\CC{\log(SAT/\delta)}}{N_{t_\tau}(s,a)}}+\frac{\CC{7\log(SAT/\delta)}}{3(1-\gamma)N_{t_\tau}(s,a)}.\label{eq:c1}
\end{align}
In addition, for $\tau=0$, we have
\begin{align}
    \big[(\PP-\PP_{t_\tau+1})\vvalue^*\big](s,a)&\leq \frac{1}{1-\gamma}\leq \frac{\CC{7\log(SAT/\delta)}}{3(1-\gamma)\big(N_{t_\tau}(s,a)\vee 1\big)},\label{eq:c2}
\end{align}
where the first inequality holds due to $0\leq \vvalue^*(s)\leq 1/(1-\gamma)$ and the second inequality holds due to $N_{t_\tau}(s,a)=0$. Since $\PP_t, \VV^*_{t-1}$ and $N_{t-1}(s,a)$ changed only when $t=t_{\tau}+1$, we complete the proof by combining \eqref{eq:c1} and \eqref{eq:c2}.
\end{proof}

\subsection{Proof of Lemma \ref{lemma:total-variance}}
\begin{proof}[Proof of Lemma \ref{lemma:total-variance}]

\noindent For simplicity, we denote $H=\lfloor 1/(1-\gamma)\rfloor+1,T'=\lfloor T/H\rfloor +1$ and filtration $\mathcal{F}_t$ contained all random variables before first $t+H$ steps. Then
for every $t\in[T]$, we have
\begin{align}
    \frac{1}{(1-\gamma)^2}&\ge \EE\bigg[\big(\sum_{i=0}^{\infty}\gamma^i \reward(s_{t+i},a_{t+i})\big)-\vvalue^{\pi}_t(s_t)|\mathcal{F}_{t-H}\bigg]^2\notag\\
    &=\EE\bigg[\sum_{i=0}^{\infty}\gamma^i\big(\reward(s_{t+i},a_{t+i})+\gamma \vvalue^{\pi}_{t+i+1}(s_{t+i+1})-\vvalue^{\pi}_{t+i}(s_{t+i})\big)|\mathcal{F}_{t-H}\bigg]^2\notag\\
    &=\EE \bigg[\sum_{i=0}^{\infty}\gamma^{2i}\bigg[\reward(s_{t+i},a_{t+i})+\gamma \vvalue^{\pi}_{t+i+1}(s_{t+i+1})-\vvalue^{\pi}_{t+i}(s_{t+i})\bigg]^2|\mathcal{F}_{t-H}\bigg]\notag\\
    &=\EE \bigg[ \sum_{i=0}^{\infty}\gamma^{2i+2} \VV^{\pi}_{t+i}(s_{t+i},a_{t+i})|\mathcal{F}_{t-H}\bigg]\notag\\
    &\ge \EE \bigg[ \underbrace{\sum_{i=0}^{H}\gamma^{2i+2} \VV^{\pi}_{t+i}(s_{t+i},a_{t+i})}_{X_t}|\mathcal{F}_{t-H}\bigg],\label{eq:ind-var}
\end{align}
where the first inequality holds due to $0\leq \reward(s,a)\leq 1,0\leq \vvalue^{\pi}_t(s)\leq 1/(1-\gamma)$ and the second inequality holds due to $\VV^{\pi}_{t+i}(s_{t+i},a_{t+i})\ge 0$. For the random variable $X_t$, we have
\begin{align}
   |X_t|&\leq \sum_{i=0}^{H}\frac{\gamma^{2i+2} }{(1-\gamma)^2}\leq \frac{1}{(1-\gamma)^3},\ 
   \text{Var}{\big[|X_t||\mathcal{F}_{t-H}\big]}\leq (\max |X_t|) \EE[X_t|\mathcal{F}_{t-H}]\leq \frac{1}{(1-\gamma)^5},\notag
\end{align}
Since $X_t$ is $\mathcal{F}_{t}$-measurable and $\EE[X_t| \mathcal{F}_{t-H}]\leq 1/(1-\gamma)^2$, for each $i\in[H]$, by Lemma \ref{lemma:freedman}, with probability at least $1-\delta$, we have 
\begin{align}
    \sum_{j=0}^{T'}X_{jH+i}&\leq \sum_{j=0}^{T'}\EE[X_{jH+i}|\mathcal{F}_{(j-1)H+i}]+\sqrt{\frac{2T'\log(1/\delta)}{(1-\gamma)^5}}+ \frac{2\log(1/\delta)}{3(1-\gamma)^3}\notag\\
    &\leq \frac{T'}{(1-\gamma)^2}+\sqrt{\frac{2T'\log(1/\delta)}{(1-\gamma)^5}}+ \frac{2\log(1/\delta)}{3(1-\gamma)^3}.\label{eq:azuma-bern}
\end{align}
Taking summation for \eqref{eq:azuma-bern} with all $i\in[H]$, with probability at least $1-H\delta$, we have
\begin{align}
    \sum_{t=1}^T X_t&= \sum_{i=1}^{H}\sum_{j=0}^{T'}X_{jH+i}\notag\\
    &\leq  \sum_{i=1}^{H}\bigg(\frac{T'}{(1-\gamma)^2}+\sqrt{\frac{2T'\log(1/\delta)}{(1-\gamma)^5}}+ \frac{2\log(1/\delta)}{3(1-\gamma)^3}\bigg)\notag\\
    &\leq\frac{T}{(1-\gamma)^2}+\sqrt{\frac{4T\log(1/\delta)}{(1-\gamma)^6}}+ \frac{4\log(1/\delta)}{3(1-\gamma)^4}\notag\\
    &\leq \frac{2T}{(1-\gamma)^2}+\frac{7\log(1/\delta)}{3(1-\gamma)^4},\label{eq:11}
\end{align}
where the first inequality holds due to \eqref{eq:azuma-bern}, the second inequality holds due to $T'=\lfloor T/H\rfloor +1$ and the third inequality holds due to $x^2+y^2\ge 2xy$.
By the definition of $X_t$, we have
\begin{align}
    \sum_{t=1}^TX_t&=\sum_{t=1}^T\sum_{i=0}^{H}\gamma^{2i+2} \VV^{\pi}_{t+i}(s_{t+i},a_{t+i})\notag\\
    &\ge \sum_{t=1}^T  \VV^{\pi}_{t}(s_{t},a_{t})\sum_{i=0}^{\min{\{H,t-1\}}}\gamma^{2i+2}\notag\\
    &=\sum_{i=0}^{H}\gamma^{2i+2}\sum_{t=1}^T  \VV^{\pi}_{t}(s_{t},a_{t})-\sum_{t=1}^H\VV^{\pi}_{t}(s_{t},a_{t}) \sum_{i=t}^{H}\gamma^{2i+2}\notag\\
    & \geq \frac{\gamma^2 - \gamma^{2H+4}}{1-\gamma^2}\sum_{t=1}^T  \VV^{\pi}_{t}(s_{t},a_{t})-\frac{1}{(1-\gamma)^2}\sum_{t=1}^H \sum_{i=t}^{H}\gamma^{2i+2}, \label{eq:approx}
\end{align}
where the first inequality holds due to $\VV^{\pi}_{t}(s_{t},a_{t})\ge 0$ and  the second inequality holds due to $\VV^{\pi}_{t}(s_{t},a_{t})\leq 1/(1-\gamma)^2$. To further bound \eqref{eq:approx}, we have
\begin{align}
    \frac{\gamma^2 - \gamma^{2H+4}}{1-\gamma^2} = \frac{\gamma^2}{1-\gamma^2}(1-\gamma^{2H+2}) \geq \frac{\gamma^2}{1-\gamma^2}(1-\gamma^{2/(1-\gamma)}) \geq \frac{4\cdot\gamma^2}{5(1-\gamma^2)} \geq \frac{2\gamma^2}{5(1-\gamma)},\label{eq:approx_00}
\end{align}
where the first inequality holds since $2H+2 = 2\lfloor 1/(1-\gamma)\rfloor+2 \geq 2/(1-\gamma)$, the second inequality holds since $0 \leq \gamma^{1/(1-\gamma)} \leq 0.4$ when $0 \leq \gamma  \leq 1$, the last one holds since $1+\gamma \leq 2$. We also have
\begin{align}
    \sum_{t=1}^H \sum_{i=t}^{H}\gamma^{2i+2} \leq \sum_{t=1}^H\frac{\gamma^{2t+2}}{1-\gamma^2} \leq \frac{\gamma^4}{(1-\gamma^2)^2} \leq \frac{\gamma^4}{(1-\gamma)^2}.\label{eq:approx_01}
\end{align}
Substituting \eqref{eq:approx_00} and \eqref{eq:approx_01} into \eqref{eq:approx}, we have
\begin{align}
    \sum_{t=1}^TX_t \geq \frac{2\gamma^2}{5(1-\gamma)}\sum_{t=1}^T  \VV^{\pi}_{t}(s_{t},a_{t}) - \frac{\gamma^4}{(1-\gamma)^4}.\label{eq:approx_xx}
\end{align}
Finally, substituting \eqref{eq:approx_xx} into \eqref{eq:11}, we have
\begin{align}
    \gamma^2\sum_{t=1}^T  \VV^{\pi}_{t}(s_{t},a_{t}) \leq \frac{5T}{1-\gamma} + \frac{35\log(1/\delta)}{6(1-\gamma)^3} + \frac{5\gamma^4}{2(1-\gamma)^3} \leq \frac{5T}{1-\gamma} + \frac{25\log(1/\delta)}{3(1-\gamma)^3}.\notag
\end{align}
Thus, we complete the proof.
\end{proof}

\subsection{Proof of Lemma \ref{lemma:opt-variance-gap}}
\begin{proof}[Proof of Lemma \ref{lemma:opt-variance-gap}]

\noindent On the event $\cE_7$, we have
\begin{align}
   \sum_{i=1}^T(\VV^*(s_i,a_i)-\VV^{\pi}_i(s_i,a_i))&\leq \sum_{i=1}^t \bigg[\PP \big((\vvalue^*)^2-(\vvalue^{\pi}_{i+1})^2\big)\bigg](s_i,a_i)\notag\notag\\
   & =\sum_{i=1}^T\big[\PP (\vvalue^*-\vvalue^{\pi}_{i+1})(\vvalue^*+\vvalue^{\pi}_{i+1})\big](s,a)\notag\\
   &\leq \frac{2}{1-\gamma} \sum_{i=1}^T\bigg[\PP (\vvalue^*-\vvalue^{\pi}_{i+1})\bigg](s_i,a_i)\notag\\
   &\leq \frac{2}{1-\gamma}\sum_{i=1}^T (\vvalue^*(s_{i+1})-\vvalue^{\pi}_{i+1}(s_{i+1}))+\frac{\sqrt{2T\log(1/\delta)}}{(1-\gamma)^2}\notag\\
   &\leq \frac{2}{1-\gamma}\text{Regret}'(T)+\frac{\sqrt{2T\log(1/\delta)}}{1-\gamma}+\frac{2}{(1-\gamma)^2},\notag
\end{align}
where the first inequality holds because of Lemma \ref{lemma:UCB}, the second inequality holds due to $0\leq \vvalue^*(s),\vvalue^{\pi}_{i+1}(s)\leq \frac 1 {1-\gamma}$, the third inequality holds due to the definition of $\cE_7$ and the last inequality holds due to $0\leq \vvalue^*(s)\leq \vvalue_{i}(s)\leq 1/{1-\gamma}$. Thus, we complete the proof.
\end{proof}

\subsection{Proof of Lemma \ref{lemma:estimate-variance-gap}}
\begin{proof}[Proof of Lemma \ref{lemma:estimate-variance-gap}]
\begin{align}
     \sum_{i=1}^T(\VV_{i-1}(s_i,a_i)-\VV^{\pi}_i(s_i,a_i))
     &=\sum_{i=1}^T \EE_{s'\sim \PP_{i-1}(\cdot|s_i,a_i)}[\vvalue_{i-1}^2(s')]-\EE_{s'\sim \PP_{i-1}(\cdot|s_i,a_i)}[\vvalue_{i-1}(s')]^2\notag\\
     &\qquad -\sum_{i=1}^T \EE_{s'\sim \PP(\cdot|s_i,a_i)}[\vvalue^{\pi}_{i+1}(s')^2]-\EE_{s'\sim \PP(\cdot|s_i,a_i)}[\vvalue^{\pi}_{i+1}(s')]^2 \notag\\
     &\leq \underbrace{\sum_{i=1}^T \EE_{s'\sim \PP_{i-1}(\cdot|s_i,a_i)}[\vvalue_{i-1}^2(s')]- \EE_{s'\sim \PP(\cdot|s_i,a_i)}[\vvalue_{i-1}^2(s')]}_{I_1} \notag\\
     &\qquad+  \underbrace{\sum_{i=1}^T \EE_{s'\sim \PP(\cdot|s_i,a_i)}[\vvalue_{i-1}^2(s')]-\EE_{s'\sim \PP(\cdot|s_i,a_i)}[\vvalue^{\pi}_{i+1}(s')^2] }_{I_2}\notag\\
     &\qquad + \underbrace{\sum_{i=1}^T \EE_{s'\sim \PP(\cdot|s_i,a_i)}[\vvalue^*(s')]^2-\EE_{s'\sim \PP_{i-1}(\cdot|s_i,a_i)}[\vvalue^*(s')]^2 }_{I_3},\notag
\end{align}
where the inequality holds due to $\vvalue_{i-1}(s')\ge\vvalue^*(s')\ge \vvalue^{\pi}_{i+1}(s')$.

By the definition of event $\cE_8$, we have
\begin{align}
    \big\|\PP_{i-1}(\cdot|s,a)-\PP(\cdot|s,a)\big\|_1\leq \frac{\sqrt{2S\CC{\log(T/\delta)}}}{\sqrt{N_{i-1}(s,a)\vee 1}}.\label{eq:51}
\end{align}
Thus, for the term $I_1$, since $0\leq \vvalue_{i-1}^2(s')\leq 1/(1-\gamma)^2 $, we have
\begin{align}
    I_1&\leq \sum_{i=1}^T \frac{\sqrt{2S\CC{\log(T/\delta)}}}{(1-\gamma)^2\sqrt{N_{i-1}(s_i,a_i)\vee 1}}\leq \frac{S\sqrt{2AT\CC{\log(3T)}\CC{\log(T/\delta)}}}{(1-\gamma)^2},\label{eq:21}
\end{align}
where the first inequality holds due to \eqref{eq:51} and the second inequality holds due to Lemma \ref{lemma:sum-of-state}. For the term $I_2$, on the event $\cE_6$, we have
\begin{align}
    I_2&\leq \sum_{i=1}^T \bigg[\PP \big((\vvalue_{i-1})^2-(\vvalue^{\pi}_{i+1})^2\big)\bigg](s_i,a_i)\notag\notag\\
   &=\sum_{i=1}^T\big[\PP (\vvalue_{i-1}-\vvalue^{\pi}_{i+1})(\vvalue_{i-1}+\vvalue^{\pi}_{i+1})\big](s,a)\notag\\
   &\leq \frac{2}{1-\gamma} \sum_{i=1}^T\bigg[\PP (\vvalue_{i-1}-\vvalue^{\pi}_{i+1})\bigg](s_i,a_i)\notag\\
   &\leq \frac{2}{1-\gamma}\sum_{i=1}^T (\vvalue_{i-1}(s_{i+1})-\vvalue^{\pi}_{i+1}(s_{i+1}))+\frac{\sqrt{2T\CC{\log(2/\delta)}}}{1-\gamma}\notag\\
    &\leq \frac{4S}{1-\gamma}+\frac{2}{1-\gamma}\sum_{i=1}^T (\vvalue_{i+1}(s_{i+1})-\vvalue^{\pi}_{i+1}(s_{i+1}))+\frac{\sqrt{2\CC{\log(T/\delta)}}}{(1-\gamma)^2}\notag\\
   &\leq \frac{2}{1-\gamma}\text{Regret}'(T)+\frac{\sqrt{2T\CC{\log(1/\delta)}}}{(1-\gamma)^2}+\frac{4S+2}{(1-\gamma)^2},\label{eq:22}
\end{align}
where the first inequality holds due to $\vvalue_{i-1}(s')\ge \vvalue^*(s')\ge \vvalue^{\pi}_{i+1}(s')$, the second inequality holds due to $0\leq \vvalue_{i-1}(s'),\vvalue^{\pi}_{i+1}(s') \leq 1/(1-\gamma)$, the third inequality holds due to the definition of  event $\cE_6$ and the forth inequality holds due to
$ \vvalue_{i-1}(s')\ge \vvalue_{i+1}(s')$.

For the term $I_3$, since $0\leq \vvalue^*(s')^2\leq 1/(1-\gamma)^2,$ on the event $\cE_8$, we have
\begin{align}
   I_3 &\leq \sum_{i=1}^T \frac{\sqrt{2S\log(T/\delta)}}{(1-\gamma)^2\sqrt{N_{i-1}(s_i,a_i)}}\leq \frac{S\sqrt{2AT\log(T/\delta)\log(3T)}}{(1-\gamma)^2},\label{eq:23}
\end{align}
where the first inequality holds due to \eqref{eq:51} and the second inequality holds due to Lemma \ref{lemma:sum-of-state}. Taking an union bound for \eqref{eq:21}, \eqref{eq:22} and \eqref{eq:23}, with probability at least $1-3\delta$, we have
\begin{align}
    \sum_{i=1}^t(\VV_{i-1}(s_i,a_i)-\VV^{\pi}_i(s_i,a_i))\leq \frac{2\text{Regret}'(T)}{1-\gamma}+\frac{9S\sqrt{2AT\log(T/\delta)\log(3T)}}{(1-\gamma)^2}.\notag
\end{align}

\end{proof}


\subsection{Proof of Lemma \ref{lemma:regret-state}}
\begin{proof}[Proof of Lemma \ref{lemma:regret-state}]

\noindent For each $i\in [H]$,$s\in \cS$ and $t\in [T]$, if $N_t(s)=0$, the we have 
\begin{align}
    \text{Regret}'(t,s,h)&=0\leq \frac{16SAU^2\sqrt{N_t(s)}}{(1-\gamma)^{2.5}}+\frac{20S^2A^{1.5}U^{4.5}}{(1-\gamma)^{3.5}}. \notag
\end{align}
Otherwise, we have
\begin{align}
\text{Regret}'(t,s,h)&= \sum_{1\leq i\leq t,s_i=s} \gamma^h\big[\vvalue_{i+h}(s_{i+h}) - \vvalue^{\pi}_{i+h}(s_{i+h})\big]\notag\\
    &= \sum_{1\leq i\leq t,s_i=s} \gamma^h \big[\qvalue_{t}(s_{i+h},a_{i+h}) - \vvalue^{\pi}_{i+h}(s_{i+h})\big]\notag\\
    &\leq \sum_{1\leq i\leq t,s_i=s} \gamma^{h+1}[\PP_{i+h-1}\vvalue_{i+h-1}](s_{i+h},a_{i+h})+\gamma^h \text{UCB}_{i+h-1}(s_{i+h},a_{i+h})\notag \\
    &\qquad -\gamma^{h+1} \PP\vvalue^{\pi}_{i+h+1}(s_{i+h},a_{i+h})\notag\\
    &=I_1 + I_2 + I_3 + \gamma^hI_4 + \text{Regret}'(t,s,h+1),  
    \label{eq:tele}
\end{align}
where the first inequality holds due to definition update rule  \eqref{eq:update}. $I_1,\dots, I_4$ are defined as follows.
\begin{align}
    I_1 &= \sum_{1\leq i\leq t,s_i=s}\gamma^{h+1}(\vvalue_{i+h-1}(s_{i+h+1})-\vvalue_{i+h+1}(s_{i+h+1})),\notag \\
    I_2 &= \sum_{1\leq i\leq t,s_i=s}\gamma^{h+1}[(\PP_{i+h-1}-\PP)\vvalue_{i+h-1}](s_{i+h},a_{i+h})\notag, \\
    I_3& = \sum_{1\leq i\leq t,s_i=s}\gamma^{h+1}\big[\PP(\vvalue_{i+h-1}-\vvalue^{\pi}_{i+h+1})\big](s_{i+h},a_{i+h}),\notag \\
    &\qquad -\gamma^{h+1}\big[\vvalue_{i+h-1}(s_{i+h+1})-\vvalue^{\pi}_{i+h+1}(s_{i+h+1})\big],\notag \\
    I_4& = \sum_{1\leq i\leq t,s_i=s} \text{UCB}_{i+h-1}(s_{i+h},a_{i+h}).\notag
\end{align}
For the term $I_1$, we have
\begin{align}
    \sum_{1\leq i\leq t,s_i=s}\gamma^{h+1}(\vvalue_{i+h-1}(s_{i+h+1})-\vvalue_{i+h+1}(s_{i+h+1}))&\leq \sum_{i=1}^t\sum_{s'\in \cS}\vvalue_{i+h-1}(s')-\vvalue_{i+h+1}(s')\notag\\
    &\leq \frac{2S}{1-\gamma},\label{eq:I_1-state}
\end{align}
where the first inequality holds due to $\vvalue_{i+h-1}(s')\ge \vvalue_{i+h+1}(s') $ and the second inequality holds due to $0\leq \vvalue_t(s)\leq 1/(1-\gamma)$.

For the term $I_2$, with probability at least $1-\delta$, we have
\begin{align}
    &\sum_{1\leq i\leq t,s_i=s}\gamma^{h+1}[(\PP_{i+h-1}-\PP)\vvalue_{i+h-1}](s_{i+h},a_{i+h})\notag\notag \\
    &\leq \sum_{1\leq i\leq t,s_i=s}\frac{\gamma^{h+1}\sqrt{2SU}}{(1-\gamma)\sqrt{N_{i+h-1}(s_{i+h},a_{i+h})\vee 1}}\notag\\
    &\leq \frac{\gamma^{h+1}\sqrt{2SU}}{(1-\gamma)}\sqrt{N_t(s)\sum_{1\leq i\leq t,s_i=s} \frac{1}{N_{i+h-1}(s_{i+h},a_{i+h})\vee 1}}\notag\\
    &\leq \frac{\sqrt{2SU}}{1-\gamma} \sqrt{N_t(s)SAU}\notag\\
    &=\frac{SU\sqrt{2N_t(s)A}}{1-\gamma},\label{eq:I_2-state}
\end{align}
where the first inequality holds due to Lemma \ref{lemma:azuma} and the definition of $U$, the second inequality holds due to Cauchy-Schwarz inequality and the third inequality holds due to Lemma \ref{lemma:sum-of-state}.

For the term $I_3$, Since the random process $s_{i+h+1}\sim \PP(\cdot| s_{i+h},a_{i+h})$ is dependent with whether $s_{i+1},..,s_{i+h+1}=s$, we cannot directly use Lemma \ref{lemma:azuma} to bound this term. However, we can use the same technique in the proof of Lemme \ref{lemma:total-variance}, which divide the time horizon into $H$ sub-horizon and use Lemma \ref{lemma:azuma} for each sub-horizon. Compared with the upper bound of $I_3$ in proof of Theorem \ref{thm:2}, this technique will lead to a gap of $\sqrt{H}$ and we have
\begin{align}
    &\sum_{ i\leq t,s_i=s}\gamma^{h+1}\big[\PP(\vvalue_{i+h-1}-\vvalue^{\pi}_{i+h+1})\big](s_{i+h},a_{i+h})-\gamma^{h+1}\big[\vvalue_{i+h-1}(s_{i+h+1})-\vvalue^{\pi}_{i+h+1}(s_{i+h+1})\big]\notag\\
    &\qquad\leq \frac{\sqrt{2N_t(s)U}}{(1-\gamma)} \sqrt{H}\notag\\
    &\qquad\leq \frac{2U\sqrt{N_t(s)}}{(1-\gamma)^{1.5}},\label{eq:I_3-state}
\end{align}
where the second inequality holds due to the definition of $U$.
For the term $I_4$, we have
\begin{align}
    &\sum_{1\leq i\leq t,s_i=s} \text{UCB}_{i+h-1}(s_{i+h},a_{i+h})\notag\\
    &\qquad\leq  \underbrace{\sum_{1\leq i\leq t,s_i=s} 
    \sqrt{\frac{8U\VV_{i+h-1}(s_{i+h},a_{i+h})}{N_{i+h-1}(s_{i+h},a_{i+h})\vee 1}}}_{I_{41}}+\underbrace{\sum_{1\leq i\leq t,s_i=s}\frac{8U}{(1-\gamma)\big(N_{i+h-1}(s_{i+h},a_{i+h})\vee 1\big)}}_{I_{42}}\notag\\
        &\qquad +\underbrace{\sum_{1\leq i\leq t,s_i=s}\sqrt{\frac{8\sum_{s'}\PP_{i+h}(s'|s_{i+h},a_{i+h})\min\big\{100B_{i+h}(s'),{1}{/(1-\gamma)^2}\big\}}{N_{i+h-1}(s_{i+h},a_{i+h})\vee 1}}}_{I_{43}}.\label{eq:I_4-state}
\end{align}
For the term $I_{41}$, with probability at least $1-\delta$, we have
\begin{align}
    &\sum_{1\leq i\leq t,s_i=s} 
    \sqrt{\frac{8U\VV_{i+h-1}(s_{i+h},a_{i+h})}{N_{i+h-1}(s_{i+h},a_{i+h})\vee 1}}\notag\\
    &\qquad\leq \sqrt{8U} \sqrt{\sum_{1\leq i\leq t,s_i=s} \VV_{i+h-1}(s_{i+h},a_{i+h})}\sqrt{\sum_{1\leq i\leq t,s_i=s} \frac{1}{N_{i+h-1}(s_{i+h},a_{i+h})\vee 1} }\notag\\
    &\qquad\leq U\sqrt{8SA} \sqrt{\sum_{1\leq i\leq t,s_i=s} \VV_{i+h-1}(s_{i+h},a_{i+h})}\notag\\
    &\qquad\leq U\sqrt{8SA}\sqrt{\frac{2N_t(s)}{(1-\gamma)^2}},
\end{align}
where the first inequality holds due to Cauchy-Schwarz inequality, the second inequality holds due to Lemma \ref{lemma:sum-of-state}, the last inequality holds due to $0\leq \VV_{i+h-1}(s_{i+h},a_{i+h})\leq {1}/{(1-\gamma)^2}$.

For the term $I_{42}$, by Lemma \ref{lemma:sum-of-state}, we have
\begin{align}
   \sum_{1\leq i\leq t,s_i=s}\frac{8U}{(1-\gamma)\big(N_{i+h-1}(s_{i+h},a_{i+h})\vee 1\big)}\leq \frac{8 SAU^2}{1-\gamma}.\label{eq:13}
\end{align}
For the term $I_{43}$, with probability at least $1-2\delta$, we have
\begin{align}
    &\sum_{1\leq i\leq t,s_i=s}\sqrt{\frac{8\sum_{s'}\PP_{i+h}(s'|s_{i+h},a_{i+h})\min\big\{100B_{i+h}(s'),{1}{/(1-\gamma)^2}\big\}}{N_{i+h-1}(s_{i+h},a_{i+h})\vee 1}}\notag\\
    &\leq \sqrt{8\sum_{1\leq i\leq t,s_i=s}\frac{1}{N_{i+h-1}(s_{i+h},a_{i+h})\vee 1}}\notag \\
    &\qquad \cdot \sqrt{\sum_{1\leq i\leq t,s_i=s} \sum_{s'}\PP_{i+h}(s'|s_{i+h},a_{i+h})\min\bigg\{100B_{i+h}(s'),\frac{1}{(1-\gamma)^2}\bigg\}}\notag\\
    &\leq \sqrt{8SAU}\sqrt{\sum_{1\leq i\leq t,s_i=s} \sum_{s'}\PP_{i+h}(s'|s_{i+h},a_{i+h})\min\bigg\{100B_{i+h}(s'),\frac{1}{(1-\gamma)^2}\bigg\}}\notag\\
    &\leq  \sqrt{8SAU}\bigg[\sum_{1\leq i\leq t,s_i=s}\bigg(\frac{\sqrt{SU}}{(1-\gamma)^2\sqrt{N_{i+h-1}(s_{i+h},a_{i+h})\vee 1}}\notag \\
    &\qquad +\sum_{s'}\PP(s'|s,a)\min\bigg\{100B_{i+h}(s'),\frac{1}{(1-\gamma)^2}\bigg\}\bigg)\bigg]^{1/2}\notag\\
    &\leq \sqrt{8SAU}\bigg[\frac{SU\sqrt{AN_t(s)}}{(1-\gamma)^2}+\frac{\sqrt{2N_t(s)U}}{(1-\gamma)^2}\notag \\
    &\qquad +\sum_{1\leq i\leq t,s_i=s} \min\bigg\{\frac{100S^2A^2U^5}{(1-\gamma)^5\big(N_{i+h-1}(s_{i+h+1})\vee 1 \big)},\frac{1}{(1-\gamma)^2}\bigg\}\bigg]^{1/2}\notag\\
    &\leq \sqrt{8SAU} \sqrt{\frac{SU\sqrt{AN_t(s)}}{(1-\gamma)^2}+\frac{\sqrt{2N_t(s)U}}{(1-\gamma)^2}+\frac{100S^3A^2U^6}{(1-\gamma)^5}},
\end{align}
where the first inequality holds due to Cauchy-Schwarz inequality, the second inequality holds due to Lemma \ref{lemma:sum-of-state}, the third inequality holds due to Lemma \ref{lemma:azuma}, the forth inequality holds due to Lemma \ref{lemma:azuma} and the last inequality holds due to Lemma \ref{lemma:sum-of-state}.
Substituting \eqref{eq:I_1-state}, \eqref{eq:I_2-state}, \eqref{eq:I_3-state}, \eqref{eq:I_4-state} into \eqref{eq:tele}, with probability at least $1-4H\delta$, we have
\begin{align}
    \text{Regret}'(t,s,h)&\leq \text{Regret}'(t,s,h+1)+ \frac{16 SAU\sqrt{N_t(s)}}{(1-\gamma)^{1.5}}+\frac{20S^2A^{1.5}U^{3.5}}{(1-\gamma)^{2.5}}.\label{eq:tele1}
\end{align}
Notice that
\begin{align}
    \text{Regret}'(t,s,H)&= \sum_{1\leq i\leq t,s_i=s} \gamma^H\big[\vvalue_{i+H}(s_{i+H}) - \vvalue^{\pi}_{i+H}(s_{i+H})\big]\notag\\
    &\leq \sum_{1\leq i\leq t,s_i=s} \frac{\gamma^H}{1-\gamma}\notag\\
    &\leq \sum_{1\leq i\leq t,s_i=s} \frac{1}{T}\notag\\
    &\leq 1,\notag
\end{align}
where the first inequality holds due to $\vvalue_{i+H}(s_{i+H}) - \vvalue^{\pi}_{i+H}(s_{i+H})\leq 1/{(1-\gamma)}$ and the second inequality holds due to definition of $H$. Thus, taking summation of \eqref{eq:tele1} with all $h\in [H]$, with probability at least $1-H^2\delta$, we have 
\begin{align}
    \text{Regret}'(t,s,0)\leq \frac{16SAU^2\sqrt{N_t(s)}}{(1-\gamma)^{2.5}}+\frac{20S^2A^{1.5}U^{4.5}}{(1-\gamma)^{3.5}}.\label{eq:14}
\end{align}
In addition, if $N_t(s)>0$, we have
\begin{align}
    \vvalue_t(s)-\vvalue^*(s)&\leq \frac{1}{N_t(s)}\sum_{1\leq i\leq t,s_i=s}  \vvalue_i(s)-\vvalue^*(s)\notag\\
    &\leq \frac{1}{N_t(s)}\sum_{1\leq i\leq t,s_i=s}  [\vvalue_i(s)-\vvalue^{\pi}_i(s)]\notag\\
    &\leq \frac{16SAU^2}{(1-\gamma)^{2.5}\sqrt{N_t(s)}}+\frac{20S^2A^{1.5}U^{4.5}}{(1-\gamma)^{3.5}N_t(s)},\notag
\end{align}
where the first inequality holds due to $V_i(s)$ is decreasing, the second inequality holds due to $\vvalue^*(s)\ge \vvalue^{\pi}_i(s)$ and the third inequality holds due to \eqref{eq:14}.
Notice that when $N_t(s)\ge {S^2AU^3}/{(1-\gamma)^2}$, we have
\begin{align}
     \vvalue_t(s)-\vvalue^*(s)&\leq \frac{16SAU^2}{(1-\gamma)^{2.5}\sqrt{N_t(s)}}+\frac{20S^2A^{1.5}U^{4.5}}{(1-\gamma)^{3.5}N_t(s)} \leq \frac{36SAU^2}{(1-\gamma)^{2.5}\sqrt{N_t(s)}}.\notag
\end{align}
Otherwise, we have
\begin{align}
    \vvalue_t(s)-\vvalue^*(s)&\leq \frac{1}{1-\gamma}\leq \frac{36SAU^2}{(1-\gamma)^{2.5}\sqrt{N_t(s)}}.\notag
\end{align}
Thus, we complete the proof of Lemma \ref{lemma:regret-state}.
\end{proof}

\end{document}